\crefname{hypothesis}{Hypothesis}{Hypotheses}
\title{Asymptotically Efficient Adaptive Identification under Saturated Output Observation \thanks{This version: May 31, 2024.
\funding{This work was supported by the National Natural Science Foundation of China under Grant No. 12288201.}}}
\author{Lantian Zhang\thanks{
		(\email{zhanglantian@amss.ac.cn}, \email{Lguo@amss.ac.cn}).}
	\and Lei Guo\footnotemark[2]}
\author{Lantian Zhang\thanks{Department of Mathematics, KTH Royal Institute of Technology, Stockholm, Sweden.
  (\email{lantian@kth.se}).}
\and Lei Guo 
\thanks{Key Laboratory of Systems and Control, Academy of Mathematics and Systems Science, Chinese Academy of Sciences, Beijing, P.R. China.
  (\email{Lguo@amss.ac.cn}).}
}
\DeclareMathOperator{\diag}{diag}
\newcolumntype{R}{>{$}r<{$}} %
\newcolumntype{V}[1]{>{[\;}*{#1}{R@{\;\;}}R<{\;]}} %
\newtheorem{assumption}{Assumption}[section]
\begin{document}

\maketitle

\begin{abstract}
As saturated output observations are ubiquitous in practice, identifying stochastic systems with such nonlinear observations is a fundamental problem across various fields. This paper investigates the asymptotically efficient identification problem for stochastic dynamical systems with saturated output observations. In contrast to most of the existing results, our results do not need the commonly used but stringent conditions such as periodic or independent assumptions on the system signals, and thus do not exclude applications to stochastic feedback systems. To be specific, we introduce a new adaptive Newton-type algorithm on the negative log-likelihood of the partially observed samples using a two-step design technique. Under some general excitation data conditions, we show that the parameter estimate is strongly consistent and asymptotically normal by employing the stochastic Lyapunov function method and limit theories for martingales. Furthermore, we show that the mean square error of the estimates can achieve the Cram\'er-Rao bound asymptotically without
resorting to i.i.d data assumptions. This indicates that the performance of the proposed algorithm is the best possible that one can expect in general.  A numerical example is provided to illustrate the superiority of our new adaptive algorithm over the existing related ones in the literature. 	
\end{abstract}

\begin{keywords}
system identification, stochastic systems, saturated output observations, adaptive algorithm, Cram\'er-Rao bound.
\end{keywords}

\begin{MSCcodes}
93E12, 68T05, 93E03, 93C10, 68W40    
\end{MSCcodes}
{\small 
\section{Introduction}
\subsection{Background}
Stochastic systems with saturated output observations are an important class of nonlinear dynamical systems, as they effectively model the nonlinear phenomena caused by observation saturation—a widespread phenomenon in fields such as engineering \cite{sj2004, hf2009, liu2001}, biology \cite{li2003, Scherm2004}, social sciences \cite{mj2020, cm2013, ZZtsqn}. Besides, this kind of observation function can encompass several important cases, for instance, the binary-valued observation function applied in classification tasks and quantized communication, as well as the ReLU-type observation function utilized in machine learning.

Due to their importance, extensive research has been devoted to the stabilization and optimization of systems with output sensor saturation nonlinearity (see, for example, \cite{Kreisselmeier, cao2003, Xie2023}). However, in many real-world scenarios, the models of dynamical systems are often unknown, thereby requiring model learning from observations. This paper considers the identification problem with saturated output observations, which has attracted much attention and has been applied in a wide range of application domains, including air-fuel ratio control in ICEs  \cite{gagliardi2021}, judicial sentencing  \cite{ZZtsqn, judical},  binary classification tasks \cite{mei}, international trade  \cite{cm2013}, restoration of images  \cite{liu2001}, ReLU neuron's learning  \cite{yehudai}, among others. 

Over the past several decades, in the literature related to the identification problem of stochastic systems with saturated output observations, numerous methods have been proposed to estimate the unknown parameters. These include the empirical measure method (\cite{ZG2003}), the Leave-out Sign-dominant Correlation Regions algorithm (\cite{weyer2009finite}), the EM algorithm (\cite{godoy2011identification}), stochastic approximation-type algorithms (\cite{jafari:2012},\cite{guo:2013}), stochastic Newton-type algorithms (\cite{bercu}, \cite{ZZtsqn}), and so on. All these works and methods have made extensive progress in the convergence analysis of algorithms. However, knowledge about the corresponding Cram\'er-Rao (C-R) bound and the efficiency of parameter estimates remains limited. In the estimation theory, when comparing two unbiased estimators, the one with a smaller variance is considered to be better. It is widely recognized that there exists an irreducible lower bound for unbiased parameter estimators, known as the C-R bound. If the C-R bound is achieved, the corresponding estimator is referred to as ``asymptotically efficient" (see, e.g., \cite{shao2003}). 

For related works on the C-R bounds under nonlinear observations, we note that the C-R bounds for stochastic  Wiener systems are derived in \cite{ne} and \cite{bo}, where the observation sensors are general nonlinear functions. Additionally, the C-R bound for quantized measurement systems is studied in \cite{gusta}, providing a comprehensive analysis of the impact of dithering noise on the C-R bound. However, the construction of parameter estimation algorithms and theoretical analysis of efficiency with saturated output observations have encountered technical difficulties and still lack satisfactory theoretical results. In fact, almost all the existing studies require independence or periodicity conditions on regressors. For example,
the asymptotical efficiency properties of the empirical measure method are explored in \cite{wang2007, guo2015, zhao2023}, where the input signals are assumed to be deterministic and periodic. Besides, a stochastic approximation-type adaptive estimator with adaptive binary observations is considered in \cite{you2015}, where the C-R bound is asymptotically achieved under i.i.d. input signals. Moreover, the maximum likelihood (ML) estimates are considered in \cite{ly1992}, and the asymptotic efficiency of the algorithm is established under deterministic signals satisfying a strong persistent excitation(PE) condition. Furthermore, adaptive second-order algorithms were considered in \cite{wangying2023},  and the asymptotic efficiency is proven under deterministic signals and a strong noise condition. Though the independence or periodic conditions on data are convenient for theoretical investigation, they are hard to satisfy or verify in many important situations, e.g. feedback signals in adaptive control \cite{1412020, guo1995}, sequential decision-making contexts \cite{doy}. The asymptotic efficiency of estimation algorithms under saturated output observations and stochastic non-i.i.d. data conditions still poses a challenging open problem.

Fortunately, there are many analytical methods for identifying linear or nonlinear stochastic systems under non-i.i.d. data conditions that can be drawn on to solve this problem. Indeed, motivated by the need to establish a rigorous theory for the well-known LS-based self-tuning regulators proposed by \cite{astrom1973} in stochastic adaptive control, the analysis of adaptive algorithms with stochastic feedback signals had witnessed a great deal of progress in the literature.  Among the theoretical analyses of adaptive algorithms, we mention that the ordinary differential equation (ODE) method was established in \cite{ljung} for a broad class of recursive algorithms. Here, the conditions for regressors are replaced by stability conditions for the corresponding ODE. Moreover, the stochastic Lyapunov function method was considered in  \cite{moore:1978, lai:1982} for the classical least squares (LS) algorithm of linear stochastic regression models. The strong consistency of LS was successfully established under the weakest possible non-PE condition in  \cite{lai:1982}. Recently, in our previous works \cite{ZZ2022} and \cite{ZZtsqn}, we established the strong consistency of adaptive algorithms for stochastic systems with saturated output observations under general non-PE conditions by following the stochastic Lyapunov function method. Although the data conditions required for the strong consistency of our proposed algorithm have been greatly weakened, how to design an adaptive algorithm with guaranteed optimality still remains an open problem. This paper will solve this problem by developing an adaptive algorithm that achieves the C-R bound under saturated output observations and stochastic non-i.i.d. data conditions.

\subsection{Contributions}

To achieve the C-R bound without relying on i.i.d conditions for regressors, we introduce an asymptotically efficient adaptive algorithm inspired by the design idea of the recursive least squares algorithm and employ the stochastic Lyapunov function method to analyze its properties.  The main contributions of this paper can be summarized as follows: Firstly, we propose a new two-step online recursive algorithm, where the first step focuses on achieving convergence and the second step is dedicated to improving the algorithm performance. Secondly, we demonstrate that the algorithm is strongly consistent under a quite general excitation condition on the data. Our excitation data condition coincides with the weakest possible excitation condition known for the classical least square algorithm in stochastic linear regression models. Thirdly, we show that the estimation errors are asymptotically normal and the covariance of the estimates can asymptotically approach the well-known C-R bound. It is worth noting that all of our theories do not need the independence or periodicity of the system signals. To the best of the authors’ knowledge, this paper appears to be the first to achieve the C-R bound of the identification algorithm for stochastic systems with saturated output observations under non-i.i.d data conditions.

The rest of the paper is organized as follows: Section \ref{sec2} introduces the notations and assumptions used in later sections. In Section \ref{sec3}, we propose the asymptotic efficient online algorithm and present the main theorems of the paper. Section \ref{sec4} gives the proofs of the main results. Section \ref{sec5} offers numerical simulations to illustrate the theoretical results and compare the performance with other existing algorithms. Finally, Section \ref{sec6} provides concluding remarks.

\section{Problem Setup}\label{sec2}

Throughout the sequel, we need the following notations:
By $\|\cdot\|$, we denote the Euclidean norm of vectors or matrices. By $\|\cdot\|_{p}$ we mean the $L_{p}-$norm defined by $\|X\|_{p}=\left\{\mathbb{E}\|X\|^{p}\right\}^{\frac{1}{p}}$, where $p\in \mathbb{R}$. The  maximum and minimum eigenvalues of a symmetric matrix $M$ are denoted by $\lambda_{max}\left\{M\right\}$ and $\lambda_{min}\left\{M\right\}$  respectively. Besides, let $tr(M)$ denote the trace of the matrix $M$, and by $|M|$  we mean the determinant of the matrix $M$. Moreover, $\left\{\mathcal{F}_{k},k\geq 0\right\}$ is a non-decreasing sequence of $\sigma -$algebras, and the corresponding  conditional mathematical expectation operator is $\mathbb{E}[\cdot \mid \mathcal{F}_{k}]$, we may employ the abbreviation $\mathbb{E}_{k}\left[\cdot\right]$ to $\mathbb{E}[\cdot \mid \mathcal{F}_{k}]$ in the sequel.

Consider the following discrete-time stochastic system: 
\begin{equation}\label{eq1}
	\begin{aligned}
		y_{k+1}&=\phi_{k}^{\top}\theta+v_{k+1},\;\;\;	k=0,1,\cdots\\
		s_{k+1}&=S_{k}(y_{k+1}),
	\end{aligned}
\end{equation}	
where  $\theta\in \mathbb{R}^{m}$ is an unknown parameter vector to be estimated; $y_{k+1}\in \mathbb{R}$, $s_{k+1}\in \mathbb{R}$, $\phi_{k}\in \mathbb{R}^{m}$, $v_{k+1}\in \mathbb{R}$ represent the system output, system output observation, system signal regressor, and random noise process, respectively. Besides, $S_{k}(\cdot)$ is a saturation function defined by 
\begin{equation}\label{eq2}
	S_{k}(x)=\left\{
	\begin{array}{rcl}
		&L_{k}             & {x     <     l_{k}}\\
		&x         & {l_{k} \leq x \leq u_{k}}\\
		&U_{k}      & {x > u_{k}}
	\end{array} \right.,\;\;\;k=0,1,\cdots.
\end{equation}
At each time instant, the noise-corrupted output can be precisely observed only if its value falls within the specified interval $[l_{k}, u_{k}]$. If the output value exceeds this range, the output becomes saturated, resulting in imprecise information represented by $L_{k}$ or $U_{k}$. 

The problem considered here is to develop adaptive algorithms to estimate the unknown parameter $\theta$ using nonlinear observations $\{\phi_{k}, s_{k+1}\}$. Let $(s_{1},\cdots,s_{n})$ be the saturated output observations available at time instant $n$ with the joint probability density function $p_{\theta}\in \mathcal{P}=\{p_{\theta}: \theta \in \Theta\}$, where $\Theta$ is an open set in $\mathbb{R}^{m}$. Assume that the regularity conditions of the Fr\'echet-Cram\'er-Rao (FCR) inequality (cf.,\cite{shao2003}) are satisfied by the family $\mathcal{P}$, then according to this inequality, for each $n\geq 1$, the efficient unbiased estimator $\hat{\theta}_{n}$ for the parameter $\theta$ satisfies  
\begin{equation}\label{2.1}
	\Delta_{n}^{\frac{1}{2}}Cov(\hat{\theta}_{n})\Delta_{n}^{\frac{1}{2}}\geq I_{m},
\end{equation}
where $I_{m}$ is the identity matrix of order $m$, $Cov(\hat{\theta}_{n})$ is the covariance of the estimate $\hat{\theta}_{n}$, and $\Delta_{n}$ is the Fisher information matrix with the measurements $(s_{1},\cdots, s_{n}), $ defined by 
\begin{equation}\label{2.2}
	\Delta_{n}=\mathbb{E}\left[\frac{\partial \log p_{\theta}(s_{1},\cdots,s_{n})}{\partial \theta}\frac{\partial \log p_{\theta}(s_{1},\cdots,s_{n})}{\partial \theta}^{\top}\right].
\end{equation}

The objective of this paper is to introduce an asymptotically unbiased adaptive algorithm for non-i.i.d. data, that can attain the C-R bound in the asymptotic sense, i.e., the equality of $(\ref{2.1})$ is satisfied in the asymptotic sense: 
\begin{equation}\nonumber
	\lim\limits_{n\rightarrow \infty}\Delta_{n}^{\frac{1}{2}}Cov(\hat{\theta}_{n})\Delta_{n}^{\frac{1}{2}}= I_{m}.
\end{equation}

For the theoretical analysis, we introduce the  following basic assumptions:
 \begin{assumption}\label{assum2}
		 For each $k\geq 1$, the norm of the system regressor $\phi_{k}\in \mathcal{F}_{k}=\sigma\{s_{1},\cdots,s_{k}\}$ is bounded by a constant. Besides, the parameter $\theta$ is an interior point of a known convex compact set $\Theta\subseteq \mathbb{R}^{m}$, and denote $D=\sup\limits_{x\in \Theta}\|x\|$.
 \end{assumption}

 	 \begin{assumption}\label{assum3}
        The thresholds $\left\{l_{k}, u_{k}, L_{k}, U_{k}, k\geq 0\right\}$ are known deterministic \\
        sequences satisfying for any $k\geq 0$,
		\begin{equation}\label{5}
				L_{k}\leq l_{k}\leq u_{k} \leq U_{k},	\;\;\;L_{k}<U_{k},	
		\end{equation}
		and
		\begin{equation}\label{6}
			\sup_{k\geq 0}\{l_{k}\} <\infty, 	\;\;\inf_{k\geq 0}\{u_{k}\}> -\infty.
		\end{equation}	
 \end{assumption}

 \begin{assumption}\label{assum4}
	For each $k\geq 1$, the system noise $v_{k}$ is $\mathcal{F}_{k}-$measurable and follows a conditional normal distribution $N(0,\sigma^{2})$ given $\mathcal{F}_{k-1}$.
\end{assumption}

\begin{remark}
The boundedness condition on regressors and the conditional normality condition on noise in Assumption \ref{assum2} and Assumption \ref{assum4} ensure the required growth rate of the minimum eigenvalue of the Fisher information matrix $\Delta_{n}$, specifically, ensuring that the scalar scalar $\{\lambda_{n}, n\geq 0\}$ in $(\ref{2.9})$ has a positive lower bound. For extension, the boundedness condition on regressors can be relaxed to allow the norms of the regressor vector to grow at a certain rate, by permitting $\lambda_{k}$ to converge to $0$ at a certain rate, as analyzed in \cite{ZZ2024}. Moreover, the computation of the likelihood function requires the knowledge of the noise variance $\sigma^{2}$, which will be used in the algorithm design. If the variance $\sigma$ is unknown,  it may be estimated either directly by using the observation data (see the sentencing problems studied in \cite{judical}), or estimated jointly with system parameters by increasing the dimension of the regressors by 1, see, e.g., \cite{ke2024}. 
\end{remark}

\begin{remark}
	Assumption \ref{assum3} can be satisfied by various classical models.  
	Below we give three concrete examples. Firstly, if $L_{k}=l_{k}>-\infty$ and $U_{k}=u_{k}<\infty$, then the model turns to the interval-censored regression model. 
	Secondly, if $l_{k}=u_{k}=0,$ $L_{k}=0$, and $U_{k}=1$, then the model turns to the well-known McCulloch-Pitts model widely used in binary classification 
	or the binary-valued observation model considered in signal processing. 	Thirdly, if $L_{k}=l_{k}=0,$ and $u_{k}=U_{k}=\infty$, then the model becomes the one-layer ReLu network model. 
	Moreover, the inequality $L_{k}<U_{k}$ implies that the observed values of the system are not constant, which is a natural condition for system identifiability.
\end{remark}

\section{Main results}\label{sec3}
\subsection{Fisher information}
To proceed with further discussions, we first investigate the Fisher information in the current nonlinear case.

Under Assumptions $\ref{assum2}$-$\ref{assum4}$ and assuming that $\phi_{0}$ is deterministic, for each $n\geq 1$, the joint density of system output observation sequence at $(s_{1},\cdots,s_{n})$ can be written as $p(s_{1},\cdots,s_{n})
=p(s_{1}\mid \phi_{0})\cdots  p(s_{n}\mid \phi_{0}, \cdots, s_{n})
$ by the Bayes rule.
Let $F(\cdot)$ and $f(\cdot)$ be the conditional distribution function and density function of the noise $v_{k+1}$ given $\mathcal{F}_{k}$, respectively. Then the likelihood function is
\begin{equation}\label{3.2}
\setlength\abovedisplayskip{6pt}
\setlength\belowdisplayskip{6pt}
	\begin{aligned}
		\mathcal{L}_{n}(\theta)
		=&\prod_{k=0}^{n-1}\ell_{k}(\theta)\\
		=&\prod_{k=0}^{n-1}\left[1-F(u_{k}-\phi_{k}^{\top}\theta)\right]^{\bar{\delta}_{k}}\prod_{k=0}^{n-1}\left[F(l_{k}-\phi_{k}^{\top}\theta)\right]^{\delta_{k}}
\prod_{k=0}^{n-1}\left[f(y_{k+1}-\phi_{k}^{\top}\theta)\right]^{1-\delta_{k}-\bar{\delta}_{k}},
	\end{aligned}
\end{equation}
where $\delta_{k}=I\left(y_{k+1}<l_{k}\right),\;\;\;\bar{\delta}_{k}=I\left(y_{k+1}>u_{k}\right).$
and $I\left(\cdot\right)$ is the indicator function. From $(\ref{3.2})$, one can check that the derivative of the logarithm of $\mathcal{L}_{n}(\theta)$ is
\begin{equation}\label{3.3}
	\begin{aligned}	
		\frac{\partial \log\mathcal{L}_{n}(\theta)}{\partial \theta}=&\sum_{k=0}^{n-1}\phi_{k}\left\{\frac{-f(l_{k}-\phi_{k}^{\top}\theta)}{F(l_{k}-\phi_{k}^{\top}\theta)}\delta_{k}+\frac{y_{k+1}-\phi_{k}^{\top}\theta}{\sigma^{2}}(1-\delta_{k}-\bar{\delta}_{k})+\frac{f(u_{k}-\phi_{k}^{\top}\theta)}{1-F(u_{k}-\phi_{k}^{\top}\theta)}\bar{\delta}_{k}\right\}\\
		=&\sum_{k=0}^{n-1}\phi_{k}\mathcal{H}_{k}(\theta),
	\end{aligned}	
\end{equation}
where the function $\mathcal{H}_{k}(\cdot)$ is defined, for all $k\geq 0$ and $x \in \mathbb{R}^{m}$, by 
\begin{equation}\label{344}
\begin{aligned}
		\mathcal{H}_{k}(x)
		=&\frac{-f(l_{k}-\phi_{k}^{\top}x)}{F(l_{k}-\phi_{k}^{\top}x)}\delta_{k}+\frac{y_{k+1}-\phi_{k}^{\top}x}{\sigma^{2}}(1-\delta_{k}-\bar{\delta}_{k})+\frac{f(u_{k}-\phi_{k}^{\top}x)}{1-F(u_{k}-\phi_{k}^{\top}x)}\bar{\delta}_{k}.
\end{aligned}
\end{equation}
Moreover, let the function $G_{k}(\cdot)$ be defined, for all $k\geq 0$ and $x, y\in \mathbb{R}$, by
\begin{equation}\label{3.6}
	\begin{aligned}
		G_{k}(y, x)
		=&\frac{-f(l_{k}-y)}{F(l_{k}-y)}F(l_{k}-x)+\frac{1}{\sigma^{2}}\int_{l_{k}}^{u_{k}}(t-y)dF(t-x)\\
		&+\frac{f(u_{k}-y)}{1-F(u_{k}-y)}\left[1-F(u_{k}-x)\right],
	\end{aligned}
\end{equation}	
then we have
\begin{equation}\label{3.5}
	\begin{aligned}	
		\mathbb{E}\left[\mathcal{H}_{k}(x)\mid \mathcal{F}_{k}\right]
		=G_{k}\left(\phi_{k}^{\top}x,\phi_{k}^{\top}\theta\right),
	\end{aligned}	
\end{equation} 
and the Fisher information matrix $(\ref{2.2})$ is given by the following lemma:
\begin{lemma}\label{lem23}
	For the nonlinear system $(\ref{eq1})$-$(\ref{eq2})$, if Assumptions \ref{assum2}-\ref{assum4} hold, then for each $n\geq 1$, the Fisher information matrix is given by 
	\begin{equation}\label{2100}				\Delta_{n}=\mathbb{E}\left[\sum_{k=0}^{n-1}\lambda_{k}\phi_{k}\phi_{k}^{\top}\right],
	\end{equation}	
	where 
	\begin{equation}\label{2099}
	\lambda_{k}=\frac{f^{2}(l_{k}-\phi_{k}^{\top}\theta)}{F(l_{k}-\phi_{k}^{\top}\theta)}+\int_{l_{k}}^{u_{k}}\frac{[f'(t-\phi_{k}^{\top}\theta)]^{2}}{f(t-\phi_{k}^{\top}\theta)}dt+\frac{f^{2}(u_{k}-\phi_{k}^{\top}\theta)}{1-F(u_{k}-\phi_{k}^{\top}\theta)}.
		\end{equation}
\end{lemma}	
From Assumption $\ref{assum4}$, one can observe that the conditional density function of the noise $v_{k+1}$ given $\mathcal{F}_{k}$ satisfies the following property:
$$f'(x)=-\frac{x}{\sigma^{2}}f(x),\;\; x\in \mathbb{R}.$$
Then, by (\ref{2099}) and the definition of the function $G_{k}(\cdot, \cdot)$, it can be verified that
\begin{equation}\label{2.9}
	\begin{aligned}
	\lambda_{k}=\frac{\partial G_{k}(\phi_{k}^{\top}\theta, x)}{\partial x}|_{x=\phi_{k}^{\top}\theta},
		\end{aligned}
	\end{equation}
which will be used in the construction of the adaptive asymptotically efficient algorithm in the next subsection.
\subsection{Adaptive asymptotically efficient algorithm} 
 
To construct the parameter estimation algorithm, it follows from $(\ref{3.2})$, $(\ref{3.3})$ and $(\ref{3.5})$ that the unknown true parameter $\theta$
satisfies:
 \begin{equation}
\sum_{k=0}^{n-1}\frac{\partial \mathbb{E}\left[\log \ell_{k}(\theta)\mid \mathcal{F}_{k}\right]}{\partial \theta}=\sum_{k=0}^{n-1}\phi_{k}\mathbb{E}\left[\mathcal{H}_{k}(\theta)\mid \mathcal{F}_{k}\right]=\sum_{k=0}^{n-1}\phi_{k}G_{k}(\phi_{k}^{\top}\theta, \phi_{k}^{\top}
\theta)=0.
 \end{equation}	
Therefore, the unknown parameter $\theta$ is the minimizer of the following objective function:
\begin{equation}\label{399}
\theta=\mathop{\arg\min}\limits_{x\in \mathbb{R}^{m}} \sum_{k=0}^{n-1}\mathbb{E}\left[\log l_{k}(x)\mid \mathcal{F}_{k}\right].
 \end{equation}
It is well-known that the offline ML estimates can be obtained by solving the optimization problem $(\ref{399})$. The asymptotic efficiency of offline ML estimates has been extensively studied in the literature under i.i.d. or deterministic regressor conditions (see, e.g., \cite{ly1992}). In this paper, to relax these stringent data conditions, we will propose an adaptive Newton-type algorithm based on the likelihood objective function (\ref{399}).

From Lemma \ref{lem23},  a natural way to obtain recursive estimates with the stochastic Newton (SN) method for all $k\geq 0$ is as follows:
\begin{equation}\label{sn}
\begin{aligned}
\hat{\theta}_{k+1}&=\hat{\theta}_{k}+P_{k+1}\phi_{k}\mathcal{H}_{k}(\hat{\theta}_{k})\\
P_{k+1}&=P_{k}-\hat{\lambda}_{k}a_{k}P_{k}\phi_{k}\phi_{k}^{\top}P_{k},\;\; a_{k}=\frac{1}{1+\hat{\lambda}_{k}\phi_{k}^{\top}P_{k}\phi_{k}},\\
 \hat{\lambda}_{k}&=\frac{\partial G_{k}(\phi_{k}^{\top}\hat{\theta}_{k}, x)}{\partial x}|_{x=\phi_{k}^{\top}\hat{\theta}_{k}},
\end{aligned}
\end{equation}
where the initial values $\hat{\theta}_{0}$ and $P_{0}$ can be arbitrarily chosen in $\Theta$ and with $P_{0}>0$, respectively. 
Using the Sherman–Morrison formula \cite{hager},  it follows that
\begin{equation}
P_{n+1}=\left(\sum\limits_{k=0}^{n}\hat{\lambda}_{k}\phi_{k}\phi_{k}^{\top}\right)^{-1}, 
\end{equation}
which serves as an estimate of the inverse of the Fisher information matrix.
Unfortunately, we cannot prove that the covariance of the estimation error generated by the SN algorithm $(\ref{sn})$ achieves the C-R bound. For this reason, we adopt the design method of the two-step adaptive Newton identification algorithm for nonlinear output observations proposed in \cite{ZZtsqn}, modify the estimates of $\lambda_{k}$, and prove that, when 
$n$ is sufficiently large, the adaptation gain matrix of the proposed algorithm can approach the inverse of the Fisher information matrix.

Moreover, to ensure the boundedness of the estimates generated by our algorithm, we introduce the following projection operator:
\begin{definition}\label{def2}
	For the  convex compact set $\Theta$ in Assumption  $\ref{assum2}$,  the projection operator $\Pi_{A}(x)(\cdot)$ is defined as
	\begin{equation}\label{8}
		\Pi_{A}\{x\}=\mathop{\arg\min}_{y \in \Theta}\|x-y\|_{A}, \quad \forall x \in 	\mathbb{R}^{m},
	\end{equation}
	where the norm $\|\cdot\|_{A}$ associated with  a positive definite matrix $A$ is defined by $\|x\|_{A}=\sqrt{x^{\top}Ax}$.
\end{definition}

Based on the above ideas, our asymptotically efficient algorithm is constructed as Algorithm \ref{alg2}.
\begin{algorithm}[htb]
	\caption{Asymptotically efficient algorithm} 
	\label{alg2} 
	
	\textbf{Step 1.} Recursively calculate the preliminary $\bar{\theta}_{k+1}$:
		\begin{eqnarray}
			\bar{\theta}_{k+1}&=&\Pi_{\bar{P}_{k+1}^{-1}}\{\bar{\theta}_{k}+\bar{\beta}_{k}\bar{P}_{k+1}\phi_{k}\mathcal{H}_{k}(\bar{\theta}_{k})\}, \label{be1}\\
			\bar{P}_{k+1}&=&\bar{P}_{k}-\bar{a}_{k}\bar{\beta}^{2}_{k}\bar{P}_{k}\phi_{k}\phi_{k}^{\top}\bar{P}_{k},\;\;\bar{a}_{k}=\frac{1}{1+\bar{\beta}_{k}^{2}\phi_{k}^{\top}\bar{P}_{k}\phi_{k}},\\
			\bar{\beta}_{k}&=&\min\left\{\underline{g}_{k}, \frac{1}{2\overline{g}_{k}\phi_{k}^{\top}\bar{P}_{k}\phi_{k}+1}\right\}. \label{bebar}
		\end{eqnarray}
	where  $\underline{g}_{k}=\min\limits_{
		x\in \Theta} \frac{\partial G_{k}(\phi_{k}^{\top}\bar{\theta}_{k}, \phi_{k}^{\top}x)}{\partial x}$, $\overline{g}_{k}=\max\limits_{x\in \Theta} \frac{\partial G_{k}(\phi_{k}^{\top}\bar{\theta}_{k}, \phi_{k}^{\top}x)}{\partial x}$, and the function $G_{k}(\cdot, \cdot)$ is defined in $(\ref{3.6})$; the function $\mathcal{H}_{k}(\cdot)$ is defined in $(\ref{344})$; the projection operator $\Pi_{\bar{P}_{k+1}^{-1}}\{\cdot\}$ is defined by 
$(\ref{8})$; the initial values $\bar{\theta}_{0}$ and $\bar{P}_{0}$ can be chosen arbitrarily in $\Theta$ and with $\bar{P}_{0}>0$,  respectively.
	
	\textbf{Step 2.}  Recursively define the accelerated estimate $\hat{\theta}_{k+1}$ based on $\bar{\theta}_{k+1}$ for $k\geq 0$:
		\begin{eqnarray}\label{be2}
			\hat{\theta}_{k+1}&=&\Pi_{P_{k+1}^{-1}}\left\{\hat{\theta}_{k}+ P_{k+1}\phi_{k}\left[\mathcal{H}_{k}(\bar{\theta}_{k})-G_{k}(\phi_{k}^{\top}\bar{\theta}_{k}, \phi_{k}^{\top}\hat{\theta}_{k})\right]\right\},\\
			P_{k+1}&=&P_{k}-a_{k}\beta_{k}P_{k}\phi_{k}\phi_{k}^{\top}P_{k},\;\; a_{k}=\frac{1}{1+\beta_{k}\phi_{k}^{\top}P_{k}\phi_{k}},\label{be26} \\
			\beta_{k}&=&\frac{G_{k}(\phi_{k}^{\top}\bar{\theta}_{k}, \phi_{k}^{\top}\bar{\theta}_{k})-G_{k}(\phi_{k}^{\top}\bar{\theta}_{k}, \phi_{k}^{\top}\hat{\theta}_{k})}{\left(\phi_{k}^{\top}\bar{\theta}_{k}-\phi_{k}^{\top}\hat{\theta}_{k}\right)}I\left(\phi_{k}^{\top}\hat{\theta}_{k}\not=\phi_{k}^{\top}\overline{\theta}_{k}\right)\label{be24} \\
	 &+& \frac{\partial G_{k}(\phi_{k}^{\top}\bar{\theta}_{k}, x)}{\partial x}|_{x=\phi_{k}^{\top}\hat{\theta}_{k}} I\left(\phi_{k}^{\top}\hat{\theta}_{k}=\phi_{k}^{\top}\overline{\theta}_{k}\right),\nonumber
		\end{eqnarray}
	where the initial values $\hat{\theta}_{0}$ and $P_{0}$ can be chosen arbitrarily in $\Theta$ and with $P_{0}>0$, respectively. 
\end{algorithm} 
\begin{remark}
We now provide an explanation for the two steps of our adaptive algorithm. 
In the first step, the scalar adaptation gain $\bar{\beta}_{k}$ is constructed using the bounds $\underline{g}_{k}$ and $\overline{g}_{k}$, which have been utilized in various previous works (see, e.g., \cite{ZZtsqn, ZZ2022, zhao2023}). The design of $\bar{\beta}_{k}$ ensures the convergence of the first step by ensuring the monotonicity of a Lyapunov function related to the estimation error. However, this design of $\bar{\beta}_{k}$ reduces the algorithm's performance because $\bar{\beta}_{k}$ is constructed using only the ``worst-case" information $\underline{g}_{k}$ and $\overline{g}_{k}$. This is the reason why we introduce the second step of the algorithm, where the scalar adaptation gain $\beta_{k}$ in the second step is defined in an adaptive way, by using the preliminary convergent estimate $\bar{\theta}_{k}$ generated in the first step.  We will show that, the matrix gain $P_{k}$ can approach the inverse of the Fisher information, i.e., $\Delta_{k}$. Consequently, the covariance of the estimation error generated by Algorithm \ref{alg2} can reach the well-known C-R bound asymptotically. 
\end{remark}

\begin{remark}
As shown in Algorithm \ref{alg2}, the design of the scalar gain $\bar{\beta}_{k}$ in the first step relies on the bounds $\underline{g}_{k}$ and $\overline{g}_{k}$, which are determined based on the known prior set $\Theta$ specified in Assumption \ref{assum2}. This design ensures that the scalar gain sequence $\{\bar{\beta}_{k}\}$ has a positive lower bound. To further extend this work, the assumption that $\theta$ belongs to a known prior set $\Theta$ may be avoided in the design of $\bar{\beta}_{k}$ by allowing it to gradually decrease to zero. To be concrete, without this assumption, the bounds $\underline{g}_{k}$ and  $\overline{g}_{k}$ in the first-step of the algorithm can be redefined as $$\underline{g}_{k}=\min\limits_{\|x\|\leq \max\{\|\hat{\theta}_{k}\|, d_{k}\}}\frac{\partial G_{k}(\phi_{k}^{\top}\bar{\theta}_{k}, \phi_{k}^{\top}x)}{\partial x},\;\;\;\; \overline{g}_{k}=\max\limits_{\|x\|\leq \max\{\|\hat{\theta}_{k}\|, d_{k}\}}\frac{\partial G_{k}(\phi_{k}^{\top}\bar{\theta}_{k}, \phi_{k}^{\top}x)}{\partial x},$$ where $d_{k}$ is a slowly increasing sequence which can be designed based on the nonlinear function $G_{k}(\cdot, \cdot)$, as specified in Algorithm 1 of \cite{ZZ2024}. With these redefined bounds, the rate at which $\underline{g}_{k}$ approaches zero can be analyzed, enabling the establishment of convergence results for parameter estimates without requiring knowledge of the convex compact set $\Theta$ containing $\theta$ (see, e.g., \cite{ZZ2024}).
\end{remark}

\subsection{Convergence analysis}

In this subsection, we first present the convergence properties of Algorithm \ref{alg2} under general non-PE data conditions in the following theorems:
\begin{theorem}\label{thm6}
	Under Assumptions \ref{assum2}-\ref{assum4}, the estimation error produced by Algorithm \ref{alg2} has the following property as $n\rightarrow \infty$:
	\begin{equation}\label{2300}
		\left\|\theta-\hat{\theta}_{n}\right\|^{2}=O\left(\frac{\log n}{\lambda_{\min}\left\{\sum\limits_{k=1}^{n}\phi_{k}\phi_{k}^{\top}\right\}}\right),\;\;a.s.
	\end{equation}
\end{theorem}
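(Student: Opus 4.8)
The plan is to run the stochastic Lyapunov function method on the Step~2 recursion, treating the Step~1 iterates $\{\bar\theta_k\}$ as a known auxiliary sequence. So I would first establish, by the same Lyapunov argument applied to Step~1, that $\bar\theta_k\to\theta$ a.s. together with $\sum_k(\phi_k^\top(\bar\theta_k-\theta))^2<\infty$ a.s.; this is exactly where the ``worst‑case'' gain $\bar\beta_k=\min\{\underline{g}_k,(2\overline{g}_k\phi_k^\top\bar P_k\phi_k+1)^{-1}\}$ pays off, since it forces the Step~1 drift to be strictly negative, Lemma~\ref{lem3.1} supplying $0<\inf_k\underline{g}_k\le\sup_k\overline{g}_k<\infty$ on bounded arguments. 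Two consequences of the projections $\Pi_{\bar P_{k+1}^{-1}},\Pi_{P_{k+1}^{-1}}$ and Assumptions~\ref{assum2}--\ref{assum5} are used throughout, on a fixed a.s. event: (i) $\phi_k^\top\bar\theta_k,\phi_k^\top\hat\theta_k$ stay in a compact set, so by Lemma~\ref{lem3.1} the scalars $\beta_k,\hat\mu_k,a_k,\phi_k^\top P_k\phi_k$ are bounded above and $\beta_k,\hat\mu_k$ bounded below by a positive constant; (ii) $\|\theta-\hat\theta_k\|\le 3D$.

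For Step~2 set $\tilde\theta_k=\theta-\hat\theta_k$ and $V_k=\tilde\theta_k^\top P_k^{-1}\tilde\theta_k$. The matrix‑inversion lemma gives $P_{k+1}^{-1}=P_k^{-1}+(\beta_k^2/\hat\mu_k)\phi_k\phi_k^\top$, hence $P_{k+1}^{-1}\ge P_k^{-1}$, the determinant relation $|P_{k+1}|/|P_k|=1-a_k\beta_k^2\phi_k^\top P_k\phi_k=a_k\hat\mu_k$, and (since $\beta_k^2/\hat\mu_k\ge\underline c>0$) $P_n^{-1}\ge\underline c\,(P_0^{-1}+\sum_{k=1}^{n}\phi_k\phi_k^\top)$ for all large $n$. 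As $\theta$ is feasible ($\|\theta\|\le D<2D$) and $\Pi_{P_{k+1}^{-1}}$ is non‑expansive in $\|\cdot\|_{P_{k+1}^{-1}}$, one has $V_{k+1}\le\|\theta-\hat\theta'_{k+1}\|_{P_{k+1}^{-1}}^2$ with $\hat\theta'_{k+1}$ the pre‑projection iterate. By $(\ref{3.5})$ the innovation equals $w_{k+1}+\Delta G_k$, where $w_{k+1}:=\mathcal{H}_k(\bar\theta_k)-G_k(\phi_k^\top\bar\theta_k,\phi_k^\top\theta)$ is an $\{\mathcal{F}_{k+1}\}$‑martingale difference with a.s. bounded conditional moments of every order (Assumption~\ref{assum4}); writing $\Delta G_k=[G_k(\phi_k^\top\bar\theta_k,\phi_k^\top\theta)-G_k(\phi_k^\top\bar\theta_k,\phi_k^\top\bar\theta_k)]+[G_k(\phi_k^\top\bar\theta_k,\phi_k^\top\bar\theta_k)-G_k(\phi_k^\top\bar\theta_k,\phi_k^\top\hat\theta_k)]$, using the defining (divided‑difference) relation for $\beta_k$ on the second bracket, and applying the mean value theorem and Lemma~\ref{lem3.1} to the first, gives $\Delta G_k=\beta_k\phi_k^\top\tilde\theta_k+\rho_k$ with $\rho_k=\phi_k^\top(\bar\theta_k-\theta)\cdot\kappa_k$, $|\kappa_k|$ bounded.

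Inserting this into the expansion of $\|\theta-\hat\theta'_{k+1}\|_{P_{k+1}^{-1}}^2$ and using $a_k=(\hat\mu_k+\beta_k^2\phi_k^\top P_k\phi_k)^{-1}$ and $1+(\beta_k^2/\hat\mu_k)\phi_k^\top P_k\phi_k=(a_k\hat\mu_k)^{-1}$, the terms quadratic in $\phi_k^\top\tilde\theta_k$ collapse, after a short computation, to $-a_k\beta_k^2(\phi_k^\top\tilde\theta_k)^2$. This cancellation — which hinges on the divided‑difference choice of $\beta_k$, the normalisation $a_k$, the weight $\hat\mu_k$, and the strict positivity in Lemma~\ref{lem3.1}, and has no counterpart in the classical least‑squares computation because of the nonlinearity — is the crux. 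One is left with the one‑step inequality
\[
V_{k+1}\le V_k-a_k\beta_k^2(\phi_k^\top\tilde\theta_k)^2+\zeta_{k+1}+\eta_k,
\]
where $\zeta_{k+1}$ is a martingale difference (the terms linear in $w_{k+1}$, together with the $w_{k+1}^2$‑fluctuations) and $\eta_k\ge0$ collects the noise‑variance term $(1-a_k\hat\mu_k)\sigma_k^2/\hat\mu_k$ and the cross terms in $\rho_k$. Then $\sum_{k\le n}(1-a_k\hat\mu_k)=\log(|P_{n+1}^{-1}|/|P_0^{-1}|)=O(\log n)$ by the determinant relation and the boundedness of $\|\phi_k\|$ (so $|P_{n+1}^{-1}|=O(n^m)$), which handles the noise term; the $\rho_k$‑cross terms are handled by completing squares against $a_k\beta_k^2(\phi_k^\top\tilde\theta_k)^2$ and by Cauchy--Schwarz using $\sum_k(\phi_k^\top(\bar\theta_k-\theta))^2<\infty$ from Step~1, leaving an $o(1)(\phi_k^\top\tilde\theta_k)^2$ term and an $O(1)$ remainder.

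Summing over $k$ and bounding the martingale term $\sum\zeta_{k+1}$ through a martingale estimation theorem — its predictable quadratic variation being dominated by $\sum a_k\beta_k^2(\phi_k^\top\tilde\theta_k)^2$, which appears on the left of the accumulated inequality, plus the $O(\log n)$ terms — yields, a.s., $\sum_k a_k\beta_k^2(\phi_k^\top\tilde\theta_k)^2<\infty$ and $V_n=O(\log n)$. Since $V_n=\tilde\theta_n^\top P_n^{-1}\tilde\theta_n\ge\lambda_{\min}\{P_n^{-1}\}\|\tilde\theta_n\|^2$ and $\lambda_{\min}\{P_n^{-1}\}\ge\underline c\,\lambda_{\min}\{P_0^{-1}+\sum_{k=1}^{n}\phi_k\phi_k^\top\}$, this gives $\|\theta-\hat\theta_n\|^2=O\bigl(\log n/\lambda_{\min}\{P_0^{-1}+\sum_{k=1}^{n}\phi_k\phi_k^\top\}\bigr)$ a.s., i.e.\ $(\ref{2300})$. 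I expect the drift identity of the previous paragraph — extracting the clean coefficient $-a_k\beta_k^2$ from the interplay of $\beta_k,a_k,\hat\mu_k$ despite the nonlinearity and the $O(\|\bar\theta_k-\theta\|)$ mismatch between the algorithmic slope $\beta_k$ and the true local slope of $G_k$ — to be the main obstacle, with the logarithmic accounting of the noise and residual terms the main source of technical length.
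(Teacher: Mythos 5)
Your overall route is essentially the paper's: a stochastic Lyapunov analysis of Step~2 with $V_k=\tilde\theta_k^\top P_k^{-1}\tilde\theta_k$, the update identity $P_{k+1}^{-1}=P_k^{-1}+\hat\mu_k^{-1}\beta_k^2\phi_k\phi_k^\top$, the divided--difference cancellation that produces the negative drift, the determinant (Lai--Wei) lemma giving the $O(\log n)$ accounting of the accumulated noise variance, a martingale estimation theorem for the cross terms, a separate Lyapunov analysis of Step~1 to control the mismatch caused by replacing the true slope with $\beta_k$, and finally the lower bound $P_{n+1}^{-1}\ge \epsilon_0\bigl(P_0^{-1}+\sum_{k}\phi_k\phi_k^\top\bigr)$ to convert $V_{n+1}=O(\log n)$ into $(\ref{2300})$. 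Your decomposition of the innovation into $w_{k+1}+\beta_k\phi_k^\top\tilde\theta_k+\rho_k$ is exactly the paper's identity $\psi_k-\beta_k\phi_k^\top\tilde\theta_k=\bar\psi_k-\beta_k\phi_k^\top\tilde{\bar\theta}_k$ in different notation.

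The one genuine error is your claimed output of the Step~1 analysis. Theorem~\ref{thm6} assumes no excitation condition, so the Step~1 Lyapunov argument does \emph{not} yield $\bar\theta_k\to\theta$ a.s., and it yields $\sum_{k\le n}\bar a_k\bar\beta_k^2\bigl(\phi_k^\top(\theta-\bar\theta_k)\bigr)^2=O(\log n)$ a.s.\ --- hence, since $\bar a_k\bar\beta_k^2$ is bounded below by a positive constant, $\sum_{k\le n}\bigl(\phi_k^\top(\theta-\bar\theta_k)\bigr)^2=O(\log n)$ a.s.\ --- not $\sum_k\bigl(\phi_k^\top(\theta-\bar\theta_k)\bigr)^2<\infty$. (Almost sure convergence of $\bar\theta_k$ would require the condition $(\ref{2555})$, which is precisely what this theorem is designed not to assume.) As a consequence, your Cauchy--Schwarz treatment of the $\rho_k$ cross terms produces an $O(\log n)$ remainder rather than $O(1)$. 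This does not damage the final bound, because the accumulated one-step inequality already carries $O(\log n)$ contributions from the noise-variance term, so $V_{n+1}=O(\log n)$ and hence $(\ref{2300})$ still follow; but the two intermediate claims as stated are false under the hypotheses of the theorem and must be weakened to the $O(\log n)$ bound that the Step~1 analysis actually delivers.
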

\begin{remark}
	From $(\ref{2300})$, the estimates given by Algorithm \ref{alg2} will be strongly consistent, i.e., $\lim\limits_{n\rightarrow 
		\infty}\|\theta-\hat{\theta}_{n}\|=0,\;a.s.$, if we have
	\begin{equation}\label{2555}
		\log n =o\left(\lambda_{\min}\left\{\sum_{k=1}^{n}\phi_{k}\phi_{k}^{\top}\right\}\right),\;\;n\rightarrow 
		\infty,\;\;a.s. 
	\end{equation}
	Condition $(\ref{2555})$ is much weaker than the well-known PE condition or i.i.d condition, which requires that $n=O\left(\lambda_{\min}\left\{\sum\limits_{k=1}^{n}\phi_{k}\phi_{k}^{\top}\right\}\right)$. Moreover, the condition $(\ref{2555})$ is equal to the $Lai-Wei$ excitation condition, which is known to be the weakest possible data condition for the convergence of the classical least squares (LS) estimates for the linear regression model (cf.,\cite{lai:1982}).
\end{remark}

The next theorem establishes the asymptotic normality of the estimation error. For this, we need a slightly stronger excitation condition than $(\ref{2555})$.
\begin{theorem}\label{thm2}
	Under Assumptions \ref{assum2}-\ref{assum4}, assume that  as $n\rightarrow \infty$,
	\begin{equation}\label{phi}
		(\log n)^{2}=o\left(\lambda_{\min}\left\{\sum_{k=0}^{n}\phi_{k}\phi_{k}^{\top}\right\}\right),\;\;a.s.
	\end{equation}	
	Besides, assume  that
	\begin{equation}\label{R1}
			\Delta_{n}^{\frac{1}{2}}\Lambda_{n}\Delta_{n}^{\frac{1}{2}} \mathop{\rightarrow}\limits^{p} I_{m},\;\;n\rightarrow \infty,			
	\end{equation}	
	where $\Lambda_{n}^{-1}=\sum\limits_{k=0}^{n-1}\lambda_{k}\phi_{k}\phi_{k}^{\top}$, $\lambda_{k}$ is defined as in Lemma $\ref{lem23}$, $\Delta_{n}$ is the Fisher information matrix defined by $\Delta_{n}=\mathbb{E}\left[\sum\limits_{k=0}^{n-1}\lambda_{k}\phi_{k}\phi_{k}^{\top}\right]$ as in Lemma $\ref{lem23}$, $``\mathop{\rightarrow}\limits^{p}"$ means convergence in probability. Thus, the estimates $\hat{\theta}_{n}$ for the parameter $\theta$ given by Algorithm \ref{alg2} have the following asymptotically
	normal property as $n\rightarrow \infty$:
	\begin{equation}\label{4.5}
		\Delta_{n}^{\frac{1}{2}}\tilde{\theta}_{n}\mathop{\rightarrow}\limits^{d} N(0,I_{m}),
	\end{equation}
	where $\tilde{\theta}_{n}=\theta-\hat{\theta}_{n}$, $``\mathop{\rightarrow}\limits^{d}"$ means convergence in distribution. 
\end{theorem}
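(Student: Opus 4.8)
The plan is to convert the second step of Algorithm~\ref{alg2} into a linear stochastic recursion driven by a martingale difference sequence, to show that its natural normalization is the Fisher information matrix $\Delta_{n+1}(\theta)$, and then to apply a central limit theorem for martingales. Throughout write $\tilde\theta_k=\theta-\hat\theta_k$ and $\tilde{\bar\theta}_k=\theta-\bar\theta_k$.

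\emph{Linearizing the recursion.} Since $\|\theta\|\le D$, the point $\theta$ lies in the interior of $\{\|x\|\le 2D\}$, so by Theorem~\ref{thm6} and the strong consistency of the first-step estimate $\bar\theta_k$ (obtained along the proof of Theorem~\ref{thm6}) we have $\hat\theta_k\to\theta$ and $\bar\theta_k\to\theta$ a.s., and for all large $k$ the projection in $(\ref{be2})$ is inactive, giving $\tilde\theta_{k+1}=\tilde\theta_k-a_k\beta_kP_k\phi_k[\mathcal H_k(\bar\theta_k)-G_k(\phi_k^\top\bar\theta_k,\phi_k^\top\hat\theta_k)]$. I would set $w_{k+1}:=\mathcal H_k(\bar\theta_k)-\mathbb{E}_k[\mathcal H_k(\bar\theta_k)]$, which by $(\ref{3.5})$ is an $\{\mathcal{F}_{k+1}\}$-martingale difference sequence with uniformly bounded conditional moments of all orders (the hazard-rate terms of $\mathcal H_k$ are bounded because $\sup_k l_k<\infty$, $\inf_k u_k>-\infty$, and the middle term is controlled by $|\phi_k^\top\tilde{\bar\theta}_k|+|v_{k+1}|$ with $v_{k+1}$ Gaussian); then the bracket equals $w_{k+1}+\nu_k\phi_k^\top\tilde\theta_k$, where, after a mean-value expansion of $G_k(\phi_k^\top\bar\theta_k,\cdot)$ in its second argument and by the construction of $\beta_k$, the scalars $\nu_k$ and $\beta_k$ are values of $\partial_x G_k(\phi_k^\top\bar\theta_k,\cdot)$ at intermediate points. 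Using Lemma~\ref{lem3.1} and the consistency above, $\beta_k$, $\hat\mu_k$ and $\nu_k$ all converge a.s. to $\lambda_k$ of $(\ref{2.9})$, after verifying the identity $\lambda_k=\partial_x G_k(\phi_k^\top\theta,x)|_{x=\phi_k^\top\theta}$ as well as $\mathbb{E}_k[\mathcal H_k(\theta)]=0$ and $\mathbb{E}_k[\mathcal H_k(\theta)^2]=\lambda_k$, which in turn give $R_k:=\mathbb{E}_k[w_{k+1}^2]\to\lambda_k$ a.s.; moreover $\lambda_k$ is bounded above and away from $0$. Finally, a short computation with the matrix inversion lemma and $a_k=(\hat\mu_k+\beta_k^2\phi_k^\top P_k\phi_k)^{-1}$ yields the identities $P_{k+1}^{-1}=P_k^{-1}+\frac{\beta_k^2}{\hat\mu_k}\phi_k\phi_k^\top$ and $a_k\beta_k(1+\frac{\beta_k^2}{\hat\mu_k}\phi_k^\top P_k\phi_k)=\frac{\beta_k}{\hat\mu_k}$.

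\emph{Reduction to a martingale.} Multiplying the error recursion by $P_{k+1}^{-1}$ and using these identities gives $P_{k+1}^{-1}\tilde\theta_{k+1}=P_k^{-1}\tilde\theta_k-\frac{\beta_k}{\hat\mu_k}\phi_kw_{k+1}+\rho_k$ with $\rho_k=-\frac{\beta_k}{\hat\mu_k}(\nu_k-\beta_k)\phi_k\phi_k^\top\tilde\theta_k$, so that $\|\rho_k\|=O\big((|\phi_k^\top\tilde{\bar\theta}_k|+|\phi_k^\top\tilde\theta_k|)\,|\phi_k^\top\tilde\theta_k|\,\|\phi_k\|\big)$ by Lemma~\ref{lem3.1}. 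Summing, $P_{n+1}^{-1}\tilde\theta_{n+1}=P_0^{-1}\tilde\theta_0-M_{n+1}+\sum_{k=0}^n\rho_k$ with the martingale $M_{n+1}=\sum_{k=0}^n\frac{\beta_k}{\hat\mu_k}\phi_kw_{k+1}$. From $P_{k+1}^{-1}=P_k^{-1}+\frac{\beta_k^2}{\hat\mu_k}\phi_k\phi_k^\top$, the convergence $\frac{\beta_k^2}{\hat\mu_k}\to\lambda_k$, assumption $(\ref{R1})$, and the excitation condition $(\ref{phi})$ (which forces $\lambda_{\min}\{\Delta_{n+1}\}\to\infty$), a Toeplitz/Kronecker averaging argument gives $\Delta_{n+1}^{1/2}P_{n+1}\Delta_{n+1}^{1/2}\mathop{\rightarrow}\limits^{p} I$; the same argument applied to $\langle M\rangle_{n+1}=\sum_{k=0}^n(\frac{\beta_k}{\hat\mu_k})^2R_k\phi_k\phi_k^\top$ together with $R_k\to\lambda_k$ gives $\Delta_{n+1}^{-1/2}\langle M\rangle_{n+1}\Delta_{n+1}^{-1/2}\mathop{\rightarrow}\limits^{p} I$. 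Since $\|\Delta_{n+1}^{-1/2}\phi_k\|\le M/\sqrt{\lambda_{\min}\{\Delta_{n+1}\}}\to 0$ and $w_{k+1}$ has uniformly bounded conditional $(2+\delta)$-moments, the conditional Lindeberg condition holds and the martingale CLT yields $\Delta_{n+1}^{-1/2}M_{n+1}\mathop{\rightarrow}\limits^{d} N(0,I)$.

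\emph{Negligibility of the residual and conclusion.} It remains to prove $\Delta_{n+1}^{-1/2}\big(P_0^{-1}\tilde\theta_0+\sum_{k=0}^n\rho_k\big)\mathop{\rightarrow}\limits^{p}0$, which I expect to be the main obstacle. Here one bounds $\sum_{k=0}^n\rho_k$ using the rate $\|\tilde\theta_k\|^2=O(\log k/\lambda_{\min}\{P_0^{-1}+\sum_{j\le k}\phi_j\phi_j^\top\})$ from Theorem~\ref{thm6} and the analogous rate for $\bar\theta_k$, combined with accumulation estimates of the type $\sum_k\phi_k^\top P_k\phi_k=O(\log n)$ and bounds on $\sum_k(\phi_k^\top\tilde\theta_k)^2$-type quantities produced by the stochastic Lyapunov analysis of Algorithm~\ref{alg2}; it is precisely here that the strengthened excitation condition $(\log n)^2=o(\lambda_{\min}\{\cdot\})$ in $(\ref{phi})$ — as opposed to $\log n=o(\lambda_{\min}\{\cdot\})$ needed in Theorem~\ref{thm6} — is used to absorb the extra logarithmic factor. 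Granting this, $\Delta_{n+1}^{1/2}\tilde\theta_{n+1}=\big(\Delta_{n+1}^{1/2}P_{n+1}\Delta_{n+1}^{1/2}\big)\,\Delta_{n+1}^{-1/2}\big(P_{n+1}^{-1}\tilde\theta_{n+1}\big)=(I+o_p(1))\big(-\Delta_{n+1}^{-1/2}M_{n+1}+o_p(1)\big)$, and Slutsky's theorem gives $\Delta_{n+1}^{1/2}\tilde\theta_{n+1}\mathop{\rightarrow}\limits^{d}N(0,I)$, which is $(\ref{4.5})$.
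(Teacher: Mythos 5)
Your proposal is correct and follows essentially the same route as the paper: the paper's own proof outsources the core step (linearizing the recursion into $P_{n+1}^{-1}\tilde{\theta}_{n+1}=P_{0}^{-1}\tilde{\theta}_{0}-\sum_{k}\hat{\mu}_{k}^{-1}\beta_{k}\phi_{k}w_{k+1}-\sum_{k}\hat{\mu}_{k}^{-1}\beta_{k}(\xi_{k}-\beta_{k})\phi_{k}\phi_{k}^{\top}\tilde{\theta}_{k}+\cdots$ and applying a martingale CLT) to a citation of Theorem 3 in \cite{ZZtsqn}, obtaining $Q_{n+1}^{-1/2}\tilde{\theta}_{n+1}\mathop{\rightarrow}\limits^{d}N(0,I)$ with the random normalizer $Q_{n+1}^{-1}=\sum_{k}\mu_{k}^{-1}(G_{k}'(\phi_{k}^{\top}\theta))^{2}\phi_{k}\phi_{k}^{\top}$, and then spends its explicit effort on exactly the normalizer-matching you describe ($\mu_{k}/G_{k}'(\phi_{k}^{\top}\theta)\to 1$, $\lambda_{k}=G_{k}'(\phi_{k}^{\top}\theta)$, $\Lambda_{n+1}^{-1/2}Q_{n+1}^{1/2}\to I$, then condition (\ref{R1})). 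You simply write out in full the same linearization, martingale decomposition, Lindeberg verification, and residual estimate (correctly identifying that the strengthened condition (\ref{phi}) is what absorbs the $O(\log n)$ accumulation of the mean-value correction terms), so the two arguments coincide in substance.
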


\begin{remark}
The signal conditions for asymptotic normality required in Theorem \ref{thm2} do not rely on the commonly assumed conditions such as i.i.d., or deterministic and periodic ones found in the literature (see, for example, \cite{guo2015, wang2007, zhao2023, bercu}).
	The condition $(\ref{R1})$ has been considered in \cite{lai:1982} for the analysis of the asymptotic normality of the classical LS estimates, and has also been considered in \cite{ZZtsqn} for the asymptotic normality analysis of nonlinear Newton-type estimates with saturated output observations. We remark that if $\{\phi_{k}\}$ is a deterministic sequence, then $(\ref{R1})$ is satisfied automatically. Moreover, if $\{\phi_{k}\}$ is a random sequence with either ergodic property or $\phi$-mixing property, then under some mild regularity conditions,  $(\ref{R1})$ can also be satisfied. For further illustration, refer to Remark \ref{re11} below for more specific examples. 
\end{remark}
\begin{remark}
Compared to the results presented in \cite{ZZtsqn}, the main difference in our current results lies in the fact that the asymptotic variance of Algorithm \ref{alg2} achieves the C-R bound, whereas the asymptotic variance of the algorithm in \cite{ZZtsqn} cannot reach the C-R bound.
\end{remark}

\subsection{Efficiency analysis}
From Theorem $\ref{thm2}$, it can be concluded that the estimation error follows an asymptotically normal distribution with the covariance being equal to the C-R bound. In this subsection, we will further show that the covariance of estimates produced by Algorithm \ref{alg2} will indeed achieve the C-R bound asymptotically. 
We first provide the $\mathcal{L}_{p}$ convergence property of the estimates.
\begin{assumption}\label{assum5}
For each $r\geq 1$, there exists a constant $c_{r}>0$ such that as $n\rightarrow \infty$,
	\begin{equation}\label{288}
		\left\|\sum_{k=0}^{n}\left(\lambda_{k}\phi_{k}\phi_{k}^{\top}-\mathbb{E}[\lambda_{k}\phi_{k}\phi_{k}^{\top}]\right)\right\|_{2r}\leq c_{r}\sqrt{\lambda_{\min}(n)},
\end{equation}
and 
\begin{equation}\label{delta4}
		n^{\delta}=O\left(\lambda_{\min}(n)\right),\;\; \delta \in \left(\frac{1}{2}, 1\right],
		\end{equation}
		where $\lambda_{\min}(n)=\lambda_{\min}\left\{\mathbb{E}\left[\sum\limits_{k=0}^{n}\phi_{k}\phi_{k}^{\top}\right]\right\}$, $\lambda_{k}$ is defined as in $(\ref{2.9})$, and $\|\cdot\|_{2r}$ is the $\mathcal{L}_{2r}-$norm defined by $\|X\|_{2r}=\left\{\mathbb{E}\|X\|^{2r}\right\}^{\frac{1}{2r}}$.
\end{assumption}

\begin{remark}\label{re11}
	Condition $(\ref{288})$ describes the growth rate of the difference between the information matrix and its expectation. It is a generalization of the $\mathcal{M}_{p}$-class condition in \cite{guo1993} to the case of non-persistent excitation. Assumption $\ref{assum5}$ is weaker than the traditional i.i.d. condition.  Below, we provide some examples of non-i.i.d. signals that satisfy Assumption \ref{assum5} (see \cite{guo1993} for details). The input signals $\{\phi_{k}, k\geq 0\}$ satisfy Assumption $\ref{assum5}$ if either
	\begin{itemize}
		\item[(i)] $\{\phi_{k}, k\geq 0\}$ is a deterministic sequence satisfying non-persistent excitation (non-PE) condition:
	$n^{\delta}=O\left(\lambda_{\min}\left\{\sum\limits_{k=0}^{n}\phi_{k}\phi_{k}^{\top}\right\}\right),\;\; \delta \in \left(\frac{1}{2}, 1\right].$\\
		\item[(ii)] $\{\phi_{k}, k\geq 0\}$ is a bounded $\phi-$mixing process with summable mixing rate $\sum\limits_{i=1}^{\infty} \varphi(i)<\infty$ and with $\inf\limits_{k\geq 0}\left\{\lambda_{\min}\left\{\mathbb{E}\left[\phi_{k}\phi_{k}^{\top}\right]\right\}\right\}>0$. \\
		\item[(iii)] $\left\{\phi_{k}, k\geq 0\right\}$ is bounded and stationary $\alpha-$mixing with $\sum\limits_{t=1}^{\infty}t^{r-1}\left[\alpha(t)\right]^{\delta/(2r+\delta)}$ for some $\delta>0.$
		\item[(iv)] 
		$\{\phi_{k}, k\geq 0\}$ is generated by a trajectory of a dynamical system: 
\begin{equation}\nonumber
\phi_{k+1}=A\phi_{k}+v_{k+1},
\end{equation}
where $\rho(A)< 1$ with
$\rho(A)$ being the spectral radius of state matrix $A$, and the noise sequence $\{v_{k},k\geq 1\}$ is i.i.d. bounded, and satisfies 
$\lambda_{\min}\left\{\mathbb{E}\left[v_{k}v_{k}^{\top}\right]\right\}>0.$ 		
	\end{itemize}
	Furthermore, we mention that adding some regularization terms into the algorithm, as in \cite{ima}, may help validate the conditions on the smallest and largest eigenvalues of the adaptation gain matrix required for convergence.
\end{remark}

\begin{remark}
According to Remark $\ref{re11}$, Assumption $\ref{assum5}$ is much weaker than those used in the existing literature where the regressors $\{\phi_{k}\}$ are required to be either i.i.d. signals (cf., \cite{you2015}), or persistently excited deterministic signals, i.e., satisfying $n=O\left(\lambda_{\min}(n)\right)$ (cf., \cite{wangying2023}).
\end{remark}

\begin{theorem}\label{thm1}
	Under Assumptions $\ref{assum2}$-$\ref{assum4}$ and Assumption $\ref{assum5}$, for each $r\geq 1$, the estimates produced by Algorithm $\ref{alg2}$ have the following convergence rate:
	\begin{equation}\label{15}
			\mathbb{E}\left[\left\|\tilde{\theta}_{n}\right\|^{2r}\right]=O\left(\frac{ \log^{r} n}{n^{r\delta}}\right),\;\;n\rightarrow \infty, 		
	\end{equation}
	where $\tilde{\theta}_{k}=\theta-\hat{\theta}_{k}$, and $\delta$ is defined in Assumption $\ref{assum5}$.	
\end{theorem}

\begin{theorem}\label{thm3}
	Under Assumptions $\ref{assum2}$-$\ref{assum4}$ and Assumption \ref{assum5}, the covariance of estimates produced by Algorithm \ref{alg2} can achieve the C-R bound asymptotically in the following sense:
	\begin{equation}\label{qq}
		\lim_{n\rightarrow \infty}\Delta_{n}^{\frac{1}{2}}Cov(\hat{\theta}_{n})\Delta_{n}^{\frac{1}{2}}=I_{m},
	\end{equation}
	where $\Delta_{n}$ is the Fisher information matrix defined in Lemma \ref{lem23}.
\end{theorem}

\section{Proofs of the main results}\label{sec4} 
Before proving the main results, we first prove some preliminary lemmas.
For simplicity of expression, 
denote 
\begin{eqnarray}
&w_{k+1}&=\mathcal{H}_{k}(\bar{\theta}_{k})-G_{k}(\phi_{k}^{\top}\bar{\theta}_{k}, \phi_{k}^{\top}\theta),	\label{6.245}\\
&\bar{\psi}_{k}&=G_{k}(\phi_{k}^{\top}\bar{\theta}_{k}, \phi_{k}^{\top}\theta)-G_{k}(\phi_{k}^{\top}\bar{\theta}_{k}, \phi_{k}^{\top}\bar{\theta}_{k}),\label{6.22} \\ 
		&\psi_{k}&=G_{k}(\phi_{k}^{\top}\bar{\theta}_{k}, \phi_{k}^{\top}\theta)-G_{k}(\phi_{k}^{\top}\bar{\theta}_{k}, \phi_{k}^{\top}\hat{\theta}_{k}),\label{6.23}
\end{eqnarray}
where the estimates $\bar{\theta}_{k}$ and $\hat{\theta}_{k}$ are defined as in Algorithm $\ref{alg2}$, respectively. From $(\ref{3.5})$, we have 
\begin{equation}\nonumber
		\mathbb{E}\left[w_{k+1}\mid \mathcal{F}_{k}\right]
		=\mathbb{E}\left[\mathcal{H}_{k}(\bar{\theta}_{k})\mid \mathcal{F}_{k}\right]-G_{k}(\phi_{k}^{\top}\bar{\theta}_{k}, \phi_{k}^{\top}\theta)=0,
\end{equation}
thus $\{w_{k+1}, \mathcal{F}_{k}\}$ is a martingale difference sequence. The following lemma gives the bounded property for the conditional moment of $w_{k+1}$.

\begin{lemma}\label{lem51}
	Under Assumption \ref{assum2}-\ref{assum4}, for any given $r\geq 1$, we have	
	\begin{equation}\label{5.4}
		\sup_{k\geq 0}\mathbb{E}\left[\left|w_{k+1}\right|^{r}\mid \mathcal{F}_{k}\right]<C_{r}, 
	\end{equation}
	where $C_{r}$ is some finite positive constant. 	
\end{lemma}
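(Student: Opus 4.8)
# Proof Proposal for Lemma \ref{lem51}

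\textbf{Overall strategy.} The plan is to bound $|w_{k+1}|$ pointwise by a sum of three terms corresponding to the three branches of the censored model, and then control the conditional $r$-th moment of each term. Recall that
\[
w_{k+1}=\mathcal{H}_{k}(\bar{\theta}_{k})-G_{k}(\phi_{k}^{\top}\bar{\theta}_{k},\phi_{k}^{\top}\theta),
\]
and since $\mathbb{E}[w_{k+1}\mid\mathcal{F}_{k}]=0$, it suffices to bound $\mathbb{E}[|\mathcal{H}_{k}(\bar\theta_k)|^r\mid\mathcal{F}_k]$ together with $|G_k(\phi_k^\top\bar\theta_k,\phi_k^\top\theta)|$; the latter is deterministic given $\mathcal{F}_k$ and bounded because the arguments lie in the compact set $\{|y|,|x|\le 2DM\}$ (using Assumptions \ref{assum2} and \ref{assum5}, $|\phi_k^\top\bar\theta_k|\le 2DM$ and $|\phi_k^\top\theta|\le DM$) and $G_k$ is continuous there with bounds uniform in $k$ by the argument behind Lemma \ref{lem3.1}. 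So the crux is the conditional moment of $\mathcal{H}_k(\bar\theta_k)$.

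\textbf{Main steps.} First I would write, from $(\ref{344})$,
\[
\mathcal{H}_k(\bar\theta_k)=\underbrace{\frac{-f(l_k-\phi_k^\top\bar\theta_k)}{F(l_k-\phi_k^\top\bar\theta_k)}\,\delta_k}_{=:A_k}
+\underbrace{\frac{y_{k+1}-\phi_k^\top\bar\theta_k}{\sigma^2}(1-\delta_k-\bar\delta_k)}_{=:B_k}
+\underbrace{\frac{f(u_k-\phi_k^\top\bar\theta_k)}{1-F(u_k-\phi_k^\top\bar\theta_k)}\,\bar\delta_k}_{=:C_k},
\]
and use $|\mathcal{H}_k(\bar\theta_k)|^r\le 3^{r-1}(|A_k|^r+|B_k|^r+|C_k|^r)$. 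For $A_k$ and $C_k$: these are bounded \emph{deterministically} given $\mathcal{F}_k$. Indeed $\phi_k^\top\bar\theta_k$ ranges over a compact set, and $l_k$ is bounded above while $u_k$ is bounded below (Assumption \ref{assum2}, $(\ref{6})$); since $f,F$ are the standard $N(0,\sigma^2)$ density/CDF, the hazard-type ratios $f/F$ and $f/(1-F)$ are continuous and the only danger is when the argument $\to\pm\infty$. But when $l_k-\phi_k^\top\bar\theta_k\to-\infty$ the Gaussian tail bound $f(t)/F(t)=O(|t|)$ as $t\to-\infty$ keeps the ratio polynomially (indeed linearly) bounded, and $l_k-\phi_k^\top\bar\theta_k$ is itself bounded above, so $\sup_k |A_k|<\infty$ uniformly; symmetrically for $C_k$. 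Thus $\mathbb{E}[|A_k|^r+|C_k|^r\mid\mathcal{F}_k]$ is bounded uniformly in $k$.

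\textbf{The Gaussian term.} For $B_k$, note that on the event $\{1-\delta_k-\bar\delta_k=1\}$ we have $y_{k+1}=\phi_k^\top\theta+v_{k+1}$, so
\[
B_k=\frac{\phi_k^\top\theta-\phi_k^\top\bar\theta_k+v_{k+1}}{\sigma^2}\,(1-\delta_k-\bar\delta_k),
\]
hence $|B_k|^r\le 2^{r-1}\sigma^{-2r}\big(|\phi_k^\top(\theta-\bar\theta_k)|^r+|v_{k+1}|^r\big)$. The first piece is bounded by $(3DM)^r$ since $\|\phi_k\|\le M$, $\|\theta\|\le D$, $\|\bar\theta_k\|\le 2D$. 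For the second, Assumption \ref{assum4} says $v_{k+1}\mid\mathcal{F}_k\sim N(0,\sigma^2)$, so $\mathbb{E}[|v_{k+1}|^r\mid\mathcal{F}_k]$ equals the $r$-th absolute moment of $N(0,\sigma^2)$, a finite constant depending only on $r$ and $\sigma^2$. Collecting all bounds gives $\sup_{k\ge0}\mathbb{E}[|\mathcal{H}_k(\bar\theta_k)|^r\mid\mathcal{F}_k]<\infty$, and combining with the deterministic bound on $|G_k(\phi_k^\top\bar\theta_k,\phi_k^\top\theta)|$ yields $(\ref{5.4})$ via $|w_{k+1}|^r\le 2^{r-1}(|\mathcal{H}_k(\bar\theta_k)|^r+|G_k(\phi_k^\top\bar\theta_k,\phi_k^\top\theta)|^r)$.

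\textbf{Expected main obstacle.} The routine parts are the $3^{r-1}$-type splittings and the Gaussian moment. The one place needing genuine care is establishing that the hazard ratios $f(l_k-\phi_k^\top\bar\theta_k)/F(l_k-\phi_k^\top\bar\theta_k)$ and $f(u_k-\phi_k^\top\bar\theta_k)/(1-F(u_k-\phi_k^\top\bar\theta_k))$ are bounded \emph{uniformly in $k$}: one must combine the one-sided boundedness of $l_k$ (resp. $u_k$) from $(\ref{6})$ with the linear growth of the Gaussian inverse-Mills ratio at $-\infty$ (resp. $+\infty$), since on the unbounded side the ratio does not stay bounded by a constant but only grows linearly — fortunately that side is exactly the side cut off by $(\ref{6})$. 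This is the same structural fact underlying Lemma \ref{lem3.1}, so I would either cite it or reproduce the short tail estimate. Everything else is a direct application of Assumptions \ref{assum2}--\ref{assum4}.
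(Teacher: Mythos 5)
Your decomposition of $\mathcal{H}_k(\bar\theta_k)$ into the three branch terms $A_k,B_k,C_k$ and your treatment of the Gaussian middle term $B_k$ coincide with the paper's proof, but the handling of $A_k$ and $C_k$ contains a genuine error. You claim $\sup_k|A_k|<\infty$ deterministically on the grounds that the side on which $f/F$ grows ``is exactly the side cut off by $(\ref{6})$.'' This misreads $(\ref{6})$: it asserts $\sup_k l_k<\infty$, i.e.\ it bounds the argument $t=l_k-\phi_k^{\top}\bar\theta_k$ from \emph{above}, which only excludes the direction $t\to+\infty$, where $f(t)/F(t)\to 0$ — the harmless side. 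The dangerous direction $t\to-\infty$, where the inverse Mills ratio grows like $|t|/\sigma^{2}$, is \emph{not} excluded: Assumption \ref{assum2} places no lower bound on $l_k$ (and no upper bound on $u_k$; compare the ReLU example with $u_k=+\infty$). Hence the coefficients $f(l_k-\phi_k^{\top}\bar\theta_k)/F(l_k-\phi_k^{\top}\bar\theta_k)$ and $f(u_k-\phi_k^{\top}\bar\theta_k)/(1-F(u_k-\phi_k^{\top}\bar\theta_k))$ are in general unbounded over $k$, and your pointwise bound on $|A_k|$, $|C_k|$ fails.

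The lemma is still true, but the saving mechanism is different and lives at the level of conditional moments, not pointwise bounds: $A_k$ carries the indicator $\delta_k$, whose conditional mean $F(l_k-\phi_k^{\top}\theta)$ decays like a Gaussian tail precisely when the coefficient blows up. The paper accordingly bounds
\begin{equation*}
\left[\frac{f(l_k-\phi_k^{\top}\bar\theta_k)}{F(l_k-\phi_k^{\top}\bar\theta_k)}\right]^{r}\mathbb{E}_k\left[\left|\delta_k-F(l_k-\phi_k^{\top}\theta)\right|^{r}\right]
\leq g\left(l_k-\phi_k^{\top}\bar\theta_k\right),\qquad g(x)=\left[\frac{f(x)}{F(x)}\right]^{r}F(x+2DM),
\end{equation*}
and shows $g(x)\to 0$ as $x\to-\infty$ by L'H\^opital's rule, so that $g$ is bounded on the half-line $(-\infty,\,C+2DM]$ in which the argument actually lies; the term $C_k$ is handled symmetrically. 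You need to replace your deterministic bound with this cancellation between the polynomial growth of the hazard ratio and the Gaussian decay of the censoring probability; the rest of your proposal (the $3^{r-1}$ splitting, the bound on $B_k$ via the conditional normality of $v_{k+1}$, and the uniform bound on $G_k(\phi_k^{\top}\bar\theta_k,\phi_k^{\top}\theta)$) is sound.
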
	
The proof of Lemma \ref{lem51} is supplied in Appendix \ref{BB}. 

From Assumption $\ref{assum2}$ and Assumption $\ref{assum4}$, the function $G_{k}(y, x)$ is strictly increasing about $x$ for every $y \in \mathbb{R}$, which is established in the next lemma.
\begin{lemma}\label{lem3.1}
		Let Assumptions \ref{assum2}-\ref{assum4} hold, for any constant $\Gamma>0$, the function $G_{k}(y,x)$ defined in $(\ref{3.6})$ has the following property:
	\begin{equation}\label{iinf}
		\inf_{ |y|,|x| \leq \Gamma}\inf _{k\geq 0}\left\{\frac{\partial G_{k}(y,x)}{\partial x}\right\}>0. 
	\end{equation}
	Besides, 
		\begin{eqnarray}
			&\sup_{ |y|,|x| \leq \Gamma}\sup _{k\geq 0}\left\{\left|\frac{\partial G_{k}^{2}(y,x)}{\partial x^{2}}\right|\right\}<\infty, \label{355}\\ 
			&\sup_{ |y|,|x| \leq \Gamma}\sup _{k\geq 0}\left\{\left|\frac{\partial G_{k}^{2}(y,x)}{\partial x \partial y}\right|\right\}<\infty, \label{a43}\\
			&\sup_{ |y|,|x| \leq \Gamma}\sup _{k\geq 0}\left\{\left|\frac{\partial G_{k}^{2}(y,x)}{\partial y^{2}}\right|\right\}<\infty. \label{a44}
		\end{eqnarray}
\end{lemma}

The proof of Lemma \ref{lem3.1} is supplied in Appendix \ref{BB}.

Based on Lemma \ref{lem51} and \ref{lem3.1}, we present the following lemma concerning a stochastic Lyapunov function, which plays a key role in the proof of Theorem \ref{thm6}. The detailed proof of this lemma is provided in Appendix \ref{BB}.
\begin{lemma}\label{lem14}
	Let Assumptions \ref{assum2}-\ref{assum4} hold, then the estimates $\hat{\theta}_{n+1}$ produced by Algorithm \ref{alg2} have the following property as $n \rightarrow \infty$:
		\begin{eqnarray}
			\tilde{\theta}_{n+1}^{\top}P_{n+1}^{-1}\tilde{\theta}_{n+1}+\sum_{k=1}^{n}a_{k}\psi_{k}^{2}=O(\log n),\;\;a.s.,	
		\end{eqnarray}
	where $\tilde{\theta}_{n+1}=\theta-\hat{\theta}_{n}$, besides, $a_{k}$ and $\psi_{k}$ ae  defined by $(\ref{be26})$ and $(\ref{6.23})$ respectively.		
\end{lemma}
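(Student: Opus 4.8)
The plan is to derive a recursion for the weighted error quantity $V_{n+1} = \tilde\theta_{n+1}^\top P_{n+1}^{-1}\tilde\theta_{n+1}$ by substituting the Step~2 update $(\ref{be2})$, and then to control the resulting terms using the martingale structure of $\{w_{k+1}\}$ together with Lemma~\ref{lem51}. First I would write $\hat\theta_{k+1} = \Pi_{P_{k+1}^{-1}}\{\hat\theta_k + a_k\beta_k P_k\phi_k(w_{k+1} + \psi_k)\}$, using the decomposition $\mathcal H_k(\bar\theta_k) - G_k(\phi_k^\top\bar\theta_k,\phi_k^\top\hat\theta_k) = w_{k+1} + \psi_k$ from $(\ref{6.2})$. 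Since the projection $\Pi_{P_{k+1}^{-1}}$ is a contraction in the $\|\cdot\|_{P_{k+1}^{-1}}$ norm and the true value $\theta$ lies in the feasible set $\{\|y\|\le 2D\}$ (as $\|\theta\|\le D$ by Assumption~\ref{assum2}), projecting can only decrease the weighted error, so it suffices to bound the pre-projection quantity. Using the matrix update $P_{k+1}^{-1} = P_k^{-1} + a_k\beta_k^2 \phi_k\phi_k^\top$ (which follows from the $P_{k+1}$ recursion by the matrix inversion lemma) and expanding, I expect to get an identity roughly of the form
\begin{equation*}
V_{k+1} \le V_k - a_k\psi_k^2\cdot(\text{positive factor}) - 2a_k\beta_k\psi_k\phi_k^\top\tilde\theta_k + (\text{cross terms in }w_{k+1}) + a_k\beta_k^2(w_{k+1}+\psi_k)^2\phi_k^\top P_k\phi_k.
\end{equation*}
The key algebraic point is that the choice of $a_k = (\hat\mu_k + \beta_k^2\phi_k^\top P_k\phi_k)^{-1}$ and the mean-value-type definition of $\beta_k$ are engineered precisely so that the deterministic ``signal'' terms combine to leave $-c\sum_k a_k\psi_k^2$ on the left while the factor multiplying $\psi_k^2$ stays bounded below by a positive constant; here I would invoke Lemma~\ref{lem3.1} (the uniform positivity of $\partial G_k/\partial x$ on bounded sets) to guarantee $\hat\mu_k$, $\beta_k$ are uniformly positive and bounded, given that $\bar\theta_k,\hat\theta_k$ stay bounded by $2D$ by construction of the projections.

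Next I would sum the recursion from $k=1$ to $n$. The martingale terms $\sum_k a_k\beta_k\phi_k^\top\tilde\theta_k w_{k+1}$ and $\sum_k a_k\beta_k^2\phi_k^\top P_k\phi_k(\text{linear in }w_{k+1})$ form martingale difference sums; I would bound them by the martingale convergence/stability theorem (e.g. the standard estimate that $\sum_k X_{k+1}$ with $\mathbb E[X_{k+1}|\mathcal F_k]=0$ converges or grows like $o$ of its conditional variance plus a $\log$ factor — Lai--Wei type lemma). The quadratic noise term $\sum_k a_k\beta_k^2\phi_k^\top P_k\phi_k\, w_{k+1}^2$ has conditional expectation bounded by $C_2\sum_k a_k\beta_k^2\phi_k^\top P_k\phi_k$, and since $a_k\beta_k^2\phi_k^\top P_k\phi_k \le \mathrm{tr}(P_k^{-1}(P_k - P_{k+1}))$ up to constants, a telescoping/determinant argument (the classical $\sum_k \phi_k^\top P_k\phi_k \cdot a_k = O(\log|P_{n+1}^{-1}|) = O(\log n)$ bound, using $\|\phi_k\|\le M$ from Assumption~\ref{assum5}) gives the $O(\log n)$ rate. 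Collecting everything yields $V_{n+1} + c\sum_{k=1}^n a_k\psi_k^2 = O(\log n)$ a.s., which is the claim after absorbing the constant $c$.

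The main obstacle will be the bookkeeping in the cross terms: showing that the deterministic part $-2a_k\beta_k\psi_k\phi_k^\top\tilde\theta_k$ is exactly (or up to a controllable remainder) cancelled/dominated by the $-a_k\psi_k^2$ and $-a_k\psi_k^2\phi_k^\top P_k\phi_k\beta_k^2$ contributions, so that no uncontrolled $\sum a_k\psi_k|\phi_k^\top\tilde\theta_k|$ term survives. This is where the specific form of $\beta_k$ in $(\ref{be2})$ — built as the divided difference of $G_k$ plus $\partial_x G_k$ — matters: it ensures $\psi_k = \beta_k\,\phi_k^\top\tilde\theta_k + (\text{second-order remainder})$, so that $\psi_k$ and $\phi_k^\top\tilde\theta_k$ are essentially proportional with the right sign, turning the cross term into another negative multiple of $\psi_k^2$. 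Handling the second-order remainder requires the boundedness of the second derivatives of $G_k$ from $(\ref{355})$ in Lemma~\ref{lem3.1} and the already-established boundedness of all estimates; I would expect these remainders to be summable against $a_k$ or absorbable into the $O(\log n)$ term. A secondary technical point is verifying that the presence of $\hat\mu_k$ (rather than $1$) in $a_k$ does not spoil the telescoping determinant estimate, which follows since $\hat\mu_k$ is bounded above and below away from zero.
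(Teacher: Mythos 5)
Your overall skeleton (Lyapunov function $V_k=\tilde\theta_k^\top P_k^{-1}\tilde\theta_k$, projection contraction via Lemma~\ref{lem5}, rank-one update of $P_{k+1}^{-1}$, martingale estimates for the noise cross terms, and the determinant/telescoping bound of Lemma~\ref{lem3} for the quadratic noise term) matches the paper's proof. But there is a genuine gap in how you propose to handle the residual left over after the cancellation between $-a_k\psi_k^2$ and the cross term $-2a_k\beta_k\psi_k\phi_k^\top\tilde\theta_k$. You assert that $\psi_k=\beta_k\,\phi_k^\top\tilde\theta_k+(\text{second-order remainder})$ and that this remainder can be absorbed using the second-derivative bounds $(\ref{355})$ and the boundedness of the estimates. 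The exact identity, which follows from the divided-difference definition of $\beta_k$ and $G_k(\phi_k^\top\bar\theta_k,\phi_k^\top\bar\theta_k)=0$, is
\begin{equation*}
\psi_k-\beta_k\phi_k^\top\tilde\theta_k=\bar\psi_k-\beta_k\phi_k^\top\tilde{\bar\theta}_k,
\end{equation*}
and the right-hand side is of order $|\phi_k^\top\tilde{\bar\theta}_k|$, i.e.\ first order in the \emph{Step-1} estimation error $\tilde{\bar\theta}_k=\theta-\bar\theta_k$. Boundedness of the estimates only makes each such term $O(1)$, so the contribution $\sum_{k=0}^n\hat\mu_k^{-1}(\bar\psi_k-\beta_k\phi_k^\top\tilde{\bar\theta}_k)^2$ to the summed recursion would a priori be $O(n)$, not $O(\log n)$; smoothness of $G_k$ does not rescue this.

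The missing ingredient is a second, parallel Lyapunov analysis of \emph{Step 1} of Algorithm~\ref{alg2}: setting $\bar V_{n+1}=\tilde{\bar\theta}_{n+1}^\top\bar P_{n+1}^{-1}\tilde{\bar\theta}_{n+1}$ and exploiting $\bar\psi_k^2\ge\underline g_k^2(\phi_k^\top\tilde{\bar\theta}_k)^2\ge\bar\beta_k^2(\phi_k^\top\tilde{\bar\theta}_k)^2$ (which is exactly what the conservative choice of $\bar\beta_k$ in $(\ref{be1})$ is for), the paper shows $\bar V_{n+1}+\sum_{k=0}^n\frac12\bar a_k\bar\beta_k^2(\phi_k^\top\tilde{\bar\theta}_k)^2=O(\log n)$ a.s.; since $\bar a_k,\bar\beta_k$ are bounded away from zero this gives $\sum_{k=0}^n(\phi_k^\top\tilde{\bar\theta}_k)^2=O(\log n)$, which is precisely what controls the residual. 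This dependence of the Step-2 analysis on the Step-1 error bound is the structural heart of the two-step design, and your proposal does not invoke Step 1 at all. Two minor further corrections: the inverse update is $P_{k+1}^{-1}=P_k^{-1}+\hat\mu_k^{-1}\beta_k^2\phi_k\phi_k^\top$ (not $a_k\beta_k^2$), which is what makes the identity $a_k\phi_k^\top P_kP_{k+1}^{-1}=\hat\mu_k^{-1}\phi_k^\top$ and hence the clean coefficient $-a_k\psi_k^2$ come out; and the martingale cross terms are bounded as $o$ of the corresponding square sums (which are themselves shown to be $O(\log n)$), rather than contributing an independent $\log$ factor.
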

			
{\bf Proof of Theorem $\ref{thm6}$.}
From $(\ref{be26})$, we have
\begin{equation}\label{P00}
\begin{aligned}
P_{n+1}^{-1}=P_{n}^{-1}+\beta_{n}\phi_{n}\phi_{n}^{\top}
=P_{0}^{-1}+\sum_{k=0}^{n}\beta_{k}\phi_{k}\phi_{k}^{\top}.
\end{aligned}
\end{equation}
By (\ref{P00}) and (\ref{be24}), we obtain 
	\begin{equation}
		\tilde{\theta}_{n+1}^{\top}P_{n+1}^{-1}\tilde{\theta}_{n+1}
		\geq \min\left\{1, \inf\limits_{k\geq 0}\underline{g}_{k}\right\} \cdot\lambda_{\min}\left\{\sum_{k=0}^{n}\phi_{k}\phi_{k}^{\top}+P_{0}^{-1}\right\}\|\tilde{\theta}_{n+1}\|^{2}, 
	\end{equation}
where $\min\left\{1, \inf\limits_{k\geq 0}\underline{g}_{k}\right\}$ is positive by Lemma $\ref{lem3.1}$. Hence,
Theorem $\ref{thm6}$ follows immediately from Lemma \ref{lem14}.
		
{\bf Proof of Theorem $\ref{thm2}$.}
For simplicity of expression, denote
\begin{equation} \label{ggggg}
	G'_{k}(x)=\frac{\partial G_{k}(\phi_{k}^{\top}\theta,x)}{\partial x},\;\;\;\bar{G}'_{k}(x)=\frac{\partial G_{k}(\phi_{k}^{\top}\bar{\theta}_{k},x)}{\partial x}.
\end{equation}
Besides, let 
\begin{equation}\label{46}
	\mu_{k}=\mathbb{E}\left[w_{k+1}^{2}\mid \mathcal{F}_{k}\right].
\end{equation}	
By the definition of $w_{k+1}$ and $\mu_{k}$ in $(\ref{6.245})$ and $(\ref{46})$ respectively, we have
\begin{equation}\label{4333}
	\begin{aligned}
		\mu_{k}=&\mathbb{E}\left[\mathcal{H}_{k}^{2}(\bar{\theta}_{k})\mid \mathcal{F}_{k}\right]-\left[G_{k}(\phi_{k}^{\top}\bar{\theta}_{k}, \phi_{k}^{\top}\theta)\right]^{2}\\
		=&\mathbb{E}\left[\mathcal{H}_{k}^{2}(\theta)\mid \mathcal{F}_{k}\right]+\mathbb{E}\left[\mathcal{H}_{k}^{2}(\bar{\theta}_{k})-\mathcal{H}_{k}^{2}(\theta)\mid \mathcal{F}_{k}\right]\\
		&+\left[G_{k}(\phi_{k}^{\top}\theta, \phi_{k}^{\top}\theta)\right]^{2}-\left[G_{k}(\phi_{k}^{\top}\bar{\theta}_{k}, \phi_{k}^{\top}\theta)\right]^{2},\\
	\end{aligned}
\end{equation}
where we have used the fact that $G_{k}(\phi_{k}^{\top}\theta, \phi_{k}^{\top}\theta)=0.$ Notice that the probability density function $f(\cdot)$ of the normal distribution satisfies the property  $f'(x)=-\frac{x}{\sigma^{2}}f(x)$. Moreover, by the definitions of $\mathcal{H}_{k}(\cdot)$ and $G_{k}(\cdot)$ in $(\ref{344})$ and $(\ref{3.6})$ respectively, we can verify that
$\mathbb{E}\left[\mathcal{H}_{k}^{2}(\theta)\mid \mathcal{F}_{k}\right]=G'_{k}(\phi_{k}^{\top}\theta).$
Thus, by $(\ref{4333})$ and Lemma \ref{lem3.1}, we can obtain
\begin{equation}\label{433}
	\left|\mu_{k}-G'_{k}(\phi_{k}^{\top}\theta)\right|=O\left(\left|\phi_{k}^{\top}\tilde{\bar{\theta}}_{k}\right|\right),\;\;k\rightarrow \infty.
\end{equation}
Following Theorem \ref{thm6} and the condition $(\ref{phi})$, we have as $k\rightarrow \infty$,
\begin{equation}\label{new}
|\phi_{k}^{\top}\tilde{\bar{\theta}}_{k}|=o(1),\;\;|\phi_{k}^{\top}\tilde{\theta}_{k}|=o(1),\;\; a.s.
\end{equation}
Besides, from the property $(\ref{iinf})$ of function $G_{k}(\cdot, \cdot)$, we have that $G'_{k}(\cdot)$ has a positive lower bound. Therefore, by $(\ref{2.9})$ and $(\ref{433})$, we can obtain
\begin{equation}\label{4300}
	\lim_{k\rightarrow \infty}\mu_{k}/\lambda_{k}=\lim_{k\rightarrow \infty}\mu_{k}/G'_{k}(\phi_{k}^{\top}\theta)= 1,\;\;a.s.
\end{equation}
According to $(\ref{4300})$ and following the same analysis as $[$Theorem 3, \cite{ZZtsqn}$]$, we obtain
\begin{equation}\label{511}
	\Delta_{n}^{-\frac{1}{2}}P_{n}^{-1}\tilde{\theta}_{n}\mathop{\rightarrow}\limits^{d} N(0, I_{m}).
\end{equation}
Moreover, from the definition of $\beta_{k}$ and $(\ref{new})$, we obtain 
	\begin{equation}\label{xxx}
		|\beta_{k}-\lambda_{k}|=|\beta_{k}-G'_{k}(\phi_{k}^{\top}\theta)|=O(|\phi_{k}^{\top}\tilde{\theta}_{k}|+|\phi_{k}^{\top}\tilde{\bar{\theta}}_{k}|)=o(1),\;\;\;k\rightarrow \infty.
			\end{equation}
	Additionally, we have
\begin{equation}\label{xx}
			\begin{aligned}
				\left\|\Delta_{n}^{-\frac{1}{2}}\left(P_{n}^{-1}-\Lambda_{n}^{-1}\right)\Delta_{n}^{-\frac{1}{2}}\right\|
				\leq & \left\|\Delta_{n}^{-\frac{1}{2}}(P_{0}^{-1}+\sum_{k=1}^{m}\phi_{k}\phi_{k}^{\top}|\beta_{k}-\lambda_{k}|)\Delta_{n}^{-\frac{1}{2}}\right\|\\
				&+\left\|\Delta_{n}^{-\frac{1}{2}}(\sum_{k=m+1}^{n}\lambda_{k}\phi_{k}\phi_{k}^{\top})\Delta_{n}^{-\frac{1}{2}}\right\|\cdot \sup_{m+1 \leq k \leq n}\left|\frac{\beta_{k}-\lambda_{k}}{\lambda_{k}}\right|,\;\;a.s.
			\end{aligned}
		\end{equation}
		Letting $n\rightarrow \infty$ and $m\rightarrow \infty$,  it follows from 
$(\ref{R1})$ and $(\ref{xxx})$ that the right-hand side of $(\ref{xx})$ converges to 0 almost surely. Thus, by $(\ref{R1})$ and $(\ref{xx})$, we obtain that
\begin{equation}\label{de23}
\lim_{n\rightarrow \infty}\Delta_{n}^{-\frac{1}{2}}P_{n}^{-1}\Delta_{n}^{-\frac{1}{2}}=I_{m},\;\;a.s.
\end{equation}
Finally, combining $(\ref{511})$ and $(\ref{de23})$, we conclude that
\begin{equation}
	\Delta_{n}^{\frac{1}{2}}\tilde{\theta}_{n}\mathop{\rightarrow}\limits^{d} N(0, I_{m}),\;\;n\rightarrow \infty,
\end{equation}
which completes the proof of Theorem $\ref{thm2}$.

To prove Theorem \ref{thm1}, we present the following lemma.
\begin{lemma}\label{lem6}
	Let Assumptions \ref{assum2}-\ref{assum4} hold, then the estimates $\hat{\theta}_{n+1}$ have the following property for any given $r\geq 1$:
	\begin{equation}\nonumber
			\mathbb{E}\left[\tilde{\theta}_{n+1}^{\top}P_{n+1}^{-1}\tilde{\theta}_{n+1}\right]^{r}+\mathbb{E}\left[\sum_{k=1}^{n}a_{k}\psi_{k}^{2}\right]^{r}
			=O\left(\log^{r} n\right),\;\; n \rightarrow \infty,		
	\end{equation}	
	where $\tilde{\theta}_{n+1}=\theta-\hat{\theta}_{n+1}$, besides, $a_{k}$ and $\psi_{k}$ are  defined by $(\ref{be26})$ and $(\ref{6.23})$, respectively.		
\end{lemma}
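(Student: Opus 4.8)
The plan is to re-run the stochastic Lyapunov argument behind Lemma~\ref{lem14} in $\mathcal{L}_r$, replacing each almost-sure martingale estimate by its moment counterpart (in the spirit of the $\mathcal{M}_p$-class analysis of \cite{guo1993}) and closing the resulting recursion by a Young-type absorption. Set $T_{n+1}=\tilde\theta_{n+1}^\top P_{n+1}^{-1}\tilde\theta_{n+1}+\sum_{k=0}^{n}a_k\psi_k^2$; since $a_k>0$ and $P_{n+1}^{-1}>0$, both summands are nonnegative, and because of the projection operators the quantities $\tilde\theta_k,\tilde{\bar\theta}_k,\psi_k,\bar\psi_k,\beta_k,a_k,\hat\mu_k,\hat\mu_k^{-1}$ are all uniformly bounded (the last two by Lemma~\ref{lem3.1}, since $|\phi_k^\top\bar\theta_k|,|\phi_k^\top\hat\theta_k|\le 2DM$), so $\mathbb{E}[T_{n+1}^{r}]<\infty$ for every fixed $n$, which makes the absorption step below legitimate. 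Starting from $(\ref{su})$ I would write $T_{n+1}\le V_0+R_n+M_n$, where $R_n$ collects the predictable terms $\sum_k\hat\mu_k^{-1}(\bar\psi_k-\beta_k\phi_k^\top\tilde{\bar\theta}_k)^2$ and $\sum_k\hat\mu_k^{-1}a_k\beta_k^2\phi_k^\top P_k\phi_k\mathbb{E}_k[w_{k+1}^2]$, and $M_n$ the three martingale transforms in $(\ref{su})$; by $(a+b+c)^{r}\le C_r(a^{r}+b^{r}+c^{r})$ it suffices to bound $\mathbb{E}[R_n^{r}]$ and $\mathbb{E}[M_n^{r}]$ by $O(\log^{r} n)$ plus a small multiple of $\mathbb{E}[T_{n+1}^{r}]$.

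For $\mathbb{E}[R_n^{r}]$ I would first establish the $\mathcal{L}_r$ analogue of $(\ref{4777})$, namely $\mathbb{E}[\bar V_{n+1}^{r}]+\mathbb{E}\big[\big(\sum_{k=0}^{n}\bar a_k\bar\beta_k^2(\phi_k^\top\tilde{\bar\theta}_k)^2\big)^{r}\big]=O(\log^{r} n)$, by running this same scheme — Lyapunov recursion, moment martingale estimate, Young absorption — on the first-step recursion $(\ref{suu})$: the gains $\bar a_k,\bar\beta_k$ are bounded above and below by construction, the coercivity bound $(\ref{com})$ puts $\tfrac12\bar a_k\bar\beta_k^2(\phi_k^\top\tilde{\bar\theta}_k)^2$ on the left so the recursion is self-closing, and $\sum_k\bar a_k\bar\beta_k^2\phi_k^\top\bar P_k\phi_k\mathbb{E}_k[w_{k+1}^2]$ is surely $O(\log n)$ by Lemma~\ref{lem3} and Lemma~\ref{lem51}. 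Granting this, $\sum_k\hat\mu_k^{-1}(\bar\psi_k-\beta_k\phi_k^\top\tilde{\bar\theta}_k)^2=O\big(\sum_k\overline{g}_k^{2}(\phi_k^\top\tilde{\bar\theta}_k)^2\big)$ has $r$-th moment $O(\log^{r} n)$, while $\sum_k\hat\mu_k^{-1}a_k\beta_k^2\phi_k^\top P_k\phi_k\mathbb{E}_k[w_{k+1}^2]$ is surely $O(\log n)$ (Lemma~\ref{lem3} and Lemma~\ref{lem51}, using that $\hat\mu_k^{-1}$ and $a_k\beta_k^2\phi_k^\top P_k\phi_k$ are bounded), hence $\mathbb{E}[R_n^{r}]=O(\log^{r} n)$.

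For $\mathbb{E}[M_n^{r}]$ I would invoke a moment martingale inequality together with the conditional-moment bounds of Lemma~\ref{lem51}: for the cross term this should give, up to lower-order logarithmic factors, $\mathbb{E}\big|\sum_k a_k\psi_k w_{k+1}\big|^{r}\le C_r\,\mathbb{E}\big(\sum_k a_k^2\psi_k^2\,\mathbb{E}_k[w_{k+1}^2]\big)^{r/2}\le C_r'\,\mathbb{E}\big(\sum_k a_k\psi_k^2\big)^{r/2}$ (using $a_k\mathbb{E}_k[w_{k+1}^2]\le \hat\mu_k^{-1}C_2$ bounded), and then Young's inequality $x^{r/2}\le\tfrac12\varepsilon x^{r}+\tfrac1{2\varepsilon}$ turns this into $\tfrac12\varepsilon\,\mathbb{E}\big(\sum_k a_k\psi_k^2\big)^{r}+O(1)\le\tfrac12\varepsilon\,\mathbb{E}[T_{n+1}^{r}]+O(1)$; the remaining martingale terms $\sum_k\hat\mu_k^{-1}(\bar\psi_k-\beta_k\phi_k^\top\tilde{\bar\theta}_k)w_{k+1}$ and $\sum_k\hat\mu_k^{-1}a_k\beta_k^2\phi_k^\top P_k\phi_k(w_{k+1}^2-\mathbb{E}_k[w_{k+1}^2])$ only feed a deterministic $O(\log^{r} n)$, using the first-step bound, Lemma~\ref{lem3}, and the higher conditional moments from Lemma~\ref{lem51}. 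Collecting everything, $\mathbb{E}[T_{n+1}^{r}]\le C_r\varepsilon\,\mathbb{E}[T_{n+1}^{r}]+O(\log^{r} n)$; picking $\varepsilon$ with $C_r\varepsilon<\tfrac12$ and using $\mathbb{E}[T_{n+1}^{r}]<\infty$ yields $\mathbb{E}[T_{n+1}^{r}]=O(\log^{r} n)$, which is the claim. The main obstacle is exactly this self-referential cross term: unlike the other noise terms it cannot be absorbed into a deterministic $O(\log n)$ but must be bounded by a fractional power of $\sum_k a_k\psi_k^2$ itself and then reabsorbed into the left-hand side, so the sharp moment martingale estimate, the uniform boundedness of the gains (Lemma~\ref{lem3.1}), the a priori finiteness of $\mathbb{E}[T_{n+1}^{r}]$, and the first-step $\mathcal{L}_r$ estimate must all be available simultaneously.
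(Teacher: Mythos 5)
Your proposal is correct and reproduces the skeleton of the paper's own argument --- the Lyapunov recursions $(\ref{su})$ and $(\ref{suu})$, the first-step $\mathcal{L}_r$ estimate feeding the predictable term $\sum_k\hat{\mu}_k^{-1}(\bar{\psi}_k-\beta_k\phi_k^{\top}\tilde{\bar{\theta}}_k)^2$, and the deterministic $O(\log n)$ bound on $\sum_k\hat{\mu}_k^{-1}a_k\beta_k^2\phi_k^{\top}P_k\phi_k\mathbb{E}_k[w_{k+1}^2]$ via Lemmas \ref{lem3} and \ref{lem51} --- but it closes the two delicate steps by genuinely different devices. Where you invoke a Rosenthal-type moment inequality to pass directly from $\mathbb{E}\bigl|\sum_k a_k\psi_k w_{k+1}\bigr|^r$ to $C_r\,\mathbb{E}\bigl(\sum_k a_k^2\psi_k^2\mathbb{E}_k[w_{k+1}^2]\bigr)^{r/2}$ plus lower-order terms, the paper manufactures that bound from the Burkholder inequality of Lemma \ref{lem1} alone, via the iterated centering $W_{k+1}^{(t)}=(W_{k+1}^{(t-1)})^2-\mathbb{E}_k[(W_{k+1}^{(t-1)})^2]$ in $(\ref{5.13})$--$(\ref{5.19})$, halving the exponent at each stage until it drops below one; your ``up to lower-order logarithmic factors'' is precisely where that machinery, or equivalently the max-jump term of Rosenthal (here $O(\sum_k\mathbb{E}[a_k\psi_k^2])=O(\log n)$ by boundedness of $a_k\psi_k$), must be accounted for when $r>2$. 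And where you close the self-referential cross term by Young absorption $x^{r/2}\le\tfrac{\varepsilon}{2}x^{r}+\tfrac{1}{2\varepsilon}$ into $\mathbb{E}[T_{n+1}^{r}]$ --- correctly flagging that this requires the a priori finiteness of $\mathbb{E}[T_{n+1}^{r}]$, which the projection and $\|P_{n+1}^{-1}\|=O(n)$ guarantee --- the paper instead runs an induction on the exponent, assuming $(\ref{5.21})$ for all $s<r$ to obtain $(\ref{5.22})$ and then reducing the $r$-th moment to quantities involving only exponents at most $r/2$. Both mechanisms are legitimate: the absorption route is shorter and makes explicit why boundedness of the gains matters, while the paper's induction avoids the $\varepsilon$-bookkeeping and records the sharper intermediate rate $O(\log^{r/2}n)$ for the martingale contribution. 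If you write your version out in full, state precisely which moment inequality you use for $r>2$, and note that the uniform positive lower bounds on $\bar{a}_k$ and $\bar{\beta}_k$ needed to make the first-step coercivity $(\ref{com})$ self-closing follow from $\phi_k^{\top}\bar{P}_k\phi_k\le M^{2}\lambda_{\max}\{\bar{P}_0\}$ and Lemma \ref{lem3.1}.
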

The proof of Lemma \ref{lem6} is supplied in Appendix \ref{BB}. 
	
{\bf Proof of Theorem $\ref{thm1}$.}
 By (\ref{2.9}) and Lemma \ref{lem3.1}, we have $\{\lambda_{k}, k\geq 0\}$ has a positive lower bound. Thus,
\begin{equation}\label{53}
	\lambda_{\min}\{\mathbb{E}\left[\Lambda_{n+1}^{-1}\right]\}\geq c \;\lambda_{\min}(n),
\end{equation}	
where $c$ is a constant, $\lambda_{\min}(n)$ is defined in Assumption \ref{assum5}. Besides, From $(\ref{288})$, we have
\begin{equation}\label{54}
	\begin{aligned}
		&\|\tilde{\theta}_{n+1}^{\top}(\Lambda_{n+1}^{-1}-\mathbb{E}[\Lambda_{n+1}^{-1}])\tilde{\theta}_{n+1}\|_{r}\\
		=&O\left(\|\tilde{\theta}_{n+1}\|_{2r}\|\Lambda_{n+1}^{-1}-\mathbb{E}[\Lambda_{n+1}^{-1}]\|_{2r}\right)\\
		=&O\left(\sqrt{\lambda_{\min}(n)}\|\tilde{\theta}_{n+1}\|_{2r}\right),
	\end{aligned}
\end{equation}
where we have used the fact that $\|\tilde{\theta}_{n+1}\|$ is bounded, and $\Lambda_{n+1}^{-1}$ is defined as in Theorem \ref{thm2}. Let $x_{n}=\lambda_{\min}(n)\|\tilde{\theta}_{n+1}\|_{2r}^{2},$ from (\ref{53}), (\ref{54}) and Lemma \ref{lem6}, we have
\begin{equation}\nonumber
	\begin{aligned}
		x_{n}=&O\left(\|\tilde{\theta}_{n+1}^{\top}\mathbb{E}[\Lambda_{n+1}^{-1}]\tilde{\theta}_{n+1}\|_{r}\right)\\
		=&O\left(\|\tilde{\theta}_{n+1}^{\top}\Lambda_{n+1}^{-1}\tilde{\theta}_{n+1}\|_{r}\right)
		+O\left(\|\tilde{\theta}_{n+1}^{\top}\left(\Lambda_{n+1}^{-1}-\mathbb{E}[\Lambda_{n+1}^{-1}]\right)\tilde{\theta}_{n+1}\|_{r}\right)\\
		=&O(\log n)+O\left(\sqrt{x_{n}}\right).
	\end{aligned}
\end{equation}
Therefore, we have 
\begin{equation}\label{55}
x_{n}=\lambda_{\min}(n)\|\tilde{\theta}_{n+1}\|_{2r}^{2}=O(\log n). 
\end{equation}
The proof of Theorem $\ref{thm1}$ is completed by (\ref{55}) and (\ref{delta4}) .

{\bf Proof of Theorem $\ref{thm3}$.}
We first denote
\begin{equation}\label{underg}
\begin{aligned}
	\underline{g}&=\inf_{ |y|,|x| \leq DM}\inf _{k\geq 0}\left\{\frac{\partial G_{k}(y,x)}{\partial x}\right\},\\
	\overline{g}&=\sup_{ |y|,|x| \leq DM}\sup _{k\geq 0}\left\{\frac{\partial G_{k}(y,x)}{\partial x}\right\},
	\end{aligned}
\end{equation}
where $D$ is defined by Assumption $\ref{assum2}$, and 
\begin{equation}\label{mm}
M=\sup\limits_{k\geq 0}\|\phi_{k}\|.
\end{equation}
 From Lemma \ref{lem3.1}, we have $\underline{g}>0$ and $\overline{g}<\infty$. By (\ref{53}) and the condition $(\ref{delta4})$, we have 
\begin{equation}\label{de}
	\|\Delta_{n+1}^{-1}\|=\left\|\mathbb{E}\left[\Lambda_{n+1}^{-1}\right]\right\|=O\left(n^{-\delta}\right),\;\;n\rightarrow \infty,
\end{equation}
where $\frac{1}{2}<\delta\leq 1$. We first prove that
\begin{equation}\label{qp}
	\lim_{n\rightarrow \infty}\mathbb{E}\left[\Delta_{n+1}^{-\frac{1}{2}}P_{n+1}^{-1}\tilde{\theta}_{n+1}\tilde{\theta}_{n+1}^{\top}P_{n+1}^{-1}\Delta_{n+1}^{-\frac{1}{2}}\right]=I_{m}.
\end{equation}	

For each $n\geq 0$, denote
\begin{equation}\label{sos}
	s_{n}=\hat{\theta}_{n}+P_{n+1}\phi_{n}\left[\mathcal{H}_{n}(\bar{\theta}_{n})-G_{n}(\phi_{n}^{\top}\bar{\theta}_{n}, \phi_{n}^{\top}\hat{\theta}_{n})\right].
\end{equation}
From $(\ref{sos})$ and $(\ref{be2})$, we have
\begin{equation}\label{833}
	\begin{aligned}
		\theta-s_{n}
		=&\tilde{\theta}_{n}-P_{n+1}\phi_{n}(\psi_{n}+w_{n+1})\\
		=&(I_{m}-\xi_{n}P_{n+1}\phi_{n}\phi_{n}^{\top})\tilde{\theta}_{n}
		-P_{n+1}\phi_{n}w_{n+1},
	\end{aligned}
\end{equation}	
where
\begin{equation}\nonumber
\begin{aligned}
	 \xi_{n}=
	&\frac{G_{n}(\phi_{n}^{\top}\bar{\theta}_{n}, \phi_{n}^{\top}\theta)-G_{n}(\phi_{n}^{\top}\bar{\theta}_{n}, \phi_{n}^{\top}\hat{\theta}_{n})}{\phi_{n}^{\top}\theta-\phi_{n}^{\top}\hat{\theta}_{n}}I\left(\phi_{n}^{\top}\hat{\theta}_{n}\not=\phi_{n}^{\top}\theta\right)
	+\bar{G}_{n}'(\phi_{n}^{\top}\hat{\theta}_{n})I\left(\phi_{n}^{\top}\hat{\theta}_{n}=\phi_{n}^{\top}\theta\right),
	\end{aligned}
\end{equation}
$w_{n+1}$ and $\psi_{n}$ are defined in $(\ref{6.245})$ and $(\ref{6.23})$, $\bar{G}_{n}'(\cdot)$ is defined in $(\ref{ggggg})$.
Besides, from $(\ref{be2})$, we have
	\begin{equation}\label{855}
		\hat{\theta}_{n+1}=s_{n}I\left(s_{n}\in \Theta \right)+\Pi_{P_{n+1}^{-1}}\{s_{n}\}\cdot I\left(s_{n} \notin \Theta \right)
	\end{equation}
Thus, by $(\ref{be2})$, $(\ref{833})$, and $(\ref{be26})$, we obtain that
\begin{equation}\label{Pr}
	\begin{aligned}
		\tilde{\theta}_{n+1}	=&\theta-s_{n}+\left(s_{n}-\Pi_{P_{n+1}^{-1}}\{s_{n}\}\right)I\left(s_{n}\notin \Theta\right)\\		
		=&(I-\beta_{n}P_{n+1}\phi_{n}\phi_{n}^{\top})\tilde{\theta}_{n}-P_{n+1}\phi_{n}w_{n+1}
		-(\xi_{n}-\beta_{n})P_{n+1}\phi_{n}\phi_{n}^{\top}\tilde{\theta}_{n}\\
		&+(s_{n}-\Pi_{P_{n+1}^{-1}}\{s_{n}\})I\left(s_{n}\notin \Theta\right)\\
		=&P_{n+1}P_{n}^{-1}\tilde{\theta}_{n}-P_{n+1}\phi_{n}w_{n+1}
		-P_{n+1}(\xi_{n}-\beta_{n})\phi_{n}\phi_{n}^{\top}\tilde{\theta}_{n}\\
		&+(s_{n}-\Pi_{P_{n+1}^{-1}}\{s_{n}\})I\left(s_{n}\notin \Theta\right).
	\end{aligned}
\end{equation}
Multiplying both side of $(\ref{Pr})$ by $P_{n+1}$ and proceeding with the recursion, we will have
\begin{equation}\label{888}
	\begin{aligned}
		P_{n+1}^{-1}\tilde{\theta}_{n+1}=&P_{0}^{-1}\tilde{\theta}_{0}-\sum_{k=0}^{n}\phi_{k}w_{k+1}
		-\sum_{k=0}^{n}(\xi_{k}-\beta_{k})\phi_{k}\phi_{k}^{\top}\tilde{\theta}_{k}\\
		&+\sum_{k=0}^{n}P_{k+1}^{-1}\left(s_{k}-\Pi_{P_{k+1}^{-1}}\{s_{k}\}\right)I\left(s_{k}\notin \Theta\right)
	\end{aligned}	
\end{equation}
Thus, we have
\begin{equation}\label{qqq}
	\begin{aligned}
		\mathbb{E}\left[\Delta_{n+1}^{-\frac{1}{2}}P_{n+1}^{-1}\tilde{\theta}_{n+1}\tilde{\theta}_{n+1}^{\top}P_{n+1}^{-1}\Delta_{n+1}^{-\frac{1}{2}}\right]	=\sum_{k=1}^{4}\sum_{t=1}^{4}\mathbb{E}\left[\Delta_{n+1}^{-\frac{1}{2}}A_{k}A_{t}^{\top}\Delta_{n+1}^{-\frac{1}{2}}\right],
	\end{aligned}
\end{equation}	
\begin{equation}\label{a444}
	\begin{aligned}
		A_{1}&=\sum_{k=0}^{n}\phi_{k}w_{k+1},\;\;A_{2}=\tilde{\theta}_{0},\\
		A_{3}&=\sum_{k=0}^{n}(\xi_{k}-\beta_{k})\phi_{k}\phi_{k}^{\top}\tilde{\theta}_{k},\\
		A_{4}&=\sum_{k=0}^{n}P_{k+1}^{-1}\left(s_{k}-\Pi_{P_{k+1}^{-1}}\{s_{k}\}\right)I\left(s_{k}\notin \Theta\right).
	\end{aligned}
\end{equation}

We now analyze the RHS of $(\ref{qqq})$ term by term. Firstly, 
\begin{equation}\label{900}
	\begin{aligned}
		&\mathbb{E}\left[\Delta_{n+1}^{-\frac{1}{2}}A_{1}A_{1}^{\top}\Delta_{n+1}^{-\frac{1}{2}}\right]\\
		=&\mathbb{E}\left[\Delta_{n+1}^{-\frac{1}{2}}\left(\sum_{k=0}^{n}\phi_{k}\phi_{k}^{\top}\mathbb{E}\left[w_{k+1}^{2}\mid\mathcal{F}_{k}\right]\right)\Delta_{n+1}^{-\frac{1}{2}}\right]\\
		=&\mathbb{E}\left[\Delta_{n+1}^{-\frac{1}{2}}\left(\sum_{k=0}^{n}\phi_{k}\phi_{k}^{\top}\mathbb{E}\left[w_{k+1}^{2}\mid\mathcal{F}_{k}\right]-\Lambda_{n+1}^{-1}\right)\Delta_{n+1}^{-\frac{1}{2}}\right]\\
		&+\mathbb{E}\left[\Delta_{n+1}^{-\frac{1}{2}}\Lambda_{n+1}^{-1}\Delta_{n+1}^{-\frac{1}{2}}\right]\\
		=&I_{m}+O\left(\frac{1}{n^{\delta}}\mathbb{E}\left\|\sum_{k=0}^{n}\phi_{k}\phi_{k}^{\top}\mathbb{E}[w_{k+1}^{2}\mid\mathcal{F}_{k}]-\Lambda_{n+1}^{-1}\right\|\right),
	\end{aligned}
\end{equation}
where the matrix $\Lambda_{n+1}^{-1}$ is defined in Theorem \ref{thm2}. 
Let $r=1$ in $(\ref{a20})$ and $(\ref{5.26})$, and from the Cauchy-Schwarz inequality, we can obtain 
\begin{equation} \label{111}
\begin{aligned}
\mathbb{E}\left[\sum_{k=0}^{n}\|\phi_{k}^{\top}\tilde{\bar{\theta}}_{k}\|\right]=&O\left(\sqrt{n\log n}\right),\\\mathbb{E}\left[\sum_{k=0}^{n}\|\phi_{k}^{\top}\tilde{\theta}_{k}\|\right]=&O\left(\sqrt{n\log n}\right).
\end{aligned}
\end{equation}
By $(\ref{433})$ and $(\ref{111})$, we have
\begin{equation}\label{488}
\begin{aligned}
	\mathbb{E}\left[\sum_{k=0}^{n}\|\mu_{k}-G_{k}'(\phi_{k}^{\top}\theta)\|\right]&=\mathbb{E}\left[\sum_{k=0}^{n}\|\phi_{k}^{\top}\tilde{\bar{\theta}}_{k}\|\right]=O\left(\sqrt{n\log n}\right),
	\end{aligned}
\end{equation}
where $\mu_{k}$ is defined in (\ref{46}). Thus, from $(\ref{2.9})$, $(\ref{900})$ and $(\ref{488})$, we have
\begin{equation}\label{71}
	\begin{aligned}
		&\frac{1}{n^{\delta}}\mathbb{E}\left\|\sum_{k=0}^{n}\phi_{k}\phi_{k}^{\top}\mathbb{E}[w_{k+1}^{2}\mid\mathcal{F}_{k}]-\Lambda_{n+1}^{-1}\right\|\\
	\leq&\frac{1}{n^{\delta}}\mathbb{E}\left[\sum_{k=0}^{n}\left\|\left(\frac{\mu_{k}}{G_{k}'(\phi_{k}^{\top}\theta)}-1\right)G_{k}'(\phi_{k}^{\top}\theta)\phi_{k}\phi_{k}^{\top}\right\|\right]\\
		=&O\left(\frac{1}{n^{\delta}}\mathbb{E}\left[\sum_{k=0}^{n}\left\|\phi_{k}^{\top}\tilde{\bar{\theta}}_{k}\right\|\right]\right)\\
		=&O\left(\frac{1}{n^{\delta}}\sqrt{n\log n}\right)=o(1), \text{as}\;\;n\rightarrow \infty.
	\end{aligned}
\end{equation}
Thus, by (\ref{900}) and (\ref{71}), we obtain that
\begin{equation}\label{a1}
	\lim_{n\rightarrow \infty}\mathbb{E}\left[\Delta_{n+1}^{-\frac{1}{2}}A_{1}A_{1}^{\top}\Delta_{n+1}^{-\frac{1}{2}}\right]= I_{m}.
\end{equation}

For the second term of the RHS of $(\ref{qqq})$, since $\|\Delta_{n+1}^{-1}\|\rightarrow 0,$ we can easily obtain that 
\begin{equation}\label{a2}
	\lim_{n\rightarrow \infty}\mathbb{E}\left[\Delta_{n+1}^{-\frac{1}{2}}A_{2}A_{2}^{\top}\Delta_{n+1}^{-\frac{1}{2}}\right]= 0.
\end{equation}

For the third term of the RHS of $(\ref{qqq})$, from Lagrange mean value theorem, for any $k \geq 0$, there exist $\iota_{k} \in \mathbb{R}$ and $\kappa_{k} \in \mathbb{R}$, where $\iota_{k}$ is between $\phi_{k}^{\top}\theta$ and $\phi_{k}^{\top}\hat{\theta}_{k}$, $\kappa_{k}$ is between $\phi_{k}^{\top}\bar{\theta}_{k}$ and $\phi_{k}^{\top}\hat{\theta}_{k}$,  such that $\xi_{k}=\bar{G}_{k}'(\iota_{k})$ and $\beta_{k}=\bar{G}_{k}'(\kappa_{k})$, where $\bar{G}_{k}'(\cdot)$ is defined by $(\ref{ggggg})$. Therefore, from Lemma \ref{lem3.1}, we obtain that 
\begin{equation}\label{73}
	\begin{aligned}
		|\xi_{k}-\beta_{k}|&=|\bar{G}_{k}'(\iota_{k})-\bar{G}_{k}'(\kappa_{k})|= O\left(|\iota_{k}-\kappa_{k}|\right)\\
		&= O\left(\max\left(|\phi_{k}^{\top}\tilde{\bar{\theta}}_{k}|,|\phi_{k}^{\top}\tilde{\theta}_{k}|\right)\right).
	\end{aligned}
\end{equation}
Since $\hat{\mu}_{k}, \beta_{k}, \phi_{k}$ are bounded, by $(\ref{a20})$ and $(\ref{5.26})$, we have
\begin{equation}\label{1000}
	\begin{aligned}
		&\mathbb{E}\left\|\Delta_{n+1}^{-\frac{1}{2}}A_{3}A_{3}^{\top}\Delta_{n+1}^{-\frac{1}{2}}\right\|\\
		= &O\left(\frac{1}{n^{\delta}}\mathbb{E}\left[\left(\sum_{k=0}^{n}\left(\phi_{k}^{\top}\tilde{\theta}_{k}\right)^{2}+\sum_{k=0}^{n}\left(\phi_{k}^{\top}\tilde{\bar{\theta}}_{k}\right)^{2}\right)^{2}\right]\right)\\
		=&O\left(\frac{\log^{2}n}{n^{\delta}}\right)\rightarrow 0,\;\;n \rightarrow \infty.
	\end{aligned}
\end{equation}

We now analyze the fourth term of the RHS of $(\ref{qqq})$. For each $k\geq0$, since $\phi_{k}$ and $\beta_{k}$ are bounded, by $(\ref{be26})$, we have
\begin{equation}\label{10000}
	\left\|P_{n+1}^{-1}\right\|=\left\|\sum_{k=1}^{n}\beta_{k}\phi_{k}\phi_{k}^{\top}+P_{0}^{-1}\right\|=O(n).
\end{equation}
From the property of the projected operator and $(\ref{10000})$, we have
\begin{equation}\label{5.377}
	\begin{aligned}
		&\|P_{k+1}^{-1}(s_{k}-\Pi_{P_{k+1}^{-1}}\{s_{k}\})\|\leq \|P_{k+1}^{-\frac{1}{2}}\|\cdot\|P_{k+1}^{-\frac{1}{2}}(s_{k}-\hat{\theta}_{k})\|\\
		=&\|P_{k+1}^{-\frac{1}{2}}\|\cdot\|P_{k+1}^{-\frac{1}{2}}P_{k+1}\phi_{k}(\psi_{k}+w_{k+1})\|\\
		= &O\left(\sqrt{k}+\sqrt{k}\left|w_{k+1}\right|\right),
	\end{aligned}	
\end{equation}
where we have used the fact that $\phi_{k}$ and $\psi_{k}$ are bounded and $\|P_{k+1}^{\frac{1}{2}}\|\leq \| P_{0}^{\frac{1}{2}}\|$. Moreover, since $\theta$ is in an interior point of $\Theta$, there exists a ball $B(\theta, \epsilon)\subset \Theta$. 
Let $\overline{D}=\frac{\epsilon\underline{g}}{2M\overline{g}}$, where $\underline{g}, \overline{g}, M$ are defined in $(\ref{underg})$ and $(\ref{mm})$, respectively. We then have
\begin{equation}\label{5.38}
	\begin{aligned}
		I\left(s_{k}\notin \Theta\right)
		&\leq I\left(\|\tilde{\theta}_{k}\|>\frac{\epsilon}{2}\right)+I\left(\|P_{k+1}\phi_{k}(w_{k+1}+\psi_{k})\|> \frac{\epsilon}{2}\right)\\
		&\leq I\left(\|\tilde{\theta}_{k}\|>\frac{\epsilon}{2}\right)+I\left(|w_{k+1}+\psi_{k}|> \lambda_{\min}\{P_{k+1}^{-1}\}\overline{D}\right).
	\end{aligned}
\end{equation}
 For simplicity, let 
\begin{equation}\nonumber
\begin{aligned}
Z_{1}=I\left(\|\tilde{\theta}_{k}\|>\frac{\epsilon}{2}\right),\;\;
Z_{2}=I\left(|w_{k+1}+\psi_{k}|> \lambda_{\min}\{P_{k+1}^{-1}\}\overline{D}\right).
\end{aligned}
\end{equation}
Thus, by (\ref{a444}), (\ref{5.377}) and $(\ref{5.38})$, we have 
\begin{equation}\label{5.36}
\setlength\abovedisplayskip{6pt}
\setlength\belowdisplayskip{6pt}
	\begin{aligned}
		&\left\|\mathbb{E}\left[\Delta_{n+1}^{-\frac{1}{2}}A_{4}A_{4}^{\top}\Delta_{n+1}^{-\frac{1}{2}}\right]\right\|\\
		=&O\left(\frac{1}{n^{\delta}}\cdot \mathbb{E}\left[\left(\sum_{k=0}^{n}\left(\sqrt{k}+\sqrt{k}|w_{k+1}|\right)Z_{1}\right)^{2}\right]\right)\\
		&+O\left(\frac{1}{n^{\delta}}\cdot \mathbb{E}\left[\left(\sum_{k=0}^{n}\left(\sqrt{k}+\sqrt{k}|w_{k+1}|\right)Z_{2}\right)^{2}\right]\right).
	\end{aligned}
\end{equation} 

For the first term of the RHS of $(\ref{5.36})$, let $f_{k}=\sqrt{k}I\left(\|\tilde{\theta}_{k}\|>\frac{\epsilon}{2}\right)$, by Theorem \ref{thm1}, we have
\begin{equation}\nonumber
\begin{aligned}
\mathbb{E}\left[\sum_{k=0}^{n}f_{k}^{2}\right]
=&O\left(\sum_{k=0}^{n}kP\left\{\|\tilde{\theta}_{k}\|>\frac{\epsilon}{2}\right\}\right)
		=O\left(\sum_{k=0}^{n}k\mathbb{E}\left[\|\tilde{\theta}_{k}\|^{10}\right]\right)\\
		=&O\left(\sum_{k=0}^{n}\frac{k\log^{5} k}{k^{5\delta}}\right)=O\left(n^{-\frac{1}{2}}\log^{5} n\right),\;\;\text{as} \;\;n\rightarrow \infty.
\end{aligned}
\end{equation}
Thus by Lemma \ref{lee1} in Appendix \ref{BB}, we have as $n\rightarrow \infty,$
\begin{equation}\label{11144}
\begin{aligned}
\left\|\Delta_{n+1}^{-\frac{1}{2}}\right\|^{2}\cdot\mathbb{E}\left[\left(\sum_{k=0}^{n}\left(\sqrt{k}+\sqrt{k}|w_{k+1}|\right)Z_{1}\right)^{2}\right]
=o\left(1 \right).
\end{aligned}
\end{equation}

For the second term of the RHS of $(\ref{5.36})$, from the fact that
\begin{equation}\nonumber
	\lambda_{\min}\{\Delta_{k+1}\}+\lambda_{\min}\{P_{k+1}^{-1}-\Delta_{k+1}\}\leq \lambda_{\min}\{P_{k+1}^{-1}\},
\end{equation}
we have 
\begin{equation}\label{5.45}
\begin{aligned}
	&I\left(|w_{k+1}+\psi_{k}|>\lambda_{\min}\{P_{k+1}^{-1}\}\overline{D}\right)\\
	\leq& I\left(|w_{k+1}+\psi_{k}|>\frac{\lambda_{\min}\{\Delta_{k+1}\}\overline{D}}{3}\right)+I\left(|\lambda_{\min}\{P_{k+1}^{-1}-\Delta_{k+1}^{-1}\}|>\frac{\lambda_{\min}\{\Delta_{k+1}\}}{3}\right).
	\end{aligned}
\end{equation}
For simplicity, let 
\begin{equation}
\begin{aligned}
Z_{3}&=I\left(|w_{k+1}+\psi_{k}|>\frac{\lambda_{\min}\{\Delta_{k+1}\}\overline{D}}{3}\right),\\
Z_{4}&=I\left(|\lambda_{\min}\{P_{k+1}^{-1}-\Delta_{k+1}^{-1}\}|>\frac{\lambda_{\min}\{\Delta_{k+1}\}}{3}\right).
\end{aligned}
\end{equation}
Notice that $\sup\limits_{i\geq 0}\mathbb{E}\left[w_{k+1}^{8}\mid \mathcal{F}_{k}\right]<\infty$ and $\sup\limits_{k\geq 0}|\psi_{k}|<\infty$, by Lemma \ref{lee1}, we have
\begin{equation}\label{55544}
	\begin{aligned}
		&\mathbb{E}\left[\left(\sum_{k=0}^{n}\sqrt{k}|w_{k+1}|Z_{3}\right)^{2}\right]\\
		\leq &\mathbb{E}\left[\left(\sum_{k=0}^{n}\frac{\sqrt{k}}{k^{3\delta}}w_{k+1}^{4}\right)^{2}\right]=n\mathbb{E}\left[\sum_{k=0}^{n}\frac{k}{k^{6\delta}}w_{k+1}^{8}\right]
		=O\left(1\right),
	\end{aligned}
\end{equation}
Besides, notice that by Lemma \ref{lem3.1} and the definition of $\beta_{k}$ in $(\ref{be24})$, we have
\begin{equation}\label{444}
\left\|\beta_{k}-G'_{k}(\phi_{k}^{\top}\theta)\right\|=O\left(\|\phi_{k}^{\top}\tilde{\theta}_{k}\|+\|\phi_{k}^{\top}\tilde{\bar{\theta}}_{k}\|\right),
\end{equation}
then by $(\ref{P00})$, $(\ref{2.9})$ and $(\ref{444})$, we have
\begin{equation}\nonumber
	\begin{aligned}
		\|\Lambda_{n+1}^{-1}-P_{n+1}^{-1}\|
		= &O\left(\sum_{k=0}^{n}\|G_{k}'(\phi_{k}^{\top}\theta)-\beta_{k}\|\right)\\
		= &O\left(\sum_{k=0}^{n}\|\phi_{k}^{\top}\tilde{\theta}_{k}\|+\|\phi_{k}^{\top}\tilde{\bar{\theta}}_{k}\|\right).\\
	\end{aligned}
\end{equation}
Thus, for $\eta=r_{\delta}$, from $(\ref{a20})$ and $(\ref{5.26})$, we have
\begin{equation}\label{5.51}
	\begin{aligned}
		\mathbb{E}\|\Lambda_{n+1}^{-1}-P_{n+1}^{-1}\|^{2\eta}=&O\left(n^{\eta}\log^{\eta}n\right).
	\end{aligned}
\end{equation}
Hence, by $(\ref{5.51})$ and the condition $(\ref{288})$, we have
\begin{equation}\label{58}
	\begin{aligned}
		&\mathbb{E}\|P_{n+1}^{-1}-\Delta_{n+1}\|^{2\eta}\\
		=&O\left(\mathbb{E}\|P_{n+1}^{-1}-\Lambda_{n+1}^{-1}\|^{2\eta}\right)+O\left(\mathbb{E}\|\Lambda_{n+1}^{-1}-\Delta_{n+1}\|^{2\eta}\right)\\
		=&O(n^{\eta}\log^{\eta}n).
	\end{aligned}
\end{equation}
Furthermore, notice that
\begin{equation}\nonumber
	\begin{aligned}
		&\mathbb{E}\left[\sum_{k=0}^{n}\frac{k\|P_{k+1}^{-1}-\Delta_{k+1}\|^{2\eta}}{\left(\lambda_{\min}\{\Delta_{k+1}\}\right)^{2\eta}}\right]\\
		=&O\left(\mathbb{E}\left[\sum_{k=0}^{n}\frac{k\cdot (k\log k)^{\frac{4-2\delta}{2\delta-1}}}{k^{\frac{8\delta-4\delta^{2}}{2\delta-1}}}\right]\right)
		=O(1), n\rightarrow \infty,
	\end{aligned}
\end{equation}
and
\begin{equation}\label{1144}
	\begin{aligned}
		&\mathbb{E}\left[\left(\sum_{k=0}^{n}\frac{\sqrt{k}\|P_{k+1}^{-1}-\Delta_{k+1}\|^{\eta}}{\left(\lambda_{\min}\{\Delta_{k+1}\}\right)^{\eta}}\right)^{2}\right]\\
		=&\mathbb{E}\left[\left(\sum_{k=0}^{n}\frac{k^{1+\frac{\eta}{2}}}{\left(\lambda_{\min}\{\Delta_{k+1}\}\right)^{\eta}}\right)\left(\sum_{k=0}^{n}\frac{\|P_{k+1}^{-1}-\Delta_{k+1}\|^{2\eta}}{\left(\lambda_{\min}\{\Delta_{k+1}\}\right)^{\eta}k^{\frac{\eta}{2}}}\right)\right]\\
		=&o(n^{\delta}), \;\;\;\;\; n\rightarrow \infty.\\
	\end{aligned}
\end{equation}
 Thus, from Lemma \ref{lee1} in Appendix \ref{BB}, we have 
\begin{equation}\label{5.57}
	\begin{aligned}
		&\mathbb{E}\left[\left(\sum_{k=0}^{n}\left(\sqrt{k}+\sqrt{k}|w_{k+1}|\right)Z_{4}\right)^{2}\right]\\
		\leq &\mathbb{E}\left[\left(\sum_{k=0}^{n}\frac{\sqrt{k}|w_{k+1}|\|P_{k+1}^{-1}-\Delta_{k+1}\|^{\eta}}{\left(\lambda_{\min}\{\Delta_{k+1}\}\right)^{\eta}}\right)^{2}\right]=o(n^{\delta}), n\rightarrow \infty.
\end{aligned}
\end{equation}
Hence, by $(\ref{5.45})$, $(\ref{55544})$, and $(\ref{5.57})$, we have
\begin{equation}\label{4411}
	\frac{1}{n^{\delta}}\cdot \mathbb{E}\left[\left(\sum_{k=0}^{n}\left(\sqrt{k}+\sqrt{k}|w_{k+1}|\right)Z_{2}\right)^{2}\right]=o\left(1\right).
\end{equation}
Finally, by $(\ref{5.36})$, $(\ref{11144})$, and $(\ref{4411})$, we obtain  
\begin{equation}\label{12343}
	\lim_{n\rightarrow \infty}\left\|\mathbb{E}\left[\Delta_{n+1}^{-\frac{1}{2}}A_{4}A_{4}^{\top}\Delta_{n+1}^{-\frac{1}{2}}\right]\right\|=0. 
\end{equation}
Therefore, by $(\ref{a1})$, $(\ref{a2})$, $(\ref{1000})$, and $(\ref{12343})$, we arrive at $(\ref{qp})$. 

Let $\psi_{n+1}=(\Delta_{n+1}^{\frac{1}{2}}-\Delta_{n+1}^{-\frac{1}{2}}P_{n+1}^{-1})\tilde{\theta}_{n+1},$ by $(\ref{58})$ and Theorem \ref{thm1}, we have
\begin{equation}\label{5.65}
\setlength\abovedisplayskip{6pt}
\setlength\belowdisplayskip{6pt}
	\begin{aligned}
		&\left\|\mathbb{E}\left[\psi_{n+1}\psi_{n+1}^{\top}\right]\right\|\\
		=& O\left(\left\|\Delta_{n+1}^{-\frac{1}{2}}\right\|^{2}\mathbb{E}\left[\|\Delta_{n+1}-P_{n+1}^{-1}\|^{2}\|\tilde{\theta}_{n+1}\|^{2}\right]\right)\\
		=&O\left(\frac{1}{n^{\delta}}\|\Delta_{n+1}-P_{n+1}^{-1}\|_{4}^{2}\cdot\|\tilde{\theta}_{n+1}\|_{4}^{2}\right)\\
		=&O\left(\frac{n\log n}{n^{\delta}} \sqrt{\frac{\log^{2}n}{n^{2\delta}}}\right)=o(1),\;\;n\rightarrow \infty.
	\end{aligned}	
\end{equation}
From $(\ref{qp})$ and $(\ref{5.65})$, we can obtain
\begin{equation}\label{5.61}
\setlength\abovedisplayskip{6pt}
\setlength\belowdisplayskip{6pt}
	\begin{aligned}
		\lim_{n\rightarrow \infty}\mathbb{E}\left[\Delta_{n+1}^{\frac{1}{2}}\tilde{\theta}_{n+1}\tilde{\theta}_{n+1}^{\top}\Delta_{n+1}^{\frac{1}{2}}\right]=I_{m}.
	\end{aligned}
\end{equation}

For the asymptomatic property of $\Delta_{n}^{\frac{1}{2}}\mathbb{E}\left[\tilde{\theta}_{n}\right]$. From $(\ref{888})$ we have $\mathbb{E}\left[\Delta_{n}^{-\frac{1}{2}}A_{1}\right]=0.$ Thus,
by $(\ref{a2})$, $(\ref{1000})$ and $(\ref{12343})$, we have
\begin{equation}\label{1312}
	\lim_{n\rightarrow \infty}\mathbb{E}\left[\Delta_{n}^{-\frac{1}{2}}P_{n}^{-1}\tilde{\theta}_{n}\right]=\mathbb{E}\left[\Delta_{n}^{-\frac{1}{2}}\left(A_{2}+A_{3}+A_{4}\right)\right]=0.
\end{equation}
Thus, by $(\ref{1312})$ and the similar analysis as in $(\ref{5.65})$, we have
\begin{equation}\label{0917}
	\lim_{n\rightarrow \infty}\Delta_{n}^{\frac{1}{2}}\mathbb{E}\left[\tilde{\theta}_{n}\right]=0.
\end{equation}
Finally, $(\ref{qq})$ is obtained by $(\ref{5.61})$ and $(\ref{0917})$.

\section{Numerical simulation}\label{sec5}
The goal of this section is to provide a simulation example to demonstrate the theoretical results obtained in this paper. For that purpose, we will compare the convergence speeds and estimation error variance of our current asymptotically efficient algorithm (Algorithm \ref{alg2}) with two different algorithms:  the SN algorithm $(\ref{sn})$ and also with the TSQN algorithm in \cite{ZZtsqn}.

Consider the stochastic systems $(\ref{eq1})$-$(\ref{eq2})$ with the thresholds $L_{k}=l_{k}=0$, $U_{k}=u_{k}=8$ for each $k\geq 0$; the parameter $\theta=\left[-1.2, 0.5, 1, -0.5, 1.5, -1, 1.8, 0.8, -2, 0.4, 1\right]^{\top}$. Besides, the noise sequence $\{v_{k+1}\}$ is i.i.d with normal distribution $N(0, 1)$. The regressor $\phi_{k}=[\varphi_{k}, 5]^{\top}$, and $\{\varphi_{k}, k\geq 0\}\in \mathbb{R}^{10}$ are generated by the following stochastic dynamical system:
\begin{equation}\nonumber
		\phi_{k+1}=A^{\top}\phi_{k}+u_{k+1},\;\;k=0,1,\cdots
\end{equation}
where $\phi_{0}=0$, the state matrix $A=\diag\left[0.18, 0.4, 0.63, 0.82, 0.5, 0.47, 0.41, 0.08, 0.56, 0.45\right]^{\top}$;  the input $u_{k}=(u_{k}^{(1)}, u_{k}^{(2)}, \cdots, u_{k}^{(10)})$, $u_{k}^{(i)}\sim U[-1,1], 1\leq i\leq 10, k\geq 0$. Since the regressions $\{\phi_{k}\}$ are generated by a dynamical system, it does not satisfy the independence or periodicity conditions assumed in previous literature, and it can be verified that Assumption $\ref{assum5}$ is satisfied.
To estimate $\theta$, let the initial value $\hat{\theta}_{0}=\bar{\theta}_{0}=[0, 0,\cdots, 0]^{\top}$, and the convex compact parameter set be $\Theta=\{x\in \mathbb{R}^{10}: |x^{(i)}|\leq 2 \},$ on which the estimates are projected.

We will compare the performance of algorithms based on the mean square error $M_{k}$ and the covariance $R_{k}$ of the estimation error, which are defined as follows:
\begin{equation}
\begin{aligned}
M_{k}&=\mathbb{E}\left[\left\|\theta-\hat{\theta}_{k}\right\|^{2}\right],\\
R_{k}&=\mathbb{E}\left[\left(\hat{\theta}_{k}-\mathbb{E}\left[\hat{\theta}_{k}\right]\right)\left(\hat{\theta}_{k}-\mathbb{E}\left[\hat{\theta}_{k}\right]\right)^{\top}\right].
\end{aligned}
\end{equation}
Here, we repeat the simulation for $m = 500$ times with a maximum number of iterations $n = 5000.$  Then we can get $m$ sequences $\left\{\|\theta-\hat{\theta}_{k}^{j}\|^2, 1\leq j \leq m, 1\leq k \leq n\right\}$, where the superscript $j$
denotes the $j$th simulation result. We then use
$\frac{1}{m}\sum\limits_{j=1}^{m}\left\|\theta-\hat{\theta}_{k}^{j}\right\|^2$ and  $\frac{1}{m}\sum\limits_{j=1}^{m}\left\|\hat{\theta}_{k}^{j}-\frac{1}{m}\sum\limits_{j=1}^{m}\hat{\theta}_{k}^{j}\right\|^2$ to approximate $M_{k}$ and the trace of  $R_{k}$, respectively. 

By $(\ref{15})$ in Theorem $\ref{thm1}$, the MSE of Algorithm $\ref{alg2}$ will converge to $0$, which is verified by the trajectory of $M_{k}$ in Figure \ref{fig1}. Moreover, it can be shown that the current asymptotically efficient Algorithm \ref{alg2} outperforms the other two algorithms in terms of the convergence speed of the MSE. 
\begin{figure}[!htbp]
	\centering
\includegraphics[width=0.7\linewidth]{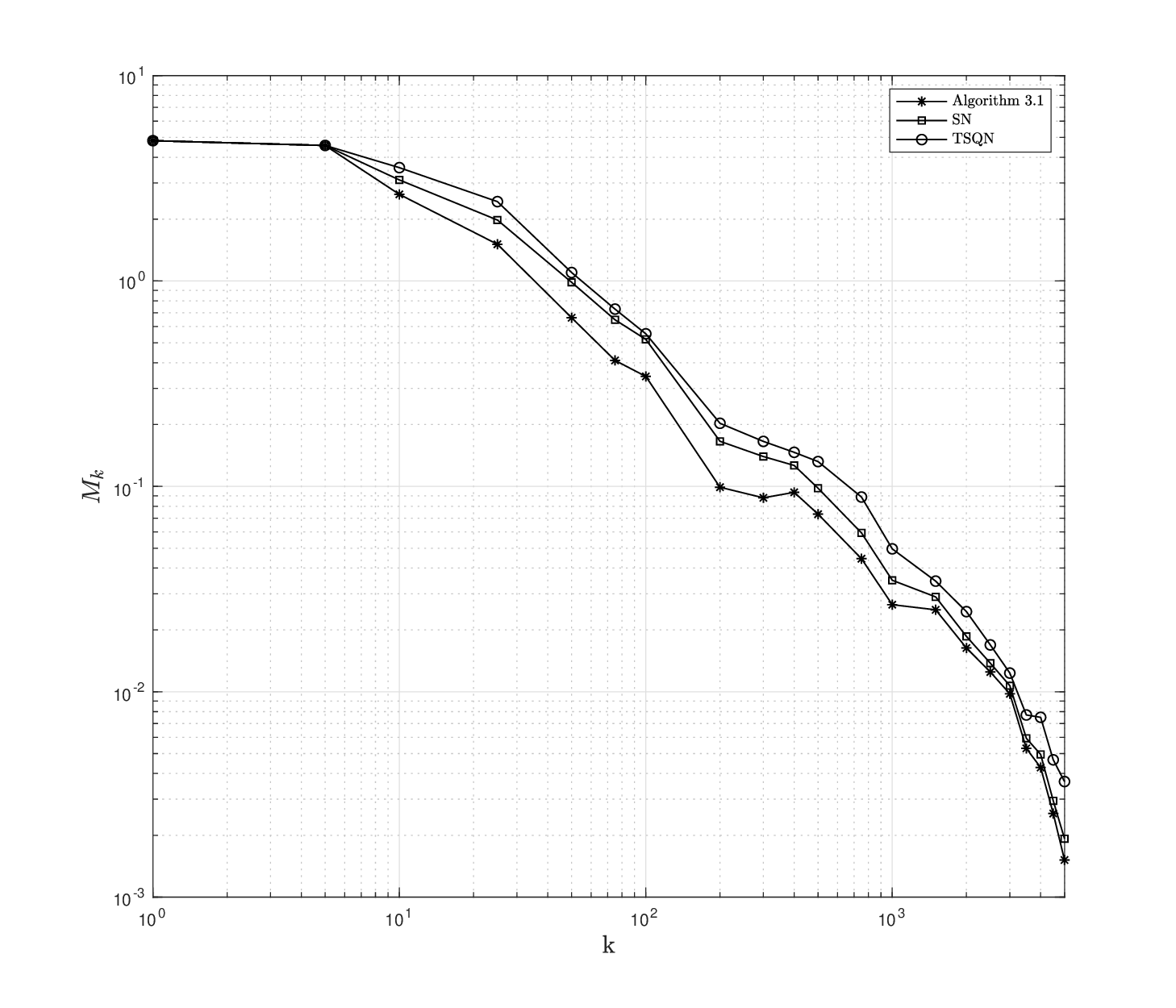}
		\caption{Mean square error}
		\label{fig1}
		\end{figure}

Moreover, it can be easily verified that $ tr(\Delta_{k}^{-1}(\theta))\sim\frac{1}{k}$ as $k\rightarrow \infty$.
Figure \ref{fig2} shows the trajectory of the $k\cdot tr(R_{k})$, demonstrating that the trajectory of the $k\cdot tr(R_{k})$ of the current asymptotically efficient algorithm can converge to the CR bound, i.e., $k\cdot tr(\Delta_{k}^{-1}(\theta))$, achieving comparable convergence performance to the SN algorithm. Besides, the covariance of the TSQN algorithm is larger than that of the current algorithm. This is because the TSQN algorithm is based on classical least squares, where the adaptation gain matrix does not equal the Fisher information matrix in the current nonlinear observation case.
\begin{figure}[!htbp]
		\centering
		\includegraphics[width=0.7\linewidth]{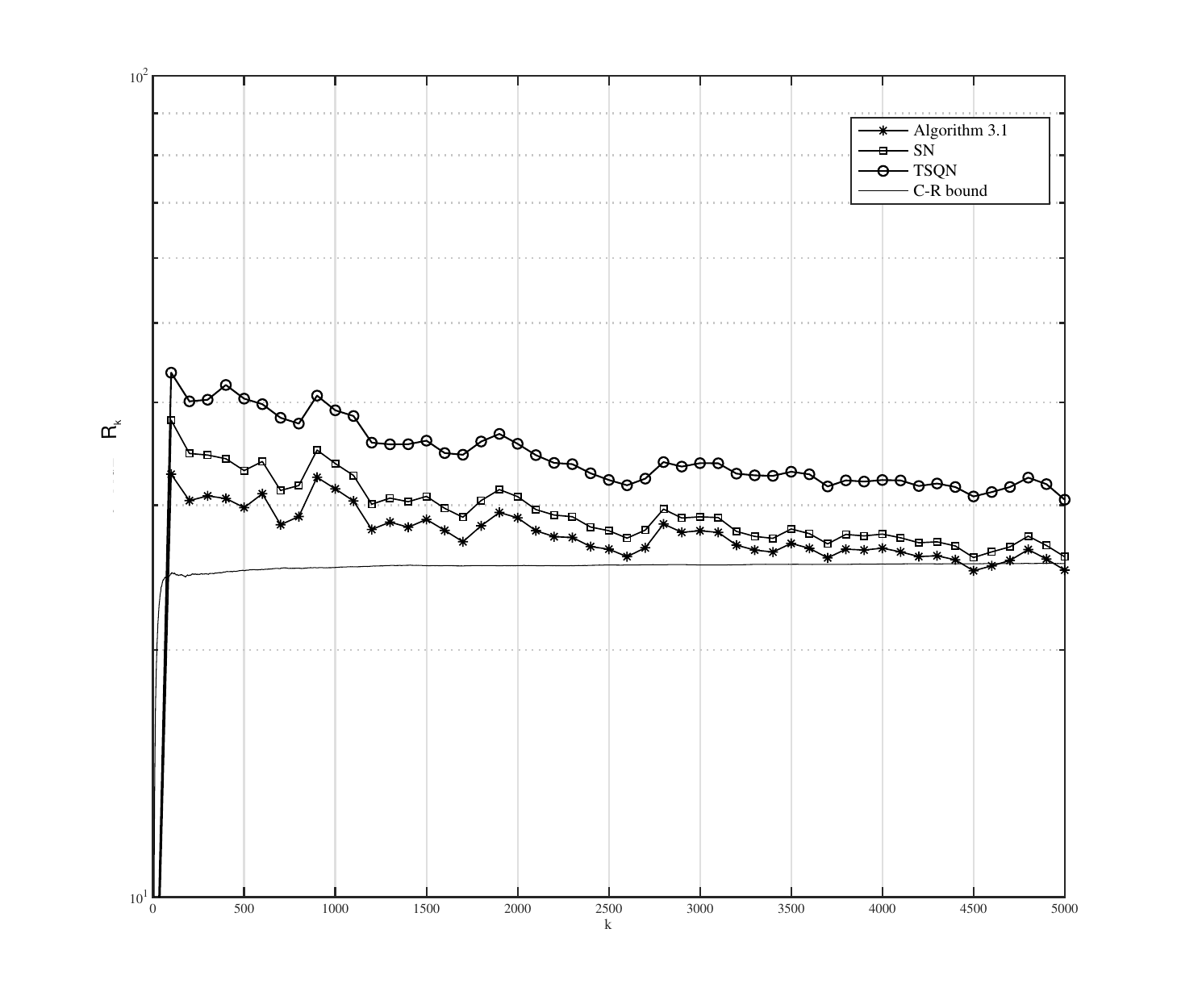}
		\caption{Covariance and CR lower bound}
		\label{fig2}
		\end{figure}

\section{Concluding remarks}\label{sec6}
In this paper, we have investigated the efficient estimation problems for stochastic regression models with saturated output observations. We have proposed a new two-step adaptive algorithm to estimate the unknown parameters under non-i.i.d data conditions. It is shown that the estimates are strongly consistent and asymptotically normal, even under general non-PE excitation conditions. It is also shown that the  MSE of the estimates can asymptotically approach the celebrated C-R bound, indicating that the performance of the proposed algorithm is the best possible that one can expect in general.  We remark that none of the results presented in this paper requires strong assumptions on regressors that are commonly used in previous literature, such as the independence or periodicity conditions, and thus does not exclude their applications to stochastic feedback systems. For future investigation, there are still several interesting problems that need to be solved, for example,  how to establish global convergence results with non-i.i.d data for more complicated stochastic nonlinear regression models including multi-layer neural networks, and how to combine adaptive learning with feedback control for stochastic nonlinear dynamical systems, etc.	

\begin{appendix}

\section{Lemmas and proofs}\label{BB} %
\begin{lemma}\label{lem5} (\cite{ce2001}). The projection operator given by Definition $\ref{def2}$  satisfies
	\begin{equation}\label{17}
		\|\Pi_{A}\{x\}-\Pi_{A}\{y\}\|_{A} \leq \|x-y\|_{A}\quad \forall x, y\in \mathbb{R}^{m}
	\end{equation}
\end{lemma}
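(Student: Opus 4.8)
The plan is to derive this from the classical obtuse–angle (variational) characterization of the metric projection onto a closed convex set, applied to $\mathbb{R}^{m}$ endowed with the inner product $\langle u,v\rangle_{Q}:=u^{\top}Qv$. Since $Q>0$ this is a genuine inner product whose induced norm is exactly $\|\cdot\|_{Q}$, so $(\mathbb{R}^{m},\langle\cdot,\cdot\rangle_{Q})$ is a finite–dimensional Hilbert space and the usual Hilbert–space projection machinery applies verbatim (the immaterial superscript $k$ plays no role).

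First I would record that the feasible set $K:=\{y\in\mathbb{R}^{m}:\|y\|\le 2D\}$ appearing in Definition~\ref{def2} is nonempty, closed and convex, and that $y\mapsto\|x-y\|_{Q}^{2}$ is continuous and strictly convex; hence the minimizer in Definition~\ref{def2} exists and is unique, so $\Pi_{Q}$ is a well–defined single–valued map. Next I would establish the first–order optimality inequality: for every $x\in\mathbb{R}^{m}$ and every $z\in K$,
\begin{equation*}
	\langle x-\Pi_{Q}(x),\,z-\Pi_{Q}(x)\rangle_{Q}\le 0 .
\end{equation*}
This follows because for $t\in[0,1]$ the point $\Pi_{Q}(x)+t\bigl(z-\Pi_{Q}(x)\bigr)$ lies in $K$ by convexity, so the scalar map $t\mapsto\|x-\Pi_{Q}(x)-t(z-\Pi_{Q}(x))\|_{Q}^{2}$ attains its minimum over $[0,1]$ at $t=0$; evaluating its right derivative at $t=0$ gives the displayed inequality.

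Then I would apply this inequality twice: with $p:=\Pi_{Q}(x)$ and the choice $z=q:=\Pi_{Q}(y)$, and with $q=\Pi_{Q}(y)$ and the choice $z=p$. Adding the two resulting inequalities and rearranging yields
\begin{equation*}
	\|p-q\|_{Q}^{2}\le\langle x-y,\,p-q\rangle_{Q},
\end{equation*}
and the Cauchy–Schwarz inequality for $\langle\cdot,\cdot\rangle_{Q}$ then gives $\|p-q\|_{Q}^{2}\le\|x-y\|_{Q}\,\|p-q\|_{Q}$, which is the assertion (the case $p=q$ being trivial).

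The statement is classical and is only invoked, not developed, here (see \cite{ce2001}); accordingly there is no genuine obstacle. The only points requiring a line of justification are the well–definedness of $\Pi_{Q}$ — existence and uniqueness, from compactness and continuity together with strict convexity of $\|\cdot\|_{Q}^{2}$ — and the differentiation step producing the variational inequality, so the ``hard part'' amounts to routine bookkeeping.
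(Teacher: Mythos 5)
Your proof is correct: the paper itself offers no proof of this lemma, citing it directly from \cite{ce2001}, and your argument is the standard one — the variational inequality $\langle x-\Pi_{Q}(x),\,z-\Pi_{Q}(x)\rangle_{Q}\le 0$ applied twice, summed, and combined with Cauchy--Schwarz in the inner product induced by $Q>0$. All the supporting steps (well-definedness of the projection onto the closed convex ball $\{\|y\|\le 2D\}$, the right-derivative computation, and the final rearrangement to $\|p-q\|_{Q}^{2}\le\langle x-y,\,p-q\rangle_{Q}$) check out.
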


\begin{lemma}\label{lem3} (\cite{lai:1982}). Let \;$X_{1}, X_{2},\cdots$ be a sequence of vectors in $	\mathbb{R}^{m} (m\geq 1)$ and let $A_{n} = A_{0}+\sum_{i=1}^{n}X_{i}X_{i}^{\top}$. Let $|A_{n}|$ denote the determinant of $A_{n}$. Assume that $A_{0}$ is nonsingular, then as $n\rightarrow \infty$
	\begin{equation}\label{20}
		\sum_{k=1}^{n}\frac{X_{k}^{\top}A_{k-1}^{-1}X_{k}}{1+X_{k}^{\top}A_{k-1}^{-1}X_{k}} = O(\log(|A_{n}|)).	
	\end{equation}
\end{lemma}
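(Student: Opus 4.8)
The plan is to exploit the rank-one update structure $A_k = A_{k-1} + X_k X_k^\top$ together with the matrix determinant lemma, reducing the sum to a telescoping quantity that is controlled by an elementary logarithmic inequality. First I would observe that since $A_0$ is nonsingular and each $X_iX_i^\top\succeq 0$, every $A_k$ is positive definite, so $A_{k-1}^{-1}$ exists and the scalar $a_k:=X_k^\top A_{k-1}^{-1}X_k\ge 0$ is well defined; in particular each summand $a_k/(1+a_k)$ lies in $[0,1)$ and the sum is over a genuine sequence of nonnegative terms.

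The key algebraic identity is the matrix determinant lemma applied to the rank-one perturbation:
\begin{equation}
|A_k|=|A_{k-1}+X_kX_k^\top|=|A_{k-1}|\,\bigl(1+X_k^\top A_{k-1}^{-1}X_k\bigr)=|A_{k-1}|\,(1+a_k).
\end{equation}
Dividing gives $|A_k|/|A_{k-1}|=1+a_k$, and hence the summand can be rewritten purely in terms of consecutive determinants:
\begin{equation}
\frac{a_k}{1+a_k}=1-\frac{1}{1+a_k}=1-\frac{|A_{k-1}|}{|A_k|}.
\end{equation}

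Next I would invoke the elementary inequality $\log t\le t-1$ for $t>0$ (equivalently $1-s\le -\log s$ for $s>0$), applied with $s=|A_{k-1}|/|A_k|$; this ratio lies in $(0,1]$ because $A_k\succeq A_{k-1}$ forces $|A_k|\ge |A_{k-1}|>0$. This yields the bound
\begin{equation}
\frac{a_k}{1+a_k}=1-\frac{|A_{k-1}|}{|A_k|}\le -\log\frac{|A_{k-1}|}{|A_k|}=\log|A_k|-\log|A_{k-1}|.
\end{equation}
Summing from $k=1$ to $n$, the right-hand side telescopes, so
\begin{equation}
\sum_{k=1}^{n}\frac{X_k^\top A_{k-1}^{-1}X_k}{1+X_k^\top A_{k-1}^{-1}X_k}\le \log|A_n|-\log|A_0|=O(\log|A_n|),
\end{equation}
which is exactly the claimed estimate.

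There is no serious analytic obstacle here; the argument is short and self-contained once the two ingredients are in place. The only points requiring care are bookkeeping rather than difficulty: ensuring positive definiteness of $A_{k-1}$ so that $a_k\ge 0$ and the determinant ratios are strictly positive, and applying the logarithmic inequality in the correct direction (upper bound on the summand). The constant $-\log|A_0|$ is absorbed into the $O(\cdot)$, and since $|A_n|\to\infty$ is expected in applications the bound is nontrivial. I would present the matrix determinant identity and the telescoping step as the heart of the proof.
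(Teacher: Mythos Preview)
Your proof is correct and is the standard argument for this classical result. The paper does not supply its own proof of this lemma; it is quoted as a known fact from \cite{lai:1982} and used as a tool in the analysis, so there is nothing further to compare.
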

\begin{lemma}\label{lem1} (\cite{y2003}).
	If $f=\{f_{n}, n\geq 1\}$ is an $\mathcal{L}_{1}$ martingale and $p\in (1,\infty)$, there exists 
	\begin{equation}\nonumber
		\mathbb{E}\left[f_{n}\right]^{p}\leq 18p^{\frac{3p}{2}}(p-1)^{\frac{p}{2}}\mathbb{E}\left[ \sum_{j=1}^{n}(f_{j}-f_{j-1})^{2}\right]^{\frac{p}{2}}
	\end{equation}
\end{lemma}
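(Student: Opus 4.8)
The statement is Burkholder's square-function inequality, so the plan is to prove $\mathbb{E}|f_n|^p \le C_p\,\mathbb{E}\big[(\sum_{j=1}^n d_j^2)^{p/2}\big]$ with differences $d_j = f_j - f_{j-1}$ (set $f_0 := 0$) and an explicit constant $C_p$ of the type appearing in the statement. I write $S_n = (\sum_{j=1}^n d_j^2)^{1/2}$ for the square function and $f_n^* = \max_{k\le n}|f_k|$ for the maximal function. First I would reduce to the case $\mathbb{E}|f_n|^p < \infty$ by a stopping/truncation argument: replace $f$ by the stopped martingale at $\tau_N = \inf\{k : |f_k| > N\}$, prove the bound uniformly in $N$, and let $N \to \infty$ by monotone convergence. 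This guarantees the $L_p$ norms below are finite and may be cancelled. The base case $p = 2$ is then an exact identity, since orthogonality of martingale differences gives $\mathbb{E}|f_n|^2 = \sum_{j=1}^n \mathbb{E}[d_j^2] = \mathbb{E}[S_n^2]$.

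For $p \ge 2$ I would argue directly via a second-order expansion. Telescoping $|f_n|^p = \sum_j(|f_j|^p - |f_{j-1}|^p)$ and Taylor-expanding $x\mapsto|x|^p$ yields, for each $j$, the pointwise bound $|f_j|^p - |f_{j-1}|^p \le p\,|f_{j-1}|^{p-1}\mathrm{sgn}(f_{j-1})\,d_j + \tfrac{p(p-1)}{2}(f_n^*)^{p-2} d_j^2$, where $p \ge 2$ is used to dominate the intermediate point in the remainder by $f_n^*$. Summing and taking expectations kills the linear term, since its coefficient is $\mathcal{F}_{j-1}$-measurable and $\mathbb{E}[d_j \mid \mathcal{F}_{j-1}] = 0$, leaving
\[
\mathbb{E}|f_n|^p \le \tfrac{p(p-1)}{2}\,\mathbb{E}\big[(f_n^*)^{p-2} S_n^2\big].
\]
Hölder with exponents $p/(p-2)$ and $p/2$ splits this as $\|f_n^*\|_p^{p-2}\,\|S_n\|_p^{2}$, and Doob's maximal inequality $\|f_n^*\|_p \le \tfrac{p}{p-1}\|f_n\|_p$ replaces $f_n^*$ by $f_n$. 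Writing $A = \|f_n\|_p$ and $B = \|S_n\|_p$, this gives $A^p \le \tfrac{p(p-1)}{2}\big(\tfrac{p}{p-1}\big)^{p-2} A^{p-2} B^2$; cancelling $A^{p-2}$ and raising to the power $p/2$ produces the bound with an explicit constant assembled from the Taylor coefficient $\tfrac{p(p-1)}{2}$ and the Doob factor $\big(\tfrac{p}{p-1}\big)^{p-2}$. Matching the precise numerical form $18\,p^{3p/2}(p-1)^{p/2}$ is then routine bookkeeping on these two factors.

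For $1 < p < 2$ the expansion fails because $|x|^{p-2}$ is singular at the origin, so I would pass to the conjugate exponent $q = p/(p-1) > 2$ and argue by duality. For any martingale $g$ with differences $e_j$, orthogonality gives $\mathbb{E}[f_n g_n] = \sum_j \mathbb{E}[d_j e_j]$, whence $|\mathbb{E}[f_n g_n]| \le \mathbb{E}[S_n(f)\,S_n(g)] \le \|S_n(f)\|_p\,\|S_n(g)\|_q$ by Cauchy--Schwarz and Hölder; combining with $\|f_n\|_p = \sup\{|\mathbb{E}[f_n g_n]| : \|g_n\|_q \le 1\}$ and the already-proved case $q \ge 2$ (namely $\|S_n(g)\|_q \le C_q\|g_n\|_q$) transfers the estimate to $1 < p < 2$. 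I expect the main obstacle to be exactly this small-$p$ regime together with the bookkeeping of a single universal constant: the convexity route is clean only for $p \ge 2$, so one must verify that the duality constant $C_{p/(p-1)}$ recombines with the $p\ge2$ estimate into one expression of the stated form valid on all of $(1,\infty)$, while the stopping-time reduction is what licenses the cancellation of $\|f_n\|_p$ in the $p\ge2$ step.
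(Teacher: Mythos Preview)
The paper does not prove this lemma; it is quoted from an external reference with no argument supplied, so there is no in-paper proof to compare against. That said, your outline is a standard route to Burkholder's square-function inequality, and the $p\ge 2$ case via the second-order Taylor expansion of $|x|^p$, cancellation of the linear term by the martingale property, H\"older, and Doob's maximal inequality is correct in structure (modulo the constant, which you defer).

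There is, however, a genuine gap in your $1<p<2$ step. After writing $|\mathbb{E}[f_n g_n]|\le \|S_n(f)\|_p\,\|S_n(g)\|_q$ for $q=p/(p-1)>2$, you invoke ``the already-proved case $q\ge 2$, namely $\|S_n(g)\|_q\le C_q\|g_n\|_q$''. But that is the \emph{reverse} Burkholder inequality; what your $p\ge 2$ argument established is the forward direction $\|g_n\|_q\le C_q\|S_n(g)\|_q$, not the bound on $S_n(g)$ in terms of $g_n$. The reverse direction for $q\ge 2$ needs its own proof---for instance by expanding $S_n^2-S_{n-1}^2=d_n^2=f_n^2-f_{n-1}^2-2f_{n-1}d_n$ and running an analogous convexity estimate on $S_n^q$, or by a good-$\lambda$ inequality between $f_n^*$ and $S_n$. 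Without that ingredient the duality does not close and the range $1<p<2$ is not covered. A secondary remark: the constant $18\,p^{3p/2}(p-1)^{p/2}$ as printed tends to $0$ as $p\to 1^+$, which is incompatible with any valid Burkholder constant on $(1,2)$; the exponent on $(p-1)$ almost certainly carries a sign typo in the paper's statement, so do not expend effort trying to reproduce that exact numerical form.
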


\begin{lemma}\label{lem2} (\cite{chen:1991}). Let $\left\{w_{n}, \mathcal{F}_{n}\right\}$ be a martingale difference sequence and $\left\{f_{n}, \mathcal{F}_{n}\right\}$ an adapted sequence.
		If $\sup_{n} 	\mathbb{E}[|w_{n+1}|^{\alpha}\mid \mathcal{F}_{n}] < \infty, a.s.$, for some $\alpha \in (0, 2]$, then as $n\rightarrow \infty$:
		\begin{equation}\label{19}
			\sum_{i=0}^{n}f_{i}w_{i+1} = O(s_{n}(\alpha)\log^{\frac{1}{\alpha}+\eta}(s_{n}^{\alpha}(\alpha)+e))\;a.s., \forall \eta >0,
		\end{equation}
		where $s_{n}(\alpha)=\left(\sum\limits_{i=0}^{n}|f_{i}|^{\alpha}\right)^{\frac{1}{\alpha}}$.
	\end{lemma}

\begin{lemma}\label{lee1}
If $\left\{f_{n}, \mathcal{F}_{n}\right\}$ is an adapted sequence and $\sup\limits_{n\geq 0}\mathbb{E}[|w_{n+1}|^{2r}\mid \mathcal{F}_{n}] < \sigma_{r} $ for some $r\geq 1$, where $\sigma_{r}$ is a constant, then as $n\rightarrow \infty$,
		\begin{equation}\label{5663}
			\mathbb{E}\left[\left(\sum_{k=0}^{n}f_{k}|w_{k+1}|^{r}\right)^{2}\right]=O\left(\mathbb{E}\left[s_{n}^{2}(2)+s_{n}^{2}(1)\right]\right),
		\end{equation}
		where $s_{n}(\alpha)=\left(\sum\limits_{k=0}^{n}|f_{k}|^{\alpha}\right)^{\frac{1}{\alpha}}$.
\end{lemma}
{\bf Proof of Lemma \ref{lee1}}: 
Notice that
\begin{equation}\label{463}
	\begin{aligned}
		\mathbb{E}\left[\left(\sum_{k=0}^{n}f_{k}|w_{k+1}|^{r}\right)^{2}\right]
		=&O\left(\mathbb{E}\left[\left(\sum_{k=0}^{n}f_{k}\left(|w_{k+1}|^{r}-\mathbb{E}\left[|w_{k+1}|^{r}\mid \mathcal{F}_{k}\right]\right)\right)^{2}\right]\right)\\
		&+O\left(\mathbb{E}\left[\left(\sum_{k=0}^{n}f_{k}\mathbb{E}\left[|w_{k+1}|^{r}\mid \mathcal{F}_{k}\right]\right)^{2}\right]\right).
	\end{aligned}
\end{equation}
By Burkholder equality (see, Lemma \ref{lem1}) and Minkowski equality, we obtain
\begin{equation}
	\begin{aligned}
	\mathbb{E}\left[\left(\sum_{k=0}^{n}f_{k}\left(|w_{k+1}|^{r}-\mathbb{E}\left[|w_{k+1}|^{r}\mid \mathcal{F}_{k}\right]\right)\right)^{2}\right]
	=O\left(\mathbb{E}\left[\sum_{k=0}^{n}f_{k}^{2}\right]\right),\;\; n\rightarrow \infty,
	\end{aligned}
\end{equation}
and
\begin{equation}
	\mathbb{E}\left[\left(\sum_{k=0}^{n}f_{k}\mathbb{E}\left[|w_{k+1}|^{r}\mid \mathcal{F}_{k}\right]\right)^{2}\right]=O\left(\mathbb{E}\left[\left(\sum_{k=0}^{n}\left|f_{k}\right|\right)^{2}\right]\right),\;\; n\rightarrow \infty.
\end{equation}
Thus, Lemma \ref{lee1} is obtained.

{\bf Proof of Lemma $\ref{lem14}$.}
Following the analysis ideas of the classical least-squares for linear stochastic regression models (see e.g., \cite{moore:1978, lai:1982, guo1995}), we first investigate the properties of the stochastic Lyapunov function, given by: 
\begin{equation}\label{vvvv}
V_{n}=\tilde{\theta}_{n+1}^{\top}P_{n+1}^{-1}\tilde{\theta}_{n+1}.
\end{equation}
	Notice that by the definition of the function $G_{k}(\cdot, \cdot)$ in $(\ref{3.6})$, we have $$G_{k}(\phi_{k}^{\top}\bar{\theta}_{k}, \phi_{k}^{\top}\bar{\theta}_{k})=0.$$ Besides, by $(\ref{6.23})$ and the definition of $\beta_{k}$ in $(\ref{be26})$-$(\ref{be24})$,  we know that
	\begin{equation}\nonumber
		\begin{aligned}
			\bar{\psi}_{k}=&G_{k}(\phi_{k}^{\top}\bar{\theta}_{k}, \phi_{k}^{\top}\theta)-G_{k}(\phi_{k}^{\top}\bar{\theta}_{k},\phi_{k}^{\top}\bar{\theta}_{k})\\
			=&G_{k}(\phi_{k}^{\top}\bar{\theta}_{k},\phi_{k}^{\top}\theta)-G_{k}(\phi_{k}^{\top}\bar{\theta}_{k},\phi_{k}^{\top}\hat{\theta}_{k})+G_{k}(\phi_{k}^{\top}\bar{\theta}_{k}, \phi_{k}^{\top}\hat{\theta}_{k})-G_{k}(\phi_{k}^{\top}\bar{\theta}_{k}, \phi_{k}^{\top}\bar{\theta}_{k})\\
			=&\psi_{k}+\beta_{k}\phi_{k}^{\top}(\hat{\theta}_{k}-\bar{\theta}_{k})
			=\psi_{k}+\beta_{k}\phi_{k}^{\top}(\theta-\bar{\theta}_{k}-\tilde{\theta}_{k}).
		\end{aligned}
	\end{equation}
	Hence,
	\begin{equation}\label{barpsi}
		\psi_{k}-\beta_{k}\phi_{k}^{\top}(\theta-\hat{\theta}_{k})=\bar{\psi}_{k}-\beta_{k}\phi_{k}^{\top}(\theta-\bar{\theta}_{k}).
	\end{equation}
	Moreover, by  $(\ref{be26})$,  we can obtain that
	\begin{equation}\label{P-1-1}
		P_{k+1}^{-1}=P_{k}^{-1}+\beta_{k}\phi_{k}\phi_{k}^{\top}.
	\end{equation}
	Hence, multiplying $a_{k}P_{k}\phi_{k}$ from the left hand side of $(\ref{P-1-1})$ and noticing the definition of $a_{k}$, we know that
		\begin{eqnarray}\label{aalpha}
			a_{k}P_{k+1}^{-1}P_{k}\phi_{k}
			=a_{k}\phi_{k}(1+\beta_{k}\phi_{k}^{\top}P_{k}\phi_{k})=\phi_{k}.
		\end{eqnarray}
	From $(\ref{be2})$, Lemma \ref{lem5}, $(\ref{barpsi})$ and $(\ref{aalpha})$, we have
	\begin{equation}\label{61}
		\begin{aligned}
			V_{k+1}\leq&[\tilde{\theta}_{k}-P_{k+1}\phi_{k}(\psi_{k}+w_{k+1})]^{\top}P_{k+1}^{-1}
			[\tilde{\theta}_{k}-P_{k+1}\phi_{k}(\psi_{k}+w_{k+1})]\\
			=&V_{k}+\beta_{k}(\phi_{k}^{\top}\tilde{\theta}_{k})^{2}-2\phi_{k}^{\top}\tilde{\theta}_{k}\psi_{k}
			+\phi_{k}^{\top}P_{k+1}\phi_{k}\psi_{k}^{2}\\
			&-2\phi_{k}^{\top}\tilde{\theta}_{k}w_{k+1}
			+2\phi_{k}^{\top}P_{k+1}\phi_{k}\psi_{k}w_{k+1}\\
			&+\phi_{k}^{\top}P_{k+1}\phi_{k}w_{k+1}^{2}\\
			=&V_{k}-\beta_{k}^{-1}\psi_{k}^{2}+\beta_{k}^{-1}(\psi_{k}-\beta_{k}\phi_{k}^{\top}\tilde{\theta}_{k})^{2}
			+a_{k}\phi_{k}^{\top}P_{k}\phi_{k}\psi_{k}^{2}\\&+2\beta_{k}^{-1}(\psi_{k}-\beta_{k}\phi_{k}^{\top}\tilde{\theta}_{k})w_{k+1}
			-2\beta_{k}^{-1}\psi_{k}w_{k+1}
			+2a_{k}\phi_{k}^{\top}P_{k}\phi_{k}\psi_{k}w_{k+1}\\ &+a_{k}\phi_{k}^{\top}P_{k}\phi_{k}w_{k+1}^{2}\\
			=& V_{k}-a_{k}\beta_{k}^{-1}\psi_{k}^{2}+\beta_{k}^{-1}(\bar{\psi}_{k}-\beta_{k}\phi_{k}^{\top}\tilde{\bar{\theta}}_{k})^{2}-2a_{k}\beta_{k}^{-1}\psi_{k}w_{k+1}\\
			 &+2\beta_{k}^{-1}(\bar{\psi}_{k}-\beta_{k}\phi_{k}^{\top}\tilde{\bar{\theta}}_{k})w_{k+1}+a_{k}\beta_{k}\phi_{k}^{\top}P_{k}\phi_{k}w_{k+1}^{2},\\
		\end{aligned}
	\end{equation}
	where $w_{k+1}$ is defined in $(\ref{6.245})$. Summing up both sides of $(\ref{61})$ from $0$ to $n$, we have
	\begin{equation}\label{s1u}
		\begin{aligned}
			V_{n+1}&\leq V_{0}-\sum_{k=0}^{n}a_{k}\beta_{k}^{-1}\psi_{k}^{2}+\sum_{k=0}^{n}\beta_{k}^{-1}(\bar{\psi}_{k}-\beta_{k}\phi_{k}^{\top}\tilde{\bar{\theta}}_{k})^{2}-\sum_{k=0}^{n}2a_{k}\beta_{k}^{-1}\psi_{k}w_{k+1}\\	
			&+\sum_{k=0}^{n}2\beta_{k}^{-1}(\bar{\psi}_{k}-\beta_{k}\phi_{k}^{\top}\tilde{\bar{\theta}}_{k})w_{k+1}
			+\sum_{k=0}^{n}a_{k}\beta_{k}\phi_{k}^{\top}P_{k}\phi_{k}\left[w_{k+1}^{2}-\mathbb{E}\left[w_{k+1}^{2}\mid \mathcal{F}_{k}\right]\right]\\&+\sum_{k=0}^{n}a_{k}\beta_{k}\phi_{k}^{\top}P_{k}\phi_{k}\mathbb{E}\left[w_{k+1}^{2}\mid \mathcal{F}_{k}\right].
		\end{aligned}
	\end{equation}
	
	We now analyze the right-hand side of $(\ref{s1u})$ term by term. Firstly, for the last term of $(\ref{s1u})$, let $X_{k}=\beta_{k}^{\frac{1}{2}}\phi_{k}$ in Lemma \ref{lem3}, and by Lemma $\ref{lem51}$, we get 
	\begin{equation}\label{43333}
		\sum_{k=0}^{n}a_{k}\beta_{k}\phi_{k}^{\top}P_{k}\phi_{k}\mathbb{E}\left[w_{k+1}^{2}\mid \mathcal{F}_{k}\right]=O\left(\log n\right),\;a.s.
	\end{equation}
	For other noise terms of $(\ref{s1u})$, by Lemma \ref{lem51} and using the martingale estimation theorem (see, Lemma \ref{lem2} in Appendix \ref{BB}), we have
	\begin{equation}\label{433333}
		\begin{aligned}
			\sum_{k=0}^{n}a_{k}\beta_{k}^{-1}\psi_{k}w_{k+1}&=o\left(\sum_{k=0}^{n}a_{k}\psi_{k}^{2}\right),\;\;a.s.,\\ 
			\sum_{k=0}^{n}\beta_{k}^{-1}(\bar{\psi}_{k}-\beta_{k}\phi_{k}^{\top}\tilde{\bar{\theta}}_{k})w_{k+1}&=o\left(\sum_{k=0}^{n}\left(\bar{\psi}_{k}-\beta_{k}\phi_{k}^{\top}\tilde{\bar{\theta}}_{k}\right)^{2}\right),\;a.s.,\;\;\;\;\;\;\;\;\\ 
		\sum_{k=0}^{n}a_{k}\beta_{k}\phi_{k}^{\top}P_{k}\phi_{k}\left[w_{k+1}^{2}-\mathbb{E}_{k}\left[w_{k+1}^{2}\right]\right]&=o\left(\sum_{k=0}^{n}\left(a_{k}\beta_{k}\phi_{k}^{\top}P_{k}\phi_{k}\right)^{2}\right) 
			=o\left(\log n\right), a.s.,
		\end{aligned}
	\end{equation}
	where we have used $(\ref{43333})$ and the fact that $$a_{k}\beta_{k}\phi_{k}^{\top}P_{k}\phi_{k}\leq 1.$$
	
	For the third term of $(\ref{s1u})$, consider the stochastic Lyapunov function $$\bar{V}_{n+1}=\tilde{\bar{\theta}}_{n+1}^{\top}\bar{P}_{n+1}^{-1}\tilde{\bar{\theta}}_{n+1},$$ where $\tilde{\bar{\theta}}_{n+1}$ is defined by $\theta-\bar{\theta}_{n+1}.$ From a similar analysis as in $(\ref{61})$, we have 
	\begin{equation}\label{suu}
		\begin{aligned}
			\bar{V}_{n+1}\leq & \bar{V}_{0}-\sum_{k=0}^{n}\left(2\bar{\beta}_{k}\tilde{\bar{\theta}}_{k}^{\top}\phi_{k}\bar{\psi}_{k}-\bar{\beta}_{k}^{2}(\tilde{\bar{\theta}}_{k}^{\top}\phi_{k})^{2}-\bar{a}_{k}\bar{\beta}_{k}^{2}\phi_{k}^{\top}\bar{P}_{k}\phi_{k}\bar{\psi}_{k}^{2}\right)\\
			&+\sum_{k=0}^{n}\bar{a}_{k}\bar{\beta}_{k}^{2}\phi_{k}^{\top}\bar{P}_{k}\phi_{k}\mathbb{E}[w_{k+1}^{2}\mid \mathcal{F}_{k}]
			-2\sum_{k=0}^{n}\left(\phi_{k}^{\top}\tilde{\bar{\theta}}_{k}-\bar{a}_{k}\bar{\psi}_{k}\phi_{k}^{\top}\bar{P}_{k}\phi_{k}\right)w_{k+1}\\
			&+\sum_{k=0}^{n}\bar{a}_{k}\phi_{k}^{\top}\bar{P}_{k}\phi_{k}\left(w_{k+1}^{2}-\mathbb{E}[w_{k+1}^{2}\mid \mathcal{F}_{k}]\right).
		\end{aligned}
	\end{equation} 
	By the definition of $\bar{\beta}_{k}$ in $(\ref{be1})$ and $\bar{\psi}_{k}$ in $(\ref{6.23})$, we have 
	\begin{equation}\nonumber
		\bar{\psi}_{k}^{2}\geq\underline{g}_{k}^{2}\left(\phi_{k}^{\top}\tilde{\bar{\theta}}_{k}\right)^{2}\geq \bar{\beta}_{k}^{2}\left(\phi_{k}^{\top}\tilde{\bar{\theta}}_{k}\right)^{2},
	\end{equation}
	and 
		\begin{eqnarray}\label{com}
			\sum_{k=0}^{n}\left(2\bar{\beta}_{k}\tilde{\bar{\theta}}_{k}^{\top}\phi_{k}\bar{\psi}_{k}-\bar{\beta}_{k}^{2}(\tilde{\bar{\theta}}_{k}^{\top}\phi_{k})^{2}-\bar{a}_{k}\bar{\beta}_{k}^{2}\phi_{k}^{\top}\bar{P}_{k}\phi_{k}\bar{\psi}_{k}^{2}\right)
			\geq  \frac{1}{2}\bar{a}_{k}\bar{\beta}_{k}^{2}\left(\phi_{k}^{\top}\tilde{\bar{\theta}}_{k}\right)^{2}.
		\end{eqnarray}
	By $(\ref{com})$ and following the similar analysis for the noise term of $(\ref{s1u})$ as in $(\ref{43333})$-$(\ref{433333})$, we can obtain
	\begin{equation}\label{4777}
		\bar{V}_{n+1}+\sum_{k=0}^{n}\frac{1}{2}\bar{a}_{k}\bar{\beta}_{k}^{2}\left(\phi_{k}^{\top}\tilde{\bar{\theta}}_{k}\right)^{2}=O\left(\log n\right),\;\;a.s.
	\end{equation}
	By $(\ref{4777})$ and the fact that $\{\bar{a}_{k},k\geq 0\}$ and $\{\bar{\beta}_{k},k\geq 0\}$ have the positive lower bound, $\{\overline{g}_{k},k\geq 0\}$ has the upper bound,  we thus obtain
	\begin{equation}\label{4999}
		\sum_{k=0}^{n}\beta_{k}^{-1}(\bar{\psi}_{k}-\beta_{k}\phi_{k}^{\top}\tilde{\bar{\theta}}_{k})^{2}=O\left(\sum_{k=0}^{n}\overline{g}_{k}^{2}\left(\phi_{k}^{\top}\tilde{\bar{\theta}}_{k}\right)^{2}\right)=O\left(\log n\right),\;\;a.s.
	\end{equation}	
	Finally, by $(\ref{s1u})$-$(\ref{433333})$ and $(\ref{4999})$, we obtain the result of Lemma \ref{lem14}.

We now give the proof of Lemma \ref{lem51}, Lemma \ref{lem3.1}, and Lemma \ref{lem6}.
	
{\bf Proof of Lemma \ref{lem51}}: 
	By $(\ref{344})$ and $(\ref{6.245})$, we have 
	\begin{equation}\label{377}
		\begin{aligned}
			&\mathbb{E}[|w_{k+1}|^{r}\mid \mathcal{F}_{k}]\\\leq&\left[\frac{f(l_{k}-\phi_{k}^{\top}\bar{\theta}_{k})}{F(l_{k}-\phi_{k}^{\top}\bar{\theta}_{k})}\right]^{r}\mathbb{E}_{k}\left[\left|\delta_{k}-F(l_{k}-\phi_{k}^{\top}\theta)\right|^{r}\right]\\
			&+\left[\frac{f(u_{k}-\phi_{k}^{\top}\bar{\theta}_{k})}{1-F(u_{k}-\phi_{k}^{\top}\bar{\theta}_{k})}\right]^{r}\mathbb{E}_{k}\left[\left|\bar{\delta}_{k}-1+F(u_{k}-\phi_{k}^{\top}\theta)\right|^{r}\right]\\
			&+\frac{1}{\sigma^{2}}\mathbb{E}_{k}\left|v_{k+1}+\phi_{k}^{\top}\tilde{\bar{\theta}}_{k}\right|^{r}.
		\end{aligned}
	\end{equation}
	From Assumption $\ref{assum2}$ and $(\ref{mm})$, we have $|\phi_{k}^{\top}\tilde{\bar{\theta}}_{k}|\leq DM$ for every $k\geq 0$. Combine with Assumption $\ref{assum4}$, there exists a positive constant $M_{r,1}$ such that $$\sup\limits_{k\geq 0}\left\{\frac{1}{\sigma^{2}}\mathbb{E}\left[\left|v_{k+1}+\phi_{k}^{\top}\tilde{\bar{\theta}}_{k}\right|^{r}\mid \mathcal{F}_{k}\right]\right\}< M_{r,1}.$$ Moreover, we have
	\begin{equation}\label{nio}
		\begin{aligned}
			&\left[\frac{f(l_{k}-\phi_{k}^{\top}\bar{\theta}_{k})}{F(l_{k}-\phi_{k}^{\top}\bar{\theta}_{k})}\right]^{r}\mathbb{E}\left[\left|\delta_{k}-F(l_{k}-\phi_{k}^{\top}\theta)\right|^{r}\mid \mathcal{F}_{k}\right]\\
			&=\left[\frac{f(l_{k}-\phi_{k}^{\top}\bar{\theta}_{k})}{F(l_{k}-\phi_{k}^{\top}\bar{\theta}_{k})}\right]^{r}\left[\left(1-F(l_{k}-\phi_{k}^{\top}\theta)\right)^{r}F(l_{k}-\phi_{k}^{\top}\theta)\right]\\
			&+\left[\frac{f(l_{k}-\phi_{k}^{\top}\bar{\theta}_{k})}{F(l_{k}-\phi_{k}^{\top}\bar{\theta}_{k})}\right]^{r}\left[\left(F(l_{k}-\phi_{k}^{\top}\theta)\right)^{r}\left(1-F(l_{k}-\phi_{k}^{\top}\theta)\right)\right].
		\end{aligned}
	\end{equation}
	Since $|\phi_{k}^{\top}\theta-\phi_{k}^{\top}\bar{\theta}_{k}|\leq 2DM$, we have
	\begin{equation}
		\begin{aligned}
			&\left|\left[\frac{f(l_{k}-\phi_{k}^{\top}\bar{\theta}_{k})}{F(l_{k}-\phi_{k}^{\top}\bar{\theta}_{k})}\right]^{r}\left[\left(1-F(l_{k}-\phi_{k}^{\top}\theta)\right)^{r}F(l_{k}-\phi_{k}^{\top}\theta)\right]\right|\\
			\leq&\left[\frac{f(l_{k}-\phi_{k}^{\top}\bar{\theta}_{k})}{F(l_{k}-\phi_{k}^{\top}\bar{\theta}_{k})}\right]^{r}F(l_{k}-\phi_{k}^{\top}\bar{\theta}_{k}+2DM).
		\end{aligned}
	\end{equation}
	Now, define the function %
		$g(x)\triangleq \left[\frac{f(x)}{F(x)}\right]^{r}F(x+2DM).$
	Let $x\rightarrow -\infty$, $g(x)$ tends to $0$ by L'H\^opital's Rule. Besides, by Assumption \ref{assum2}, let the constant $C$ denote as follows:
	\begin{equation}\label{sa}
		C=\sup_{k\geq 0}\{|l_{k}|+|u_{k}|\}.
	\end{equation}
	Thus, the function $g(\cdot)$ is bounded on $[-\infty, C+DM]$. Since $l_{k}-\phi_{k}^{\top}\bar{\theta}_{k}\in [-\infty, C+2DM]$ for every $k\geq 0$, we thus obtain that there exists a positive constant $M_{r,2}$ such that
	$$\sup_{k\geq 0}\left\{\frac{f^{r}(l_{k}-\phi_{k}^{\top}\bar{\theta}_{k})}{F^{r}(l_{k}-\phi_{k}^{\top}\bar{\theta}_{k})}\mathbb{E}\left[\left|\delta_{k}-F(l_{k}-\phi_{k}^{\top}\theta)\right|^{r}\mid \mathcal{F}_{k}\right]\right\}< M_{r,2}.$$ 
	Similarly, there will be a constant $M_{r,3}$ such that 
	\begin{equation}
	\begin{aligned}
	&\sup_{k\geq 0}\left\{\left[\frac{f(u_{k}-\phi_{k}^{\top}\bar{\theta}_{k})}{1-F(u_{k}-\phi_{k}^{\top}\bar{\theta}_{k})}\right]^{r}\mathbb{E}_{k}\left|\delta_{k}-1+F(u_{k}-\phi_{k}^{\top}\theta)\right|^{r}\right\}\\
	&< M_{r,3}.
	\end{aligned}
	\end{equation}
	Thus, we obtain $\mathbb{E}\left[|w_{k+1}|^{r}\mid \mathcal{F}_{k}\right]<M_{r,1}+M_{r,2}+M_{r,3},$ which deduce Lemma $\ref{lem51}$. 

{\bf Proof of Lemma  \ref{lem3.1}}:
	By the definition of the function $G_{k}(x,y)$ in $(\ref{3.6})$, we have 
	\begin{equation}\label{g}
		\begin{aligned}
			\frac{\partial G_{k}(y,x)}{\partial x}
			=&\left[\frac{f(l_{k}-y)}{F(l_{k}-y)}-\frac{y-l_{k}}{\sigma^{2}}\right]f(l_{k}-x)\\
			&+\left[\frac{f(u_{k}-y)}{1-F(u_{k}-y)}-\frac{u_{k}-y}{\sigma^{2}}\right]f(u_{k}-x)
			+\frac{1}{\sigma^{2}}[F(u_{k}-x)-F(l_{k}-x)].
		\end{aligned}
	\end{equation}	
	We now analyze the RHS of the equation $(\ref{g})$ case by case. If $l_{k}<-2C-2\Gamma$, by $(\ref{sa})$, we have for every $|x|\leq \Gamma, |y|\leq \Gamma$,
	\begin{equation}
		\frac{1}{\sigma^{2}}[F(u_{k}-x)-F(l_{k}-x)]\geq \frac{1}{\sigma^{2}}[F(-C-\Gamma)-F(-2C-\Gamma)].
	\end{equation}  
	Besides, let $H(x,y)=f(x)f(y)-F(x)f'(y)$, we have
	\begin{equation}\label{3.10}
		\frac{\partial H(x,y)}{\partial x}=f'(x)f(y)-f(x)f'(y)=\frac{y-x}{\sigma^{2}}f(x)f(y).
	\end{equation}
	From $(\ref{3.10})$, we can easily obtain that $\frac{\partial H(x,y)}{\partial x}\geq 0$ when $x\leq y$. Therefore, for every $x\leq y$, since $\lim
	\limits_{x\rightarrow -\infty}H(x,y)=0,$ we have $H(x,y)\geq 0$. Thus, we obtain that
	\begin{equation}\label{3.11}
		\frac{-f(l_{k}-y)}{F(l_{k}-y)}\leq\frac{-f'(l_{k}-y)}{f(l_{k}-y)}=\frac{l_{k}-y}{\sigma^{2}},
	\end{equation}
	where we have used the fact that $f'(x)=-\frac{x}{\sigma^{2}}f(x)$ for the density function of the normal distribution. Similarly, we will have
	\begin{equation}\label{3.12}
		\frac{f(u_{k}-y)}{1-F(u_{k}-y)} \geq \frac{-f'(u_{k}-y)}{f(u_{k}-y)}=\frac{u_{k}-y}{\sigma^{2}}.
	\end{equation}
	Therefore, by $(\ref{g})$, $(\ref{3.11})$ and $(\ref{3.12})$, we have 
	\begin{equation}\label{1234}
		\frac{\partial G_{k}(y,x)}{\partial x}\geq \frac{1}{\sigma^{2}}[F(-C-\Gamma)-F(-2C-\Gamma)]>0.
	\end{equation}
	Similarly, if $u_{k}>2C+2\Gamma$, $(\ref{1234})$ will also be true. 
	
	For the case $-2C-2\Gamma<l_{k}\leq C$ and $-C\leq u_{k}<2C+2\Gamma$, we have $\frac{1}{\sigma^{2}}[F(u_{k}-x)-F(l_{k}-x)]\geq \frac{u_{k}-l_{k}}{\sigma^{2}}f(\xi_{k}-x)>0,$	
	where $\xi_{k}\in[l_{k}, u_{k}]$. Thus, we have
	\begin{equation}\label{a10}
		\begin{aligned}
			\frac{\partial G_{k}(y,x)}{\partial x}\geq  2\left[f\left(2C+3\Gamma\right)\right]^{2}>0.
		\end{aligned}
	\end{equation}
From $(\ref{1234})$ and $(\ref{a10})$, we finally obtain $(\ref{iinf})$.	
	
	For the proof of $(\ref{355})$, we have 
	\begin{equation}\label{1366}
		\begin{aligned}
			\frac{\partial^{2} G_{k}(y,x)}{\partial x^{2}}
			=&\left[\frac{f(l_{k}-y)}{F(l_{k}-y)}-\frac{y-l_{k}}{\sigma^{2}}\right]\frac{x-l_{k}}{\sigma^{2}}f(l_{k}-x)\\
			&+\left[\frac{f(u_{k}-y)}{1-F(u_{k}-y)}-\frac{u_{k}-y}{\sigma^{2}}\right]\frac{x-u_{k}}{\sigma^{2}}f(u_{k}-x)\\
			&+\frac{1}{\sigma^{2}}\left[f(l_{k}-x)-f(u_{k}-x)\right]
		\end{aligned}
	\end{equation}
	Notice that for every $|x|, |y|\leq \Gamma$, the first term of the LHS of $(\ref{1366})$ tends to 0 as $l_{k} \rightarrow -\infty$, and the second term of the LHS of $(\ref{1366})$ tends to 0 as $u_{k} \rightarrow \infty$. Therefore, we can obtain $(\ref{355})$.
	The proof of the equation $(\ref{a43})$ and  $(\ref{a44})$ can be obtained through a similar analysis.

{\bf Proof of Lemma \ref{lem6}}: 
	From $(\ref{suu})$, $(\ref{com})$, and the fact that $\bar{a}_{k}$ and $\bar{\beta}_{k}$ have positive lower bounds, we have 
	\begin{equation}\label{sq}
	\setlength\abovedisplayskip{6pt}
\setlength\belowdisplayskip{6pt}
		\begin{aligned}
			&\mathbb{E}[\bar{V}_{n+1}^{r}]+\mathbb{E}\left[\sum_{k=0}^{n}\left|\phi_{k}^{\top}\tilde{\bar{\theta}}_{k}\right|^{2}\right]^{r}\\
			=&O\left(\mathbb{E}\left[\sum_{k=0}^{n}\left(\bar{\beta}_{k}\phi_{k}^{\top}\tilde{\bar{\theta}}_{k}-\bar{a}_{k}\bar{\beta}_{k}^{2}\bar{\psi}_{k}\phi_{k}^{\top}\bar{P}_{k}\phi_{k}\right)w_{k+1}\right]^{r}\right)\\
			&+O\left(\mathbb{E}\left[\sum_{k=0}^{n}\bar{a}_{k}\bar{\beta}_{k}^{2}\phi_{k}^{\top}\bar{P}_{k}\phi_{k}w_{k+1}^{2}\right]^{r}\right).	
		\end{aligned}
	\end{equation}
	 For each $k\geq 0$ and $t\geq 0$, we denote
	\begin{equation}\label{5.13}
		\setlength\abovedisplayskip{6pt}
\setlength\belowdisplayskip{6pt}
		\left\{
		\begin{array}{rcl}
			W_{k+1}^{(t)}&=&\left(W_{k+1}^{(t-1)}\right)^{2}-\mathbb{E}\left[\left(W_{k+1}^{(t-1)}\right)^{2}\mid \mathcal{F}_{k}\right],\;\;t\geq 1,\\
			W_{k+1}^{(0)}&=&w_{k+1}.
		\end{array}
		\right.
	\end{equation}
	We now first prove the following property by $(\ref{sq})$:
	\begin{equation}\label{a20}
		\setlength\abovedisplayskip{6pt}
\setlength\belowdisplayskip{6pt}
	\mathbb{E}\left[\bar{V}_{n+1}^{r}\right]+\mathbb{E}\left[\sum\limits_{k=0}^{n}\left(\phi_{k}^{\top}\tilde{\bar{\theta}}_{k}\right)^{2}\right]^{r}= O(\log^{r} n).
	\end{equation}

For the case $r=1$, 	since $\sup_{k\geq 0}\mathbb{E}\left[w_{k+1}^{2}\mid \mathcal{F}_{k}\right]$ is bounded, by Lemma \ref{lem3}, we have 
\begin{equation}\label{6.9}
	\setlength\abovedisplayskip{6pt}
\setlength\belowdisplayskip{6pt}
	\begin{aligned}
		\sum_{k=0}^{n}\mathbb{E}\left[\bar{a}_{k}\bar{\beta}_{k}^{2}\phi_{k}^{\top}\bar{P}_{k}\phi_{k}\mathbb{E}\left[w_{k+1}^{2}\mid \mathcal{F}_{k}\right]\right]
		=O\left(\mathbb{E}\left[\log \left|\bar{P}_{n+1}^{-1}\right|\right]\right)=O\left(\log n\right).
		\end{aligned}
	\end{equation}
	By $(\ref{sq})$, $(\ref{6.9})$, and the fact that $\{w_{k},\mathcal{F}_{k}\}$ is a martingale deference sequence, we obtain $(\ref{a20})$ is true for the case $r=1$. If $(\ref{a20})$ holds for each $s\leq r_{0}-1, r_{0}\geq 2$, then by Lemma \ref{lem3} and Lemma \ref{lem1}, we will have
	\begin{equation}\label{5.19}
		\setlength\abovedisplayskip{6pt}
\setlength\belowdisplayskip{6pt}
		\begin{aligned}
			&\mathbb{E}\left[\sum_{k=0}^{n}\bar{a}_{k}\bar{\beta}_{k}^{2}\phi_{k}^{\top}\bar{P}_{k}\phi_{k}w_{k+1}^{2}\right]^{r_{0}}\\
			=&O\left(\mathbb{E}\left[\sum_{k=0}^{n}\bar{a}_{k}\bar{\beta}_{k}^{2}\phi_{k}^{\top}\bar{P}_{k}\phi_{k}\mathbb{E}\left(w_{k+1}^{2}\mid \mathcal{F}_{k}\right)\right]^{r_{0}}\right)+O\left(\mathbb{E}\left[\sum_{k=0}^{n}\bar{a}_{k}\bar{\beta}_{k}^{2}\phi_{k}^{\top}\bar{P}_{k}\phi_{k}\left(W_{k+1}^{(1)}\right)^{2}\right]^{\frac{r_{0}}{2}}\right)\\
			=&O(\log^{r_{0}} n)+O\left(\mathbb{E}\left[\sum_{k=0}^{n}\bar{a}_{k}\bar{\beta}_{k}^{2}\phi_{k}^{\top}\bar{P}_{k}\phi_{k}\left(W_{k+1}^{(1)}\right)^{2}\right]^{\frac{r_{0}}{2}}\right),
		\end{aligned}
	\end{equation} 
	where we have used the fact that $\bar{a}_{k}\bar{\beta}_{k}^{2}\phi_{k}^{\top}\bar{P}_{k}\phi_{k}<1$. Besides, by (\ref{com}), Lemma \ref{lem1}  and given that (\ref{a20}) is valid for $r=\frac{r_{0}}{2}$, we obtain
	\begin{equation}\label{a24}
		\begin{aligned}
			&\mathbb{E}\left[\left(\sum_{k=0}^{n}\left(\bar{\beta}_{k}\phi_{k}^{\top}\tilde{\bar{\theta}}_{k}-\bar{a}_{k}\bar{\beta}_{k}^{2}\bar{\psi}_{k}\phi_{k}^{\top}\bar{P}_{k}\phi_{k}\right)w_{k+1}\right)^{r_{0}}\right]\\
			=&O\left(\mathbb{E}\left[\sum_{k=0}^{n}\left|\phi_{k}^{\top}\tilde{\bar{\theta}}_{k}\right|^{2}\mathbb{E}\left(w_{k+1}^{2}\mid \mathcal{F}_{k}\right)\right]^{\frac{r_{0}}{2}}\right)
			+O\left(\mathbb{E}\left[\sum_{k=0}^{n}\left|\phi_{k}^{\top}\tilde{\bar{\theta}}_{k}\right|^{2}W_{k+1}^{(1)}\right]^{\frac{r_{0}}{2}}\right)\\
			=&O\left(\log^{\frac{r_{0}}{2}} n\right)+O\left(\mathbb{E}\left[\sum_{k=0}^{n}\left|\phi_{k}^{\top}\tilde{\bar{\theta}}_{k}\right|^{2}W_{k+1}^{(1)}\right]^{\frac{r_{0}}{2}}\right).
		\end{aligned}
	\end{equation}
Combine $(\ref{sq})$, $(\ref{5.19})$ and $(\ref{a24})$, we have
	\begin{equation}\label{a21}
	\begin{aligned}
		&\mathbb{E}\left[\bar{V}_{n+1}^{r_{0}}\right]+\mathbb{E}\left[\left(\sum_{k=0}^{n}\left|\phi_{k}^{\top}\tilde{\bar{\theta}}_{k}\right|^{2}\right)^{r_{0}}\right]\\
		=&O(\log^{r_{0}} n)+O\left(\mathbb{E}\left[\sum_{k=0}^{n}\left|\phi_{k}^{\top}\tilde{\bar{\theta}}_{k}\right|^{2}W_{k+1}^{(1)}\right]^{\frac{r_{0}}{2}}\right)\\
		&+O\left(\mathbb{E}\left[\sum_{k=0}^{n}\bar{a}_{k}\bar{\beta}_{k}^{2}\phi_{k}^{\top}\bar{P}_{k}\phi_{k}(W_{k+1}^{(1)})^{2}\right]^{\frac{r_{0}}{2}}\right).
		\end{aligned}
	\end{equation} 	
To analyze the last two noise terms of (\ref{a21}), we can continue the recursive process according to $(\ref{5.19})$-$(\ref{a24})$, which allows us to deduce that (\ref{a20}) holds when $r=r_{0}.$ Thus, by induction, (\ref{a20}) holds.

Furthermore, by $(\ref{s1u})$, we have 
\begin{equation}\label{sqq}
		\begin{aligned}
			&\mathbb{E}[V_{n+1}^{r}]+\mathbb{E}\left[\sum_{k=0}^{n}a_{k}\psi_{k}^{2}\right]^{r}\\
			=&O\left(\mathbb{E}\left[\sum_{k=0}^{n}(\bar{\psi}_{k}-\beta_{k}\phi_{k}^{\top}\tilde{\bar{\theta}}_{k})^{2}\right]^{r}\right)+O\left(\mathbb{E}\left[\sum_{k=0}^{n}2(\bar{\psi}_{k}-\beta_{k}\phi_{k}^{\top}\tilde{\bar{\theta}}_{k})w_{k+1}\right]^{r}\right)\\
		&+O\left(\mathbb{E}\left[\sum_{k=0}^{n}a_{k}\psi_{k}w_{k+1}\right]^{r}\right)
			+O\left(\mathbb{E}\left[\sum_{k=0}^{n}a_{k}\beta_{k}\phi_{k}^{\top}P_{k}\phi_{k}w_{k+1}^{2}\right]^{r}\right).	
		\end{aligned}
	\end{equation}
By $(\ref{a20})$ and the fact that $\left\{\overline{g}_{k}, k\geq 0\right\}$ have upper bound, we have
\begin{equation}
\begin{aligned}
&\mathbb{E}\left[\sum_{k=0}^{n}(\bar{\psi}_{k}-\beta_{k}\phi_{k}^{\top}\tilde{\bar{\theta}}_{k})^{2}\right]^{r}=O\left(\mathbb{E}\left[\sum_{k=0}^{n}\overline{g}_{k}^{2}(\phi_{k}^{\top}\tilde{\bar{\theta}}_{k})^{2}\right]^{r}\right)\\
=&O\left(\mathbb{E}\left[\sum_{k=0}^{n}\left|\phi_{k}^{\top}\tilde{\bar{\theta}}_{k}\right|^{2}\right]^{r}\right)=O\left(\log^{r}n\right).
\end{aligned}
\end{equation}	
The asymptotic analysis of the last three noise terms in (\ref{sqq}) can be conducted similarly to the analysis of the noise terms on the right side of the equation (\ref{sq}). Finally, we can obtain:
\begin{equation}\label{5.26}
		\begin{aligned}
			\mathbb{E}\left[V_{n+1}^{r}\right]+\mathbb{E}\left[\left(\sum_{k=0}^{n}a_{k}\psi_{k}^{2}\right)^{r}\right]=& O(\log^{r} n),
		\end{aligned}
	\end{equation}
	which completes the proof of the lemma.

\end{appendix}	
	
\bibliographystyle{siamplain}
\bibliography{efficiency}

\end{document}